\relax
\documentclass[letterpaper]{article}
\usepackage{aaai22} 
\usepackage{times} 
\usepackage{helvet} 
\usepackage{courier} 
\usepackage[hyphens]{url} 
\usepackage{graphicx} 
\urlstyle{rm} 
\usepackage{graphicx} 
\usepackage{natbib} 
\usepackage{caption} 
\DeclareCaptionStyle{ruled}%
{labelfont=normalfont,labelsep=colon,strut=off}
\frenchspacing 
\setlength{\pdfpagewidth}{8.5in} 
\setlength{\pdfpageheight}{11in} 
\setcounter{secnumdepth}{2}

%
\usepackage{amsfonts}       
\usepackage{nicefrac}       
\DeclareMathAlphabet\mathbfcal{OMS}{cmsy}{b}{n}
\usepackage{amsmath}
\usepackage{amsfonts} 
\usepackage{microtype}      
\usepackage{graphicx}
\usepackage{xcolor}         
\usepackage{amsmath}
\usepackage{amsmath}
\usepackage{float}
\usepackage{graphicx}
\usepackage{mathrsfs}
\usepackage{subfig}
\usepackage{amssymb}
\usepackage{bbm}
\usepackage{amsthm}
\newtheorem{theorem}{Theorem}
\newtheorem{lemma}[theorem]{Lemma}

\newtheorem{remark}{Remark}
\newtheorem{assumption}{$\mathbfcal{A}$}

\title{Gradient Temporal Difference with Momentum: Stability and Convergence}
\author{
Rohan Deb\thanks{Corresponding Author}, Shalabh Bhatnagar\\
}
\affiliations {
Department of Computer Science and Automation\\
Indian Institute of Science, \\
Bangalore\\
rohandeb@iisc.ac.in, shalabh@iisc.ac.in
}
\begin{document}
\maketitle
\begin{abstract}
Gradient temporal difference (Gradient TD) algorithms are a popular class of stochastic approximation (SA) algorithms used for policy evaluation in reinforcement learning. Here, we consider Gradient TD algorithms with an additional heavy ball momentum term and provide choice of step size and momentum parameter that ensures almost sure convergence of these algorithms asymptotically. In doing so, we decompose the heavy ball Gradient TD iterates into three separate iterates with different step sizes. We first analyze these iterates under one-timescale SA setting using results from current literature. However, the one-timescale case is restrictive and a more general analysis can be provided by looking at a three-timescale decomposition of the iterates. In the process we provide the first conditions for stability and convergence of general three-timescale SA. We then prove that the heavy ball Gradient TD algorithm is convergent using our three-timescale SA analysis. Finally, we evaluate these algorithms on standard RL problems and report improvement in performance over the vanilla algorithms.
\end{abstract}
\section{Introduction}
In reinforcement learning (RL), the goal of the learner or the agent is to maximize its long term accumulated reward by interacting with the environment. 
	One important task in most of RL algorithms is that of \textit{policy evaluation}. It predicts the average accumulated reward an agent would receive from a state (called \textit{value function}) if it follows the given policy. 
	In \textit{model-free learning}, the agent does not have access to the underlying dynamics of the environment and has to learn the \textit{value function} from samples of the form (state, action, reward, next-state).
	Two very popular algorithms in the \textit{model-free} setting are \textit{Monte-Carlo} (MC) and \textit{temporal difference} (TD) learning (see \citet{SuttonBarto_book}, \citet{Sutton1988}). 
	It is a well known fact that TD learning diverges in the off-policy setting (see \citet{Baird}). 
	 A class of algorithms called \textit{gradient temporal difference} (Gradient TD) were introduced in \cite{gtd} and \cite{FastGradient} which are convergent even in the off-policy setting. 
	These algorithms fall under a larger class of algorithms called linear stochastic approximation (SA) algorithms. 
	
	A lot of literature is dedicated to studying the asymptotic behaviour of SA algorithms starting from the work of \cite{robbins_munro}. 
	In recent times, the ODE  method to analyze asymptotic behaviour of SA \cite{Ljung, Kushner_book2,Borkar_Book,Borkar_Meyn} has become quite popular in the RL community. 
	The Gradient TD methods were shown to be convergent using the ODE approach. A generic one-timescale (One-TS) SA iterate has the following form:
	\begin{equation}
	    \label{Intro_OTS}
	    x_{n+1} = x_{n} + a(n)\left(h(x_n) + M_{n+1}\right),
	\end{equation}
	where $x \in \mathbb{R}^{d_1}$ are the iterates. The function $h:\mathbb{R}^{d_1}\rightarrow\mathbb{R}^{d_1}$ is assumed to be a Lipschitz continuous function. $M_{n+1}$ is a Martingale difference noise sequence and $a(n)$ is the step-size at time-step $n$. Under some mild assumptions, the iterate  given by \eqref{Intro_OTS} converges (see \citeauthor{Borkar_Book} \citeyear{Borkar_Book}; \citeauthor{Borkar_Meyn} \citeyear{Borkar_Meyn}). When $h$ is a linear map of the form $b-Ax_n$, the matrix $A$ is often called the driving matrix. The three Gradient TD algorithms: GTD \cite{gtd}, GTD2 and TDC \cite{FastGradient} consist two iterates of the following form:
	\begin{equation}
	    \label{Intro_TTS_1}
	    x_{n+1} = x_{n} + a(n)(h(x_n,y_n) + M_{n+1}^{(1)},
	\end{equation}
	\begin{equation}
	    \label{Intro_TTS_2}
	     y_{n+1} =  y_{n} + b(n)(g(x_n,y_n) + M_{n+1}^{(2)}),
	\end{equation}
    where $x \in \mathbb{R}^{d_1}$,  $y \in \mathbb{R}^{d_2}$. See section \ref{secPrelims} for exact form of the iterates. 
    The two iterates still form a One-TS SA scheme if \(\lim_{n\rightarrow\infty}\frac{b(n)}{a(n)} = c\), where $c$ is a constant and a two-timescale (two-TS) scheme if \(\lim_{n\rightarrow\infty}\frac{b(n)}{a(n)} = 0\).
    
	Separately, adding a momentum term to accelerate the convergence of iterates is a popular technique in stochastic gradient descent (SGD). The two most popular schemes are the Polyak's Heavy ball method \cite{polyak_heavy_ball}, and Nesterov's accelerated gradient method \cite{nesterov}. A lot of literature is dedicated to studying momentum with SGD. Some recent works include \cite{ghadimi2014global, Loizou, Gitman, Ma, Assran}. Momentum in the SA setting, which is the focus of the current work, has limited results. Very few works study the effect of momentum in the SA setting. A recent work by \cite{MJ} studies SA with momentum briefly and shows an improvement of mixing rate. However, the setting considered is restricted to linear SA and the driving matrix is assumed to be symmetric. Further, the iterates involve an additional Polyak-Ruppert averaging \cite{polyak_avg}. Here, in contrast, we analyze the asymptotic behaviour of the algorithm and make none of the above assumptions. A somewhat distant paper is by \cite{Devraj} that introduces Matrix momentum in SA and is not equivalent to heavy ball momentum. 
	
	A very recent work by \cite{webpage} studied One-TS SA with heavy ball momentum in the univariate case (i.e., $d = 1$ in iterate \eqref{Intro_OTS}) in the context of web-page crawling. The iterates took the following form:
	\begin{equation}
	    \label{Intro_OTS_mom}
	    x_{n+1} = x_{n} + a(n)\left(h(x_n) + M_{n+1}\right) + \eta_n(x_{n} - x_{n-1}).
	\end{equation}
	The momentum parameter $\eta_n$ was chosen to decompose the iterate into two recursions of the form given by \eqref{Intro_TTS_1} and \eqref{Intro_TTS_2}. 
	We use such a decomposition for Gradient TD methods with momentum. This leads to three separate iterates with three step-sizes. We analyze these three iterates and provide stability (iterates remain bounded throughout) and almost sure (a.s.) convergence guarantees.
	\subsection{Our Contribution}
	\label{subsection_Our_cont}
	\begin{itemize}
	    \item We first consider the One-TS decomposition of Gradient TD with momentum iterates and show that the driving matrix in this case is Hurwitz (all eigen values are negative). Thereafter we use the theory of One-TS SA to show that the iterates are stable and convergent to the same TD solution.
	    \item Next, we consider the Three-TS decomposition. We provide the first stability and convergence conditions for general Three-TS recursions. We then show that the iterates under consideration satisfy these conditions.
	    \item Finally, we evaluate these algorithms for different choice of step-size and momentum parameters on standard RL problems and report an improvement in performance over their vanilla counterparts.
	\end{itemize}
\section{Preliminaries}
\label{secPrelims}

%
In the standard RL setup, an agent interacts with the environment which is a Markov Decision Process (MDP). At each discrete time step $t$, the agent is in state $s_t \in \mathcal{S},$ takes an action $a_t \in \mathcal{A},$ receives a reward $r_{t+1} \equiv r(s_{t},a_t,s_{t+1}) \in \mathbb{R}$ and moves to another state $s_{t+1} \in \mathcal{S}$. Here $\mathcal{S}$ and $\mathcal{A}$ are finite sets of possible states and actions respectively. The transitions are governed by a kernel $\mathbb{P}$. A policy $\pi:\mathcal{S}\times\mathcal{A}\rightarrow[0,1]$ is a mapping that defines the probability of picking an action in a state. We let $P^{\pi}(s'|s)$ be the transition probability matrix induced by $\pi$. Also, $\{d^{\pi}(s)\}_{s \in \mathcal{S}}$ represents the steady-state distribution for the Markov chain induced by $\pi$ and the matrix $D$ is a diagonal matrix of dimension $n \times n$ with the entries $d^{\pi}(s)$ on its diagonals.The state-value function associated with a policy $\pi$ for state $s$ is 
\[V^{\pi}(s) = \mathbb{E}_{\pi}\left[ \sum_{t=0}^{\infty} \gamma^{t} R_{t+1}|s_{0} = s\right],\]
where $\gamma$ $\in [0,1)$ is the discount factor.

In the linear architecture setting, \textit{policy evaluation} deals with estimating $V^{\pi}(s)$ through a linear model $V_{\theta}(s) = \theta^{T}\phi(s)$, where $\phi(s) \equiv \phi_{s}$ is a feature associated with the state $s$ and $\theta$ is the parameter vector. We define the TD-error as \(\delta_{t} = r_{t+1} + \gamma \theta_{t}^{T}\phi_{t+1} - \theta_{t}^{T}\phi_{t}\) and $\Phi$ as an $n\times d$ matrix where the $s^{th}$ row is $\phi(s)^T$. In the i.i.d setting it is assumed that the tuple $(\phi_{t},\phi_{t}'$) (where $\phi_{t+1} \equiv \phi_{t}'$ ) is drawn independently from the stationary distribution of the Markov chain induced by $\pi$. Let \(\bar{A} = \mathbb{E}[\phi_{t}(\gamma\phi_{t}'-\phi_{t})^{T}]\) and $\bar{b} = \mathbb{E}[r_{t+1}\phi_t]$, where the expectations are w.r.t. the stationary distribution of the induced chain. The matrix $\bar{A}$ is negative definite (see \citet{Maei_PhD,TsitsiklisVanRoy}). In the off-policy case, the importance weight is given by $\rho_{t} = \frac{\pi(a_{t}|s_{t})}{\mu(a_{t}|s_{t})}$, where $\pi$ and $\mu$ are the target and behaviour policies respectively. Introduced in \cite{gtd}, Gradient TD are a class of TD algorithms that are convergent even in the off-policy setting. 
Next, we present the iterates associated with the algorithms GTD \cite{gtd}, GTD2, TDC \cite{FastGradient}.
\begin{itemize}
    \item \textbf{GTD}: 
    \begin{gather}
        \label{gtd_1}
        \theta_{t+1} = \theta_{t} + \alpha_{t}(\phi_{t} -\gamma\phi_{t}')\phi_{t}^{T}u_{t},\\
        \label{gtd_2}
        u_{t+1} = u_{t} + \beta_{t}(\delta_{t}\phi_{t} - u_{t}).
    \end{gather}
    \item \textbf{GTD2}: 
    \begin{gather}
        \label{gtd2_1}
        \theta_{t+1} = \theta_{t} + \alpha_{t}(\phi_{t} - \gamma\phi_{t}')\phi_{t}^{T}u_{t},\\
        \label{gtd2_2}
        u_{t+1} = u_{t} + \beta_{t}(\delta_{t} - \phi_{t}^{T}u_{t})\phi_{t}.
    \end{gather}
    \item \textbf{TDC}: 
    \begin{gather}
        \label{tdc_1}
        \theta_{t+1} = \theta_{t} + \alpha_{t}\delta_{t}\phi_{t} - \alpha_{t}\gamma \phi_{t}'(\phi_{t}^{T}u_{t}),\\
        \label{tdc_2}
        u_{t+1} = u_{t} + \beta_{t}(\delta_{t} - \phi_{t}^{T}u_{t})\phi_{t}.
    \end{gather}
\end{itemize}
The objective function for GTD is Norm of Expected Error defined as $NEU(\theta) = \mathbb{E}[\delta\phi]$. The GTD algorithm is derived by expressing the gradient direction as $-\frac{1}{2} \nabla NEU(\theta)$ = $\mathbb{E}\left[ (\phi-\gamma \phi')\phi^T\right] \mathbb{E}[\delta(\theta)\phi]$. Here $\phi' \equiv \phi(s')$. If both the expectations are sampled together, then the term would be biased by their correlation. An estimate of the second expectation is maintained as a long-term quasi-stationary estimate (see \eqref{gtd_1}) and the first expectation is sampled (see \eqref{gtd_2}). For GTD2 and TDC, a similar approach is used on the objective function Mean Square Projected Bellman Error defined as $MSPBE(\theta) = ||V_{\theta} - \Pi T^{\pi} V_{\theta}||_{D}$. Here, $\Pi$ is the projection operator that projects vectors to the subspace $\{\Phi\theta|\theta\in\mathbb{R}^{d}\}$ and $T^{\pi}$ is the Bellman operator defined as $T^{\pi}V = R^{\pi} + \gamma P^{\pi}V$. As originally presented, GTD and GTD2 are one-timescale algorithms ($\frac{\alpha_{t}}{\beta_{t}}$ is constant) while TDC is a two-timescale algorithm ($\frac{\alpha_{t}}{\beta_{t}} \rightarrow 0$). It was shown in all the three cases that $\theta_{n}\rightarrow\theta^{*} = -\bar{A}^{-1}\bar{b}$.
\section{Gradient TD with Momentum}
\label{secGTD}
Although, Gradient TD starts with a gradient descent based approach, it ends up with two-TS SA recursions. Momentum methods are known to accelerate the convergence of SGD iterates. Motivated by this, we examine momentum in the SA setting, and ask if the SA recursions for Gradient TD with momentum even converge to the same TD solution. We probe the heavy ball extension of the three Gradient TD algorithms where, we keep an accumulation of the previous gradient values in $\zeta_{t}$. Then, at time step $t+1$ the new gradient value multiplied by the step size is added to the current accumulation vector $\zeta_{t}$ multiplied by the momentum parameter $\eta_t$ as below:
\[\zeta_{t+1} = \eta_{t}\zeta_{t} + \alpha_{t}(\phi_{t}-\gamma\phi_{t}')\phi_{t}^{T}u_{t}.\]
The parameter $\theta$ is then updated in the negative of the direction $\zeta_{t+1}$, i.e.,
\(\theta_{t+1} = \theta_{t} - \zeta_{t+1}.\)
Since $u_{t+1}$ is computed as a long-term estimate of $\mathbb{E}[\delta(\theta)\phi]$, its update rule remains same.
The momentum parameter $\eta_t$ is usually set to a constant in the stochastic gradient setting. An exception to this can however be found in \cite{Gitman,StochasticHB}, where $\eta_{t} \rightarrow 1$. Here, we consider the latter case.  
Substituting $\zeta_{t+1}$ into the iteration of $\theta_{t+1}$ and noting that $\zeta_{t} = \theta_{t} - \theta_{t-1}$, the iterates for GTD with Momentum (\textbf{GTD-M}) can be written as:
\begin{equation}
\label{gtd_M_1}
    \theta_{t+1} = \theta_{t} + \alpha_{t}(\phi_{t} - \gamma\phi_{t}')\phi_{t}^{T}u_{t} + \eta_{t}(\theta_{t} - \theta_{t-1}),
\end{equation}
\begin{equation}
\label{gtd_M_2}
    u_{t+1} = u_{t} + \beta_{t}(\delta_{t}\phi_{t} - u_{t}).
\end{equation}
Similarly the iterates for \textbf{GTD2-M} are given by:
\setlength{\belowdisplayskip}{0pt}
\begin{equation}
\label{gtd2_M_1}
    \theta_{t+1} = \theta_{t} + \alpha_{t}(\phi_{t} - \gamma\phi_{t}')\phi_{t}^{T}u_{t} + \eta_{t}(\theta_{t} - \theta_{t-1}),
\end{equation}
\begin{equation}
\label{gtd2_M_2}
     u_{t+1} = u_{t} + \beta_{t}(\delta_{t} - \phi_{t}^{T}u_{t})\phi_{t}.
\end{equation}
Finally, the iterates for \textbf{TDC-M} are given by:
\begingroup
\setlength{\belowdisplayskip}{0pt}
\begin{equation}
\begin{split}
\label{tdc_M_1}
    \theta_{t+1} = \theta_{t} + \alpha_{t}(\delta_{t}\phi_{t} - \gamma\phi_{t}'(\phi_{t}^{T}u_{t}))+ \eta_{t}(\theta_{t} - \theta_{t-1}),
\end{split}
\end{equation}
\begin{equation}
\label{tdc_M_2}
     u_{t+1} = u_{t} + \beta_{t}(\delta_{t} - \phi_{t}^{T}u_{t})\phi_{t}.
\end{equation}
We choose the momentum parameter $\eta_{t}$ as in \cite{webpage} as follows: \(\eta_{t} = \frac{\varrho_{t}-w\alpha_{t}}{\varrho_{t-1}}\), where $\{\varrho_{t}\}$ is a positive sequence s.t. $\varrho_t\rightarrow0$ as $t\rightarrow\infty$ and $w \in \mathbb{R}$ is a constant. Note that $\eta_t\rightarrow 1$ as $t\rightarrow\infty$. We later provide conditions on $\varrho_t$ and $w$ to ensure a.s. convergence. As we would see in section \ref{section_CA}, the condition on $w$ in the One-TS setting is restrictive. Specifically, it depends on the norm of the driving matrix $\bar{A}$. This motivates us to look at the Three-TS setting and then the corresponding condition on $w$ is less restrictive. Using the momentum parameter as above,
\begin{equation*}
    \begin{split}
        \theta_{t+1} &= \theta_{t} + \alpha_{t}(\phi_{t} - \gamma\phi_{t}')\phi_{t}^{T}u_{t} +\frac{\varrho_{t} - w\alpha_{t}}{\varrho_{t-1}} (\theta_{t}-\theta_{t-1})
\end{split}
\end{equation*}
Rearranging the terms and dividing by $\rho_{t}$, we get:
\begin{equation*}
\begin{split}
        \frac{\theta_{t+1} - \theta_{t}}{\varrho_{t}} &= \frac{\theta_{t}-\theta_{t-1}}{\varrho_{t-1}}\\ 
        &+ \frac{\alpha_{t}}{\varrho_{t}}\Bigg((\phi_{t}- \gamma\phi_{t}')\phi_{t}^{T}u_{t} - w\left(\frac{\theta_{t} - \theta_{t-1}}{\varrho_{t-1}}\right)\Bigg).
    \end{split}
\end{equation*}
We let 
\[\frac{\theta_{t+1} - \theta_{t}}{\varrho_{t}} = v_{t+1}, \xi_{t} = \frac{\alpha_{t}}{\varrho_{t}} \mbox{ and } \varepsilon_{t} = v_{t+1} - v_{t}.\] 
Then, the GTD-M iterates in \eqref{gtd_M_1} and \eqref{gtd_M_2} can be re-written with the following three iterates:
\begin{gather}
    \label{GTD-M-1}
    v_{t+1} = v_{t} + \xi_{t}\left((\phi_{t} - \gamma\phi_{t}')\phi_{t}^{T}u_{t} - w v_{t}\right)\\
    \label{GTD-M-2}
    u_{t+1} = u_{t} + \beta_{t} (\delta_{t}\phi_{t} - u_{t})\\
    \label{GTD-M-3}
    \theta_{t+1} = \theta_{t} + \varrho_{t}(v_{t} + \varepsilon_{t})
\end{gather}
A similar decomposition can be done for the GTD2-M and TDC-M iterates.

\section{Convergence Analysis}
\label{section_CA}
In this section we analyze the asymptotic behaviour of the GTD-M iterates given by \eqref{GTD-M-1}, \eqref{GTD-M-2} and \eqref{GTD-M-3}. Throughout the section, we consider $v_t, u_t, \theta_t \in \mathbb{R}^{d}$. We first consider the One-TS case when $\beta_t = c_1 \xi_{t}$ and $\varrho_t = c_2 \xi_t$ $\forall t$, for some real constants $c_1, c_2>0$. Subsequently, we consider the Three-TS setting where $\frac{{\beta_t}}{\xi_{t}}\rightarrow0$ and $\frac{{\varrho_t}}{\beta_{t}}\rightarrow0$ as $t\rightarrow\infty$.

\subsection{One-Timescale Setting}
We begin by analyzing GTD-M using a one-timescale SA setting. We let $c_1 = c_2 = 1$ for simplicity. The iterates of GTD-M can then be re-written as:
\begin{equation}
\label{GTD-M-1TS}
    \psi_{t+1} = \psi_{t} + \xi_{t}(G_{t}\psi_{t} + g_{t} + \varepsilon_{t}),
\end{equation}
where, 
\begin{gather*}
\psi_{t} =
\begin{pmatrix}
    v_{t}\\
    u_{t}\\
    \theta_{t}
\end{pmatrix},
g_{t} = 
\begin{pmatrix}
    0\\
    r_{t+1}\phi_{t}\\
    0
\end{pmatrix},
\bar{\varepsilon}_{t} = 
\begin{pmatrix}
    0\\
    0\\
    \varepsilon_{t}
\end{pmatrix},
\end{gather*}
\begin{gather*}
G_{t} = 
\begin{pmatrix}
    -wI & (\phi_{t}-\gamma\phi_{t}')\phi_{t}^{T} & 0\\
    0 & -I & \phi_{t}(\gamma\phi_{t}'-\phi_{t})^{T}\\
    I &0 &0
\end{pmatrix}.
\end{gather*}
Equation \eqref{GTD-M-1TS} can be re-written in the general SA scheme as:
\begin{equation}
\label{GTD-M-1TS-general}
    \psi_{t+1} = \psi_{t} + \xi_{t}(h(\psi_{t}) + M_{t+1} + \bar{\varepsilon}_{t}).
\end{equation}
Here \(h(\psi) = g + G\psi, g = \mathbb{E}[g_{t}], G = \mathbb{E}[G_{t}]\), where the expectations are w.r.t. the stationary distribution of the Markov chain induced by the target policy $\pi$. $M_{t+1} = (G_{t+1}-G)\psi_{t} + (g_{t+1} - g)$. In particular,
\[
G = \begin{pmatrix}
-wI & -\bar{A}^{T} & 0\\
0 & -I & \bar{A} \\
I & 0 & 0
\end{pmatrix},
g = 
\begin{pmatrix}
0\\
\bar{b}\\
0
\end{pmatrix},
\]
where recall that \(\bar{A} = \mathbb{E}[\phi(\gamma\phi' - \phi)^{T}]\) and $\bar{b} = \mathbb{E}[r\phi]$
\begin{lemma}
\label{lemma-1}
Assume, $w(w+1) > ||\bar{A}||^2$. Then, the matrix $G$ is Hurwitz.
\end{lemma}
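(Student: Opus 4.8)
The plan is to reduce the spectral analysis of the $3d\times 3d$ block matrix $G$ to a scalar problem. Let $\lambda\in\mathbb{C}$ be an eigenvalue of $G$ with eigenvector $(x^{T},y^{T},z^{T})^{T}$, where $x,y,z\in\mathbb{C}^{d}$. Reading off the three block rows of $G\zeta=\lambda\zeta$ gives $-wx-\bar{A}^{T}y=\lambda x$, $-y+\bar{A}z=\lambda y$, and $x=\lambda z$. First I would dispose of the degenerate cases. Since $\bar{A}$ is negative definite it is invertible, and substituting $\lambda=0$ (and separately $\lambda=-1$) into the three equations forces $x=y=z=0$; hence neither $0$ nor $-1$ is an eigenvalue, and in particular $G$ is nonsingular.

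For $\lambda\notin\{0,-1\}$ the third and second rows let me solve $z=x/\lambda$ and $y=\bar{A}x/\big(\lambda(\lambda+1)\big)$. Substituting these into the first row and clearing denominators yields $\bar{A}^{T}\bar{A}\,x=-\lambda(\lambda+1)(\lambda+w)\,x$, so $x$ must be an eigenvector of the symmetric positive-definite matrix $\bar{A}^{T}\bar{A}$, say with eigenvalue $\mu>0$ (note $x\neq0$, else the whole eigenvector vanishes). Consequently every eigenvalue $\lambda$ of $G$ is a root of the cubic
\begin{equation}
\lambda^{3}+(w+1)\lambda^{2}+w\lambda+\mu=0,
\end{equation}
for some eigenvalue $\mu$ of $\bar{A}^{T}\bar{A}$. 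Since $\mu$ ranges over the spectrum of $\bar{A}^{T}\bar{A}$, it satisfies $0<\mu\le\|\bar{A}^{T}\bar{A}\|=\|\bar{A}\|^{2}$, where $\|\cdot\|$ denotes the spectral norm. It therefore suffices to show that, for every $\mu\in(0,\|\bar{A}\|^{2}]$, all three roots of this cubic lie in the open left half-plane.

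I would finish with the Routh--Hurwitz criterion for a cubic $\lambda^{3}+a_{2}\lambda^{2}+a_{1}\lambda+a_{0}$, whose roots are all stable if and only if $a_{2}>0$, $a_{0}>0$, and $a_{2}a_{1}>a_{0}$. Here $a_{2}=w+1$, $a_{1}=w$, and $a_{0}=\mu$. In the relevant regime $w>0$ (forced, since $\|\bar{A}\|^{2}>0$ requires $w(w+1)>0$) the condition $a_{2}>0$ is immediate, and $a_{0}=\mu>0$ holds by positive definiteness of $\bar{A}^{T}\bar{A}$. The decisive inequality is $a_{2}a_{1}=w(w+1)>\mu$, which is exactly where the hypothesis enters: the assumption $w(w+1)>\|\bar{A}\|^{2}\ge\mu$ supplies it uniformly in $\mu$. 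This shows every eigenvalue of $G$ has strictly negative real part, i.e.\ $G$ is Hurwitz.

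The main obstacle I anticipate is the block elimination together with careful bookkeeping of the degenerate values $\lambda\in\{0,-1\}$, so as to guarantee that the scalar cubic genuinely captures the \emph{entire} spectrum of $G$ rather than a subset. The one nontrivial analytic input is the bound $\mu\le\|\bar{A}\|^{2}$ linking the eigenvalues of $\bar{A}^{T}\bar{A}$ to the spectral norm; once this is in hand, the Routh--Hurwitz step is routine and also explains why the sharp threshold in the hypothesis is precisely $w(w+1)$.
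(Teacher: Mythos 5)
Your proof is correct and follows essentially the same route as the paper's: both reduce the spectrum of $G$ to the roots of the cubic $\lambda^{3}+(w+1)\lambda^{2}+w\lambda+a_{0}=0$ with $a_{0}\in(0,\|\bar{A}\|^{2}]$ and finish with Routh--Hurwitz via the decisive inequality $w(w+1)>a_{0}$; the only difference is that you obtain the cubic by eliminating $y,z$ from the eigenvector equations (so that $a_{0}=\mu$ is an actual eigenvalue of $\bar{A}^{T}\bar{A}$), whereas the paper manipulates the block characteristic determinant and takes $a_{0}$ to be the Rayleigh quotient $\|\bar{A}x\|^{2}/\|x\|^{2}$ at a null vector $x$, and your explicit disposal of $\lambda\in\{0,-1\}$ tidies up a point the paper glosses over. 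One slip: $w(w+1)>\|\bar{A}\|^{2}>0$ does \emph{not} force $w>0$ --- it also admits $w<-1$, in which case the sum of the roots of the cubic is $-(w+1)>0$ and the conclusion fails --- so positivity of $w$ must be carried as a standing hypothesis rather than derived; the paper makes the same implicit assumption (it asserts $a_{1},a_{2}>0$ without comment and only ever invokes the lemma with $w\geq 1$).
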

\begin{proof}
    Let $\lambda$ be an eigenvalue of $G$. The characteristic equation of the matrix $G$ is given by:
    \begin{gather*}
        \begin{vmatrix}
            -wI-\lambda I & -\bar{A}^{T}& 0\\
             0& -I-\lambda I& \bar{A}\\
             I& 0& -\lambda I
        \end{vmatrix} = 0\\
        \begin{vmatrix}
            wI+\lambda I & \bar{A}^{T}& 0\\
             0& I+\lambda I& -\bar{A}\\
             -I& 0& \lambda I
        \end{vmatrix} = 0
    \end{gather*}
Using the following formula for determinant of block matrices
\begin{gather*}
\begin{vmatrix}
    A_{11} & A_{12} & A_{13}\\
    A_{21} & A_{22} & A_{23}\\
    A_{31} & A_{32} & A_{33}
\end{vmatrix}=\\
\begin{vmatrix}
A_{11}
\end{vmatrix}
\begin{vmatrix}
\begin{pmatrix}
    A_{22} & A_{23}\\
    A_{32} & A_{33}
\end{pmatrix} -
\begin{pmatrix}
    A_{21}\\
    A_{31}
\end{pmatrix}A_{11}^{-1}
\begin{pmatrix}
    A_{12} & A_{13}
\end{pmatrix}
\end{vmatrix}
\end{gather*}
we have,
\begin{gather*}
        \begin{vmatrix}
            wI+\lambda I & \bar{A}^{T}& 0\\
             0& I+\lambda I& -\bar{A}\\
             -I& 0& \lambda I
        \end{vmatrix} = \\
        \begin{vmatrix}
            (w+\lambda)I
        \end{vmatrix}
        \begin{vmatrix}
            \begin{pmatrix}
                I+\lambda I& -\bar{A}\\
                0& \lambda I
            \end{pmatrix}
            -\frac{1}{w+\lambda}
            \begin{pmatrix}
                0\\
                -I
            \end{pmatrix}
            \begin{pmatrix}
                \bar{A}^{T} & 0
            \end{pmatrix}
        \end{vmatrix}\\
        = (w+\lambda)^{d} 
        \begin{vmatrix}
            I+\lambda I &-\bar{A}\\
            \frac{\bar{A}^{T}}{w+\lambda} & \lambda I
        \end{vmatrix}\\
        = (w+\lambda)^{d} 
        \begin{vmatrix}
            (1+\lambda)I
        \end{vmatrix}
        \begin{vmatrix}
            \lambda I + \frac{1}{(1+\lambda)(w+\lambda)}\bar{A}^{T}\bar{A}
        \end{vmatrix}\\
        = \frac{(w+\lambda)^{d}(1+\lambda)^{d}}{(w+\lambda)^{d}(1+\lambda)^{d}}
        \begin{vmatrix}
            \lambda(1+\lambda)(w+\lambda) I + \bar{A}^{T}\bar{A}
        \end{vmatrix}\\
        =\begin{vmatrix}
            \lambda(1+\lambda)(w+\lambda) I + \bar{A}^{T}\bar{A}
        \end{vmatrix}
\end{gather*}
Therefore, from the characteristic equation of $G$, we have that
\[
\begin{vmatrix}
    \lambda(1+\lambda)(w+\lambda) I + \bar{A}^{T}\bar{A}
\end{vmatrix} = 0.
\]
There must exist a non-zero vector $x\in \mathbb{C}^{d}$, such that 
\[
x^{*}(\lambda(1+\lambda)(w+\lambda) I + \bar{A}^{T}\bar{A})x = 0,
\]
where $x^{*}$ is the conjugate transpose of the vector $x$ and $x^{*}x = ||x||^{2}> 0$. The above equation reduces to the following cubic-polynomial equation:
\[\lambda^{3}||x||^{2} + (w+1)\lambda^{2}||x||^{2} + w\lambda||x||^{2} + ||\bar{A}x||^{2}=0,\]
where $||\bar{A}x||^{2} = x^{*}\bar{A}^{T}\bar{A}x$.
Using Routh-Hurwitz criterion, a cubic polynomial
\(a_{3}\lambda^{3} + a_{2}\lambda^{2} + a_{1}\lambda + a_{0}\) has all roots with negative real parts iff $a_{3},a_{2},a_{1},a_{0} > 0$ and $a_{1}a_{2} > a_{0}a_{3}$. In our case, \(a_{3} = ||x||^{2} > 0, a_{2} = (w+1)||x||^{2} > 0, a_{1} = w||x||^{2} > 0 \mbox{ and } a_{0} = ||\bar{A}x||^2>0\). The last inequality follows from the fact that $\bar{A}$ is negative definite and therefore \(x^{*}\bar{A}^{T}\bar{A}x > 0\). Finally, \(a_{1}a_{2} = w(w+1)||x||^{4}, a_{0}a_{3} = ||x||^{2}||\bar{A}x||^{2}\) and \(a_{1}a_{2} > a_{0}a_{3}\) follows from\( \frac{||\bar{A}x||^{2}}{||x||^{2}}<||\bar{A}||^2<w(w+1)\).
Therefore \(Re(\lambda)<0\) and the claim follows.
\end{proof}
Consider the following assumptions:
\begin{assumption}
\label{3A1}
All rewards $r(s,s')$ and features $\phi(s)$ are bounded, i.e., $r(s,s')\leq 1$ and $||\phi(s)||\leq 1$ $\forall s,s' \in \mathcal{S}$. Also, the matrix $\Phi$ has full rank, where $\Phi$ is an $n\times d$ matrix where the s$^{th}$ row is $\phi(s)^T$.
\end{assumption}
\begin{assumption}
\label{3A2}
The step-sizes satisfy \(\xi_{t} = \beta_{t} = \varrho_{t}>0\), \[\sum_{t}\xi_{t} = \infty \sum_{t}\xi_{t}^2 < \infty \mbox{ ,where } \xi_{t} = \frac{\alpha_{t}}{\varrho_{t}}\]
and the momentum parameter satisfies:
 \(\eta_{t} = \frac{\varrho_{t}-w\alpha_{t}}{\varrho_{t-1}}.\)
\end{assumption}
\begin{assumption}
\label{3A3}
The samples ($\phi_{t},\phi_{t}'$) are drawn i.i.d from the stationary distribution of the Markov chain induced by target policy $\pi$.
\end{assumption}
\begin{theorem}
    Assume $\mathbfcal{A}$\textbf{\textup{\ref{3A1}}}, $\mathbfcal{A}$\textbf{\textup{\ref{3A2}}} and $\mathbfcal{A}$\textbf{\textup{\ref{3A3}}} hold and let $w \geq 1$. Then, the GTD-M iterates given by \eqref{gtd_M_1} and \eqref{gtd_M_2} satisfy \(\theta_{n} \rightarrow \theta^{*} = -\bar{A}^{-1}\bar{b}\) a.s. as \(n\rightarrow \infty\). 
\end{theorem}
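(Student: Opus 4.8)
The plan is to establish convergence via the ODE method for one-timescale stochastic approximation, combining the Borkar--Meyn stability criterion with the Hurwitz property from Lemma \ref{lemma-1}. The natural starting point is the recast form \eqref{GTD-M-1TS-general}, $\psi_{t+1} = \psi_t + \xi_t(h(\psi_t) + M_{t+1} + \bar{\varepsilon}_t)$ with $h(\psi) = G\psi + g$. I would proceed in five steps: (i) verify the standard hypotheses of the ODE method, (ii) control the extra perturbation $\bar{\varepsilon}_t$, (iii) invoke the Borkar--Meyn theorem for almost sure boundedness, (iv) conclude convergence to the unique equilibrium of $\dot{\psi} = h(\psi)$, and (v) identify that equilibrium with the TD solution.

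First I would check the standard conditions. The map $h$ is affine, hence globally Lipschitz. With $\mathcal{F}_t$ the natural filtration, $\{M_{t+1}\}$ is a martingale difference sequence: since the samples $(\phi_t,\phi_t')$ are drawn i.i.d.\ ($\mathbfcal{A}\ref{3A3}$) we have $\mathbb{E}[G_t\mid\mathcal{F}_t]=G$ and $\mathbb{E}[g_t\mid\mathcal{F}_t]=g$, so $\mathbb{E}[M_{t+1}\mid\mathcal{F}_t]=0$, while boundedness of rewards and features ($\mathbfcal{A}\ref{3A1}$) yields the growth bound $\mathbb{E}[\|M_{t+1}\|^2\mid\mathcal{F}_t]\le K(1+\|\psi_t\|^2)$. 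The step-size requirements $\sum_t\xi_t=\infty$ and $\sum_t\xi_t^2<\infty$ are exactly $\mathbfcal{A}\ref{3A2}$.

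The genuinely non-standard ingredient, and the step I expect to be the main obstacle, is the additive error $\bar{\varepsilon}_t=(0,0,\varepsilon_t)^{T}$, which is not a martingale difference. From \eqref{GTD-M-1} and \eqref{GTD-M-3}, $\varepsilon_t=v_{t+1}-v_t=\xi_t\big((\phi_t-\gamma\phi_t')\phi_t^{T}u_t-w v_t\big)$, so using $\mathbfcal{A}\ref{3A1}$ to bound the feature products gives $\|\bar{\varepsilon}_t\|\le C\,\xi_t\|\psi_t\|$. Relative to the drift $\xi_t h(\psi_t)$, this perturbation therefore carries an extra factor $\xi_t\to0$. The key point is that under the Borkar--Meyn scaling the perturbation is of strictly smaller order than the linear drift and hence does not alter the scaled field; and once boundedness is in force, $\sum_t\xi_t\|\bar{\varepsilon}_t\|\le C\big(\sup_t\|\psi_t\|\big)\sum_t\xi_t^2<\infty$, so the cumulative contribution of $\bar{\varepsilon}_t$ to the recursion is summable and thus asymptotically negligible. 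The mild circularity here (boundedness is needed to bound $\varepsilon_t$, yet $\varepsilon_t$ enters the boundedness argument) is resolved precisely because the error is a vanishing multiple of the state; this is the crux where care is needed.

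With the perturbation controlled, the remaining steps are routine. The scaled field $h_\infty(\psi)=\lim_{r\to\infty}h(r\psi)/r=G\psi$ has the origin as globally asymptotically stable equilibrium of $\dot{\psi}=G\psi$ precisely because $G$ is Hurwitz; the hypothesis $w(w+1)>\|\bar{A}\|^2$ of Lemma \ref{lemma-1} is secured by $w\ge1$ together with the bound on $\|\bar{A}\|$ furnished by $\mathbfcal{A}\ref{3A1}$. The Borkar--Meyn theorem then gives $\sup_t\|\psi_t\|<\infty$ a.s., and the one-timescale convergence theorem yields $\psi_t\to\psi^*$ a.s., where $\psi^*=-G^{-1}g$ is the unique zero of $h$ ($G$ being invertible, as it is Hurwitz). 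Finally I would identify the limit: writing $\psi^*=(v^*,u^*,\theta^*)$ and solving $G\psi^*=-g$ row by row gives $v^*=0$, then $\bar{A}^{T}u^*=0$ so $u^*=0$ (since $\bar{A}$ is negative definite, hence nonsingular), and lastly $\bar{A}\theta^*=-\bar{b}$, i.e.\ $\theta^*=-\bar{A}^{-1}\bar{b}$. As $\theta_t$ is the corresponding block of $\psi_t$, this gives $\theta_n\to-\bar{A}^{-1}\bar{b}$ a.s., as claimed.
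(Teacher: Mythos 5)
Your proposal is correct and follows essentially the same route as the paper's proof: recasting the three recursions as the one-timescale scheme \eqref{GTD-M-1TS-general}, verifying the standard ODE-method hypotheses, invoking the Borkar--Meyn criterion with the scaled field $G\psi$ (Hurwitz by Lemma \ref{lemma-1} since $w\geq 1$ and $\mathbfcal{A}$\textbf{\textup{\ref{3A1}}} bound $\|\bar{A}\|^2$), and concluding $\psi_t\to -G^{-1}g$. Your explicit treatment of the perturbation $\bar{\varepsilon}_t$ (where the paper simply cites the ``third extension'' of Borkar's Chapter 2) and your row-by-row solution of $G\psi^*=-g$ in place of the block-matrix inverse are minor refinements, not a different argument.
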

\begin{proof}
    Assumption $\mathbfcal{A}$\textbf{\textup{\ref{3A1}}} ensures that $||\bar{A}||^2<w(w+1)$ and $\mathbfcal{A}$\textbf{\textup{\ref{3A3}}} ensures that the function $h(\cdot)$ is well defined. Now, using Lemma \ref{lemma-1} and \cite{Borkar_Meyn} we can show that the iterates in \eqref{GTD-M-1TS} remain stable. Then using the third extension from (Chapter-2 pp. 17, \citet{Borkar_Book}) we can show that \(\psi_{n} \rightarrow -G^{-1}g\) as \(n\rightarrow\infty\). Thereafter using the formula for inverse of block matrices it can be shown that $\theta_{n}\rightarrow -\bar{A}^{-1}b$ as \(n\rightarrow\infty\). See Appendix A1 for a detailed proof.
\end{proof}
\noindent Similar results can be proved for the GTD2-M and TDC-M iterates. 
\begin{remark}
If $w$ is large, the initial values of the momentum parameter is small. The condition on $w$ in lemma \ref{lemma-1} is large compared to the condition on $w$ in \cite{webpage}, where the condition is $w>0$. Motivated by this, we look at the three-TS case of the iterates.
\end{remark} 

\subsection{Three Timescale Setting}
\label{sub_ThreeTS}
We consider the three iterates for GTD-M in \eqref{GTD-M-1}, \eqref{GTD-M-2} and \eqref{GTD-M-3} under the following criteria for step-sizes: $\frac{\xi_{t}}{{\beta_t}}\rightarrow0$ and $\frac{{\varrho_t}}{\xi_{t}}\rightarrow0$ as $t\rightarrow\infty$.
We provide the first conditions for stability and a.s. convergence of generic three-TS SA recursions. We emphasize that the setting we look at in Theorem \ref{theorem_3TS} is more general than the setting at hand of GTD-M iterates. Although stability and convergence results exist for one-TS and two-TS cases, this is the first time such results have been provided for the case of three-TS recursions. We next provide the general iterates for a three-TS recursion along with the assumptions used while analyzing them.
Consider the following three iterates:
\begin{equation}
    \label{main_iter_x}
    x_{n+1} = x_{n} + a(n)\left(h(x_{n},y_{n},z_{n}) + M_{n+1}^{(1)} + \varepsilon^{(1)}_n\right),
\end{equation}
\begin{equation}
    \label{main_iter_y}
    y_{n+1} = y_{n} + b(n)\left(g(x_{n},y_{n},z_{n}) + M_{n+1}^{(2)} + \varepsilon^{(2)}_n\right),
\end{equation}
\begin{equation}
    \label{main_iter_z}
    z_{n+1} = z_{n} + c(n)\left(f(x_{n},y_{n},z_{n}) + M_{n+1}^{(3)} + \varepsilon^{(3)}_n\right),
\end{equation}
and the following assumptions:
\begin{itemize}
    \item[\bf{(B1)}] $h:\mathbb{R}^{d_1+d_2+d_3}\rightarrow\mathbb{R}^{d_1}, g:\mathbb{R}^{d_1+d_2+d_3}\rightarrow\mathbb{R}^{d_2}, f:\mathbb{R}^{d_1+d_2+d_3}\rightarrow\mathbb{R}^{d_3}$ are Lipchitz continuous, with Lipchitz constants $L_1,L_2$ and $L_3$ respectively.
    
    \item[\bf{(B2)}] $\{a(n)\}$, $\{b(n)\}$, $\{c(n)\}$ are step-size sequences that satisfy $a(n)>0,b(n)>0,c(n)>0, \forall n>0,$
    \[\sum_{n}a(n) = \sum_{n}b(n) = \sum_{n}c(n) = \infty,\]
    \[\sum_{n}(a(n)^2+b(n)^2+c(n)^2) < \infty,\]
    \[\frac{b(n)}{a(n)}\rightarrow 0, \frac{c(n)}{b(n)}\rightarrow 0 \mbox{ as } n \rightarrow \infty.\]
    
    \item[\bf{(B3)}] $\{M_{n}^{(1)}\}, \{M_{n}^{(2)}\}, \{M_{n}^{(3)}\}$ are Martingale difference sequences w.r.t. the filtration $\{\mathcal{F}_{n}\}$ where,
    \[\mathcal{F}_{n} = \sigma\left(x_{m},y_{m},z_{m},M_{m}^{(1)},M_{m}^{(2)},M_{m}^{(3)},m\leq n\right)\]
    \[\mathbb{E}\left[||M_{n+1}^{(i)}||^{2}|\mathcal{F}_{n}\right] \leq K_i \left(1+||x_{n}||^{2}+||y_{n}||^{2}+||z_{n}||^{2}\right);\]
    $\forall n\geq0$, $i = 1,2,3$ and constants $0<K_i<\infty$. The terms $\varepsilon^{(i)}_t$ satisfy $ ||\varepsilon^{(1)}_n|| + ||\varepsilon^{(2)}_n|| + ||\varepsilon^{(3)}_n|| \rightarrow 0$ as $n\rightarrow \infty$.
    
    \item[\bf(B4)]
    \begin{enumerate}
        \item[(i)] The ode $\dot{x}(t) = h(x(t),y,z), y \in \mathbb{R}^{d_2}, z \in \mathbb{R}^{d_3}$ has a globally asymptotically stable equilibrium (g.a.s.e) $\lambda(y,z)$, and $\lambda:\mathbb{R}^{d_2\times d_3}\rightarrow \mathbb{R}^{d_1}$ is Lipchitz continuous.
        \item[(ii)] The ode $\dot{y}(t) = g(\lambda(y(t),z),y(t),z), z \in \mathbb{R}^{d_3}$ has a globally asymptotically stable equilibrium $\Gamma(z)$, where $\Gamma:\mathbb{R}^{d_3}\rightarrow \mathbb{R}^{d_2}$ is Lipchitz continuous.
        \item[(iii)] The ode $\dot{z}(t) = f(\lambda(\Gamma(z(t)),z(t)),\Gamma(z(t)),z(t))$, has a globally asymptotically stable equilibrium $z^{*}\in\mathbb{R}^{d_3}$.
    \end{enumerate}
    \item[\bf(B5)] The functions $h_c(x, y, z) = \frac{h(cx,cy,cz)}{c}, c\geq1$ satisfy $h_c\rightarrow h_{\infty}$ as $c\rightarrow\infty$ uniformly on compacts. The ODE: 
    \(\dot{x}(t) = h_{\infty}(x(t),y,z),\)
    has a unique globally asymptotically stable equilibrium $\lambda_{\infty}(y,z)$, where $\lambda_{\infty}:\mathbb{R}^{d_2+d_3}\rightarrow\mathbb{R}^{d_1}$ is Lipschitz continuous. Further, $\lambda_{\infty}(0,0)= 0$. 
    
    \item[\bf(B6)] The functions $g_c(y, z) = \frac{g(c\lambda_{\infty}(y,z),cy,cz)}{c}, c\geq1$ satisfy $g_c\rightarrow g_{\infty}$ as $c\rightarrow\infty$ uniformly on compacts. The ODE: 
    \(\dot{y}(t) = g_{\infty}(y(t),z),\)
    has a unique globally asymptotically stable equilibrium $\Gamma_{\infty}(z)$, where $\Gamma_{\infty}:\mathbb{R}^{d_3}\rightarrow\mathbb{R}^{d_2}$ is Lipschitz continuous. Further, $\Gamma_{\infty}(0) = 0$.
    
    \item[\bf(B7)] The functions $f_c(z) = \frac{g(c\lambda_{\infty}(\Gamma_{\infty}(z),z),c\Gamma_{\infty}(z),cz)}{c}, c\geq1$ satisfy $f_c\rightarrow f_{\infty}$ as $c\rightarrow\infty$ uniformly on compacts. The ODE: 
    \(\dot{z}(t) = f_{\infty}(z(t)),\)
    has the origin in $\mathbb{R}^{d_3}$ as its unique globally asymptotically stable equilibrium.
\end{itemize}
\begin{remark}
Conditions $\bf{(B5)} - \bf{(B7)}$ give sufficient conditions that ensure that the iterates remain stable. Specifically it ensures that  $\sup_{n} (||x_{n}|| + ||y_{n}|| + ||z_{n}||) < \infty$ $a.s.$. Conditions $\bf{(B1)} - \bf{(B4)}$ along with the stability of iterates ensures a.s. convergence of the iterates.
\end{remark}
\begin{theorem}
    \label{theorem_3TS}
    Under assumptions $\bf{(B1)}$-$\bf{(B7)}$,the iterates given by \eqref{main_iter_x} satisfy \eqref{main_iter_y} and \eqref{main_iter_z},
\[(x_n, y_n, z_n) \rightarrow (\lambda(\Gamma(z^{*}),z^{*}),\Gamma(z^{*}),z^{*})\mbox{ as } n\rightarrow\infty\]
\end{theorem}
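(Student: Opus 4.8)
The plan is to follow the ODE-based timescale-separation program of \citet{Borkar_Book}, first establishing almost sure boundedness of the iterates and then peeling off the three timescales one at a time from fastest to slowest. Throughout, $a(n)$ is the fastest step size, $b(n)$ intermediate, and $c(n)$ the slowest, so by (B2) we have $b(n)/a(n)\to 0$, $c(n)/b(n)\to 0$, and consequently $c(n)/a(n)\to 0$.

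First I would establish stability, i.e.\ $\sup_n(\|x_n\|+\|y_n\|+\|z_n\|)<\infty$ a.s. This is exactly where (B5)--(B7) enter: they are the scaled-ODE hypotheses of the \citet{Borkar_Meyn} stability theorem, stated hierarchically. Condition (B5) controls the fastest block through $h_\infty$ and its g.a.s.e.\ $\lambda_\infty$ with $\lambda_\infty(0,0)=0$; (B6) controls the intermediate block through $g_\infty$ and $\Gamma_\infty$ with $\Gamma_\infty(0)=0$; and (B7) forces the slowest scaled ODE $\dot z=f_\infty(z)$ to have the origin as its unique g.a.s.e. Applying the projective scaling argument of Borkar--Meyn in a nested fashion---first to the $x$-recursion with $(y,z)$ frozen, then to the $y$-recursion, then to the $z$-recursion---yields joint boundedness. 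The martingale contributions are controlled via (B3): since $\sum_n a(n)^2<\infty$ (and likewise for $b,c$) and the conditional second moments are bounded once the iterates are bounded, the noise sums $\sum_n a(n)M^{(1)}_{n+1}$, $\sum_n b(n)M^{(2)}_{n+1}$, $\sum_n c(n)M^{(3)}_{n+1}$ converge a.s.\ by the martingale convergence theorem.

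Given stability, I would carry out convergence in three stages. On the fastest timescale $a(n)$, the increments of $y_n$ and $z_n$ are $o(a(n))$, so $y$ and $z$ are quasi-static and the interpolated $x$-trajectory is an asymptotic pseudotrajectory of $\dot x=h(x,y,z)$ with $(y,z)$ held fixed; by (B4)(i) this gives $\|x_n-\lambda(y_n,z_n)\|\to 0$ a.s. On the intermediate timescale $b(n)$, substituting $x_n\approx\lambda(y_n,z_n)$ and using $c(n)/b(n)\to 0$ to freeze $z$, the $y$-iterate tracks $\dot y=g(\lambda(y,z),y,z)$, and (B4)(ii) gives $\|y_n-\Gamma(z_n)\|\to 0$, hence $\|x_n-\lambda(\Gamma(z_n),z_n)\|\to 0$. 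On the slowest timescale $c(n)$, with both faster blocks equilibrated, $z_n$ tracks $\dot z=f(\lambda(\Gamma(z),z),\Gamma(z),z)$, and (B4)(iii) yields $z_n\to z^*$. Combining the three limits gives the claimed joint convergence. The vanishing perturbations $\varepsilon^{(i)}_n$ are absorbed harmlessly: since $\|\varepsilon^{(1)}_n\|+\|\varepsilon^{(2)}_n\|+\|\varepsilon^{(3)}_n\|\to 0$ by (B3), they do not alter the limiting ODE, exactly as in the ``third extension'' invoked for the one-timescale result.

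The main obstacle, and the genuinely new part, is making the nested separation rigorous for three scales rather than two. Each peeling step requires showing that the faster blocks have truly equilibrated while the slower blocks are effectively constant over the relevant interpolation window, which demands careful interpolation on each of the three timescales, repeated Gronwall estimates, and a lemma upgrading Borkar's two-timescale tracking result to an additional level. The subtle point is that at the intermediate and slow stages the ``frozen'' slower variables are themselves slowly drifting, so one must quantify that their drift over a fixed $T$-length window in the relevant timescale is $o(1)$; controlling the accumulation of these errors across the hierarchy, rather than any single estimate, is the crux of the argument.
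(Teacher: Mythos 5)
Your proposal follows essentially the same route as the paper: almost sure boundedness via a hierarchical Borkar--Meyn scaling argument driven by \textbf{(B5)}--\textbf{(B7)}, followed by convergence obtained by peeling off the three timescales from fastest to slowest using \textbf{(B4)} and ODE tracking lemmas, with the $o(1)$ perturbations absorbed as in the one-timescale ``third extension.'' The only structural difference is ordering: the paper proves convergence under a boundedness hypothesis first and then reuses that result on the scaled (hence bounded) iterates inside the stability argument, whereas you propose stability first --- but the underlying estimates are the same.
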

\begin{proof}
    See Appendix A2.
\end{proof}
Next we use theorem \ref{theorem_3TS}, to show that the iterates of GTD-M a.s. converge to the TD solution $-\bar{A}^{-1}\bar{b}$. Consider the following assumption on step-size sequences instead of  $\mathbfcal{A}$\textbf{\textup{\ref{3A2}}}.
\begin{assumption}
\label{3A4}
The step-sizes satisfy \(\xi_{t}>0,\beta_{t}>0, \varrho_{t}>0 \mbox{ }\forall t\), 
\[ \sum_{t}\xi_t = \sum_{t}\beta_t = \sum_{t}\varrho_t = \infty,\]
    \[\sum_{t}(\xi_t^2+\beta_t ^2+\varrho_t^2) < \infty,\]
    \[\frac{\beta_t}{\xi_t}\rightarrow 0, \frac{\varrho_t}{\beta_t}\rightarrow 0 \mbox{ as } t \rightarrow \infty\]
and the momentum parameter satisfies:
 \(\eta_{t} = \frac{\varrho_{t}-w\alpha_{t}}{\varrho_{t-1}}.\)
\end{assumption}
\begin{theorem}
    \label{theorem_GTD-M_3TS}
     Assume $\mathbfcal{A}$\textbf{\textup{\ref{3A1}}}, $\mathbfcal{A}$\textbf{\textup{\ref{3A3}}} and $\mathbfcal{A}$\textbf{\textup{\ref{3A4}}} hold and let $w>0$. Then, the GTD-M iterates given by \eqref{gtd_M_1} and \eqref{gtd_M_2} satisfy \(\theta_{n} \rightarrow \theta^{*} = -\bar{A}^{-1}\bar{b}\) a.s. as \(n\rightarrow \infty\). 
\end{theorem}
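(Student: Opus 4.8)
The plan is to recognize the three GTD-M recursions \eqref{GTD-M-1}, \eqref{GTD-M-2} and \eqref{GTD-M-3} as a special case of the generic three-timescale scheme \eqref{main_iter_x}--\eqref{main_iter_z} and then invoke Theorem \ref{theorem_3TS}. I would identify $x_n = v_n$ (fast, step-size $\xi_t$), $y_n = u_n$ (middle, step-size $\beta_t$) and $z_n = \theta_n$ (slow, step-size $\varrho_t$), all in $\mathbb{R}^d$, so $d_1=d_2=d_3=d$. Assumption \ref{3A4} supplies exactly the summability and timescale-separation requirements of (B2), namely $\beta_t/\xi_t \to 0$ and $\varrho_t/\beta_t \to 0$. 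The remaining work is to check (B1) and (B3)--(B7), after which the theorem yields a limit that I must then identify with the TD solution.

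Taking expectations with respect to the stationary distribution (legitimate under Assumption \ref{3A3}) and using $\mathbb{E}[(\phi_t-\gamma\phi_t')\phi_t^T] = -\bar A^T$ and $\mathbb{E}[\delta_t\phi_t] = \bar b + \bar A\theta$, the drift functions are $h(v,u,\theta) = -\bar A^T u - wv$, $g(v,u,\theta)=\bar A\theta - u + \bar b$ and $f(v,u,\theta)=v$; all three are affine and hence Lipschitz, giving (B1). The martingale differences $M^{(1)}_{t+1}$ and $M^{(2)}_{t+1}$ arise from replacing these expectations by single samples, while $M^{(3)}_{t+1}=0$; boundedness of rewards and features (Assumption \ref{3A1}) gives the quadratic growth bound in (B3). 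The error terms are $\varepsilon^{(1)}_t = \varepsilon^{(2)}_t = 0$ and $\varepsilon^{(3)}_t = \varepsilon_t = v_{t+1}-v_t$.

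For the nested equilibria of (B4): since $w>0$, the fast ODE $\dot v = -\bar A^T u - wv$ has the Hurwitz matrix $-wI$ and g.a.s.e. $\lambda(u,\theta) = -\tfrac{1}{w}\bar A^T u$; the middle ODE $\dot u = \bar A\theta - u + \bar b$ has matrix $-I$ and g.a.s.e. $\Gamma(\theta) = \bar A\theta + \bar b$; and the slow ODE $\dot\theta = \lambda(\Gamma(\theta),\theta) = -\tfrac1w\bar A^T(\bar A\theta + \bar b)$ has matrix $-\tfrac1w\bar A^T\bar A$, which is Hurwitz because $\bar A$ is negative definite (hence invertible), so $\bar A^T\bar A\succ0$. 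Its unique equilibrium solves $\bar A^T\bar A\theta = -\bar A^T\bar b$, i.e.\ $z^*=\theta^*=-\bar A^{-1}\bar b$. Because every drift is affine with the constant part scaling away, the limiting functions $h_\infty,g_\infty,f_\infty$ of (B5)--(B7) coincide with the homogeneous parts, are driven by the same three Hurwitz matrices, have the origin as g.a.s.e., and satisfy $\lambda_\infty(0,0)=0$, $\Gamma_\infty(0)=0$; this supplies the stability hypotheses. Evaluating the limit point, $\Gamma(z^*)=\bar A\theta^*+\bar b = 0$ and $\lambda(\Gamma(z^*),z^*)=0$, so Theorem \ref{theorem_3TS} gives $(v_n,u_n,\theta_n)\to(0,0,-\bar A^{-1}\bar b)$, and in particular $\theta_n\to\theta^*=-\bar A^{-1}\bar b$ a.s.

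The one step requiring genuine care, and which I expect to be the main obstacle, is the condition $\varepsilon_t \to 0$ in (B3) applied to $\varepsilon^{(3)}_t = v_{t+1}-v_t = \xi_t\big((\phi_t-\gamma\phi_t')\phi_t^T u_t - wv_t\big)$. This quantity is defined through the iterates, so it cannot be checked before the iterates are known to be bounded, yet boundedness is itself a consequence of (B5)--(B7) inside Theorem \ref{theorem_3TS}. I would break this apparent circularity in the standard layered fashion: first establish joint stability from (B5)--(B7)---whose scaled-ODE analysis only requires the perturbation entering the $\theta$-recursion to be of relative order $\xi_t\big(1+\|v_t\|+\|u_t\|+\|\theta_t\|\big)$ per step rather than to vanish outright---and only afterwards use $\sup_t(\|v_t\|+\|u_t\|+\|\theta_t\|)<\infty$ together with $\xi_t\to0$ to conclude $\|\varepsilon_t\|\to0$ a.s. At that point the convergence half of Theorem \ref{theorem_3TS} applies and delivers the claim.
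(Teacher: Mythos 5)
Your proposal follows essentially the same route as the paper: identify $(v,u,\theta)$ with the fast/middle/slow coordinates, compute the affine drifts $h(v,u,\theta)=-\bar A^Tu-wv$, $g=\bar A\theta+\bar b-u$, $f=v$ together with their nested equilibria, verify \textbf{(B1)}--\textbf{(B7)}, and invoke Theorem \ref{theorem_3TS} to conclude $(v_n,u_n,\theta_n)\to(0,0,-\bar A^{-1}\bar b)$. Your additional remark about the apparent circularity in verifying $\varepsilon^{(3)}_t\to 0$ --- which requires boundedness of $u_t,v_t$, itself an output of the stability analysis --- is a legitimate refinement of a step the paper justifies only by ``$\xi_t\to 0$'', and your layered resolution (stability first, then the vanishing of the perturbation) is the correct way to close it.
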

\begin{proof}
We transform the iterates given by \eqref{GTD-M-1}, \eqref{GTD-M-2} and \eqref{GTD-M-3} into the standard SA form given by \eqref{main_iter_x}, \eqref{main_iter_y} and \eqref{main_iter_z}. Let $\mathcal{F}_t = \sigma(u_0, v_0, \theta_0, r_{j+1},\phi_j, \phi_j': j < t)$. Let, $A_t = \phi_t(\gamma\phi_t'-\phi_t)^T$ and $b_t = r_{t+1}\phi_t$. Then, \eqref{GTD-M-1} can be re-written as:
\begin{equation*}
    v_{t+1} = v_t + \xi_{t}\left(h(v_t,u_t,\theta_t) + M_{t+1}^{(1)}\right)
\end{equation*}
where,
\begin{equation*}
    \begin{split}
        h(v_t,u_t,\theta_t) &= \mathbb{E}[(\phi_t - \gamma\phi_t')\phi_t^Tu_t - w v_t|\mathcal{F}_t]\\ 
        &= -\bar{A}^Tu_t-wv_t .\\
        M_{t+1}^{(1)} = -A_t^Tu_t& - w v_t - h(v_t,u_t,\theta_t)  = (\bar{A}^T-A_t^T)u_t .
    \end{split}
\end{equation*}
Next, \eqref{GTD-M-2} can be re-written as:
\begin{equation*}
    \begin{split}
        u_{t+1} &= u_t + \beta_t\left(g(v_t,u_t,\theta_t) + M_{t+1}^{(2)}\right)\\
\end{split}
\end{equation*}
where,
\begin{equation*}
    \begin{split}
        g(v_t,u_t,\theta_t) &= \mathbb{E}[\delta_t\phi_t - u_t|\mathcal{F}_t]
        = \bar{A}\theta_t + \bar{b} - u_t\\
        M_{t+1}^{(2)} &= A_t\theta_t + b_t - u_t - g(v_t,u_t,\theta_t)\\ 
        &= (A_t-\bar{A})\theta_t + (b_t-\bar{b}).
    \end{split}
\end{equation*}
Finally, \eqref{GTD-M-3} can be re-written as:
\begin{equation*}
    \begin{split}
        \theta_{t+1} = \theta_t + \varrho_{t}\left(f(v_t,u_t,\theta_t) + \varepsilon_t + M_{t+1}^{(3)}\right)
    \end{split}
\end{equation*}
where,
\(
        f(v_t,u_t,\theta_t) = v_t \mbox{ and }
        M_{t+1}^{(3)} = 0.
\)
\begin{figure*}[!bt]
    \centering
    {
    {\includegraphics[width=0.9\linewidth]{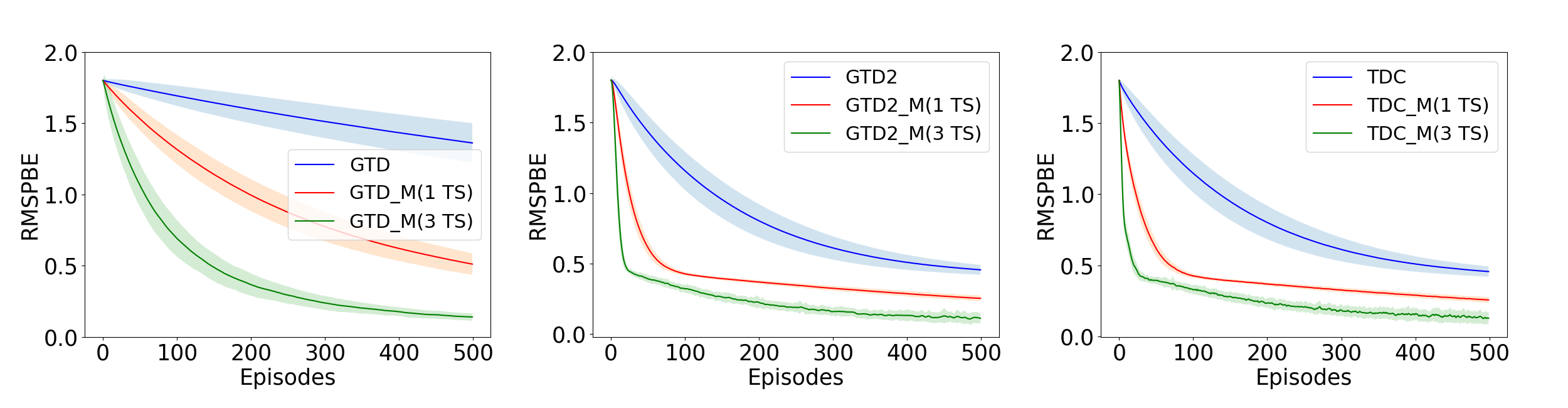}}
    
    \caption{RMSPBE (averaged over 100 independent runs) accross episodes for Boyan Chain. The features used are the standard spiked features of size 4 used in Boyan chain (see \cite{Dann}).}
    \label{f1}
    {\includegraphics[width=0.9\linewidth]{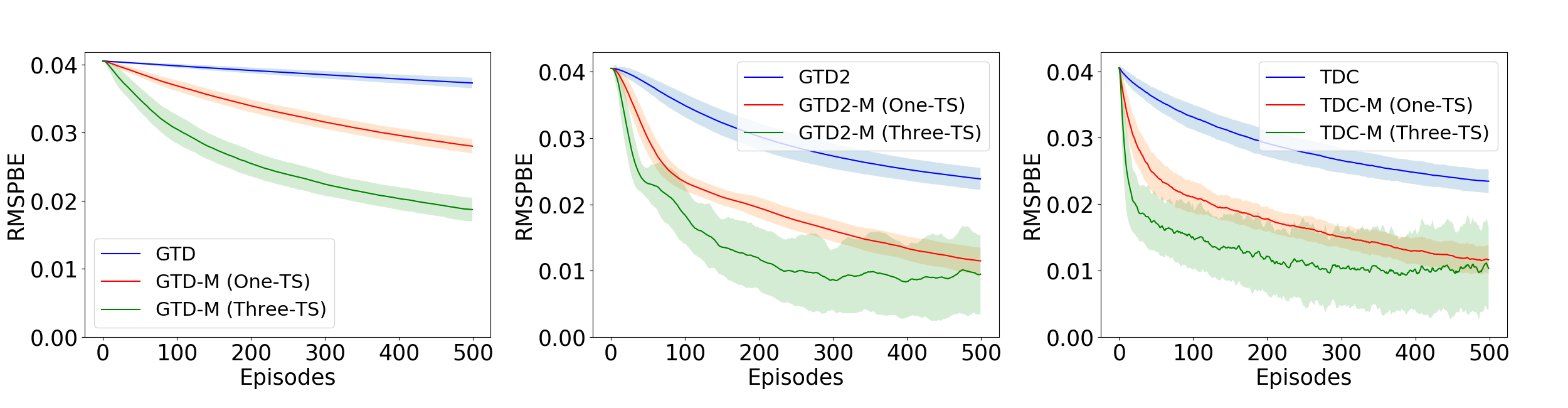}}
    \caption{RMSPBE (averaged over 100 independent runs) across episodes for the 5-State Random Chain problem. The features used are the \textit{Dependent} features used in \cite{FastGradient}.}
    \label{f2}
    }
\end{figure*}
The functions $h,g,f$ are linear in $v,u,\theta$ and hence Lipchitz continuous, therefore satisfying $\bf{(B1)}$. We choose the step-size sequences such that they satisfy $\bf{(B2)}$. One popular choice is \(\xi_t = \frac{1}{(t+1)^{\xi}}, \beta_t = \frac{1}{(t+1)^{\beta}}, \varrho_t= \frac{1}{(t+1)^{\varrho}},\)
\(\frac{1}{2}<\xi<\beta<\varrho\leq1.\)
Next, $M_{t+1}^{(1)},M_{t+1}^{(2)}$ and $M_{t+1}^{(3)}$ $t\geq0$, are martingale difference sequences w.r.t $\mathcal{F}_t$ by construction. $\mathbb{E}[||M_{t+1}^{(1)}||^2|\mathcal{F}_t] \leq ||(\bar{A}^T - A_t^T)||^2 ||u_t||^2$, $\mathbb{E}[||M_{t+1}^{(2)}||^2|\mathcal{F}_t] \leq 2(||(A_t-\bar{A})||^2 ||\theta_t||^2 + ||(b_t-\bar{b})||^2)$. The first part of $\bf{(B3)}$ is satisfied with $K_1 = ||(\bar{A}^T-A_{t}^T)||^2$, $K_2 = 2\max(||A_t - \bar{A}||^2,||b_t-\bar{b}||^2)$ and any $K_3>0$.
The fact that $K_1,K_2<\infty$ follows from the bounded features and bounded rewards assumption in $\mathbfcal{A}$\textbf{\textup{\ref{3A1}}}. Next, observe that $||\varepsilon_t^{(3)}||=\xi_t||\left((\phi_t - \gamma\phi_t')\phi_t^Tu_t - w v_t\right)||\rightarrow0$ since $\xi_t\rightarrow0 \mbox{ as } t\rightarrow\infty$. For a fixed $u,\theta \in \mathbb{R}^d$, consider the ODE
\(\dot{v}(t) = -\bar{A}^Tu - w v(t).\)
For $w>0$, $\lambda(u,\theta) = -\frac{\bar{A}^Tu}{w}$ is the unique g.a.s.e, is linear and therefore Lipchitz continuous. This satisfies $\bf{(B4)}$(i). Next, for a fixed $\theta \in \mathbb{R}^d$,
\(\dot{u}(t) = \bar{A}\theta + \bar{b} -u(t),\)
has $\Gamma(\theta) = \bar{A}\theta + \bar{b}$ as its unique g.a.s.e and is Lipschitz. This satisfies $\bf{(B4)}(ii)$. Finally, to satisfy $\bf{(B4)}(iii)$, consider,
\begin{equation*}
    \begin{split}
        \dot{\theta}(t) &= f(\lambda(\Gamma(z(t)),z(t)),\Gamma(z(t)),z(t))\\
       & = \frac{-\bar{A}^T\bar{A}\theta(t)-\bar{A}^T\bar{b}}{w}.
    \end{split}
\end{equation*}
Since, $\bar{A}$ is negative definite, therefore, $-\bar{A}^T\bar{A}$ is negative definite. Therefore, $\theta^* = -\bar{A}^{-1}\bar{b}$ is the unique g.a.s.e. 
Next, we show that the sufficient conditions for stability of the three iterates are satisfied. The function, $h_c(v,u,\theta) = \frac{-c\bar{A}^Tu-wcv}{c} = -\bar{A}^Tu-wv \rightarrow h_{\infty}(v,u,\theta) = -\bar{A}^Tu-wv$ uniformly on compacts as $c\rightarrow\infty$. The limiting ODE: 
\(\dot{v}(t) = -\bar{A}^Tu-wv(t)\)
has $\lambda_{\infty}(u,\theta) = -\frac{\bar{A}^Tu}{w}$ as its unique g.a.s.e. $\lambda_{\infty}$ is Lipschitz with $\lambda_{\infty}(0,0) = 0$, thus satisfying assumption $\bf{(B5)}$.

\noindent The function, $g_c(u,\theta) = \frac{c\bar{A}\theta + \bar{b} - cu}{c} = \bar{A}\theta-u+\frac{\bar{b}}{c} \rightarrow g_{\infty}(u,\theta) = -\bar{A}\theta - u$ uniformly on compacts as $c\rightarrow\infty$. The limiting ODE \(\dot{u}(t) = -\bar{A}\theta - u(t)\)
has $\Gamma_{\infty}(\theta) = \bar{A}\theta$ as its unique g.a.s.e. $\Gamma_{\infty}$ is Lipchitz with $\Gamma_{\infty}(0) = 0$. Thus assumption $\bf{(B6)}$ is satisfied.

\noindent Finally, $f_c(\theta) = \frac{-c\bar{A}^T\bar{A}\theta}{cw} \rightarrow f_{\infty} = \frac{-\bar{A}^T\bar{A}\theta}{w}$ uniformly on compacts as $c\rightarrow \infty$ and the ODE:
\(\dot{\theta}(t) = -\frac{\bar{A}^T\bar{A}\theta(t)}{w}\)
has origin as its unique g.a.s.e. This ensures the final condition $\bf{(B7)}$. By theorem \ref{theorem_3TS}, 
\[
\begin{pmatrix}
    v_t\\
    u_t\\
    \theta_t
\end{pmatrix}
\rightarrow
\begin{pmatrix}
    \lambda(\Gamma(-\bar{A}^{-1}\bar{b}),-\bar{A}^{-1}\bar{b})\\
    \Gamma(-\bar{A}^{-1}\bar{b})\\
    -\bar{A}^{-1}\bar{b}.
\end{pmatrix}
=
\begin{pmatrix}
    0\\
    0\\
    -\bar{A}^{-1}\bar{b}.
\end{pmatrix}
\]
Specifically, $\theta_t \rightarrow -\bar{A}^{-1}\bar{b}$.
\end{proof}
\noindent Similar analysis can be provided for GTD2-M and TDC-M iterates. See Appendix A3 for details.
\section{Experiments}
\label{sec_exp}
\begin{figure*}[!bt]
    \centering
    {
    {\includegraphics[width=0.9\linewidth]{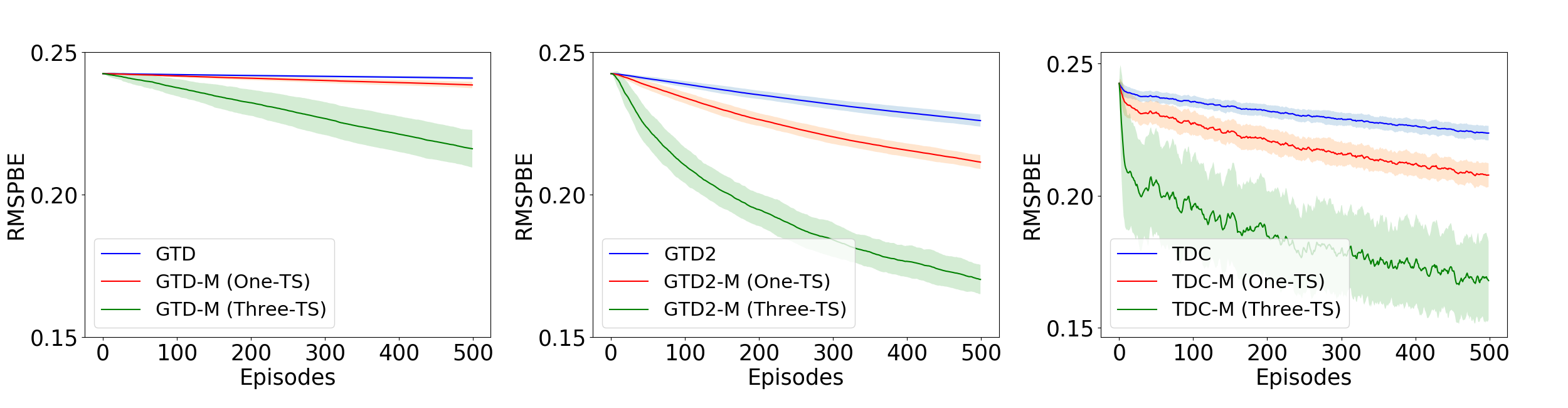}}
    \caption{RMSPBE (averaged over 100 independent runs) accross episodes for the 19-State Random Walk problem. The features used are an extension of the \textit{Dependent} features used in \cite{FastGradient}.}
    \label{f3}
    {\includegraphics[width=0.9\linewidth]{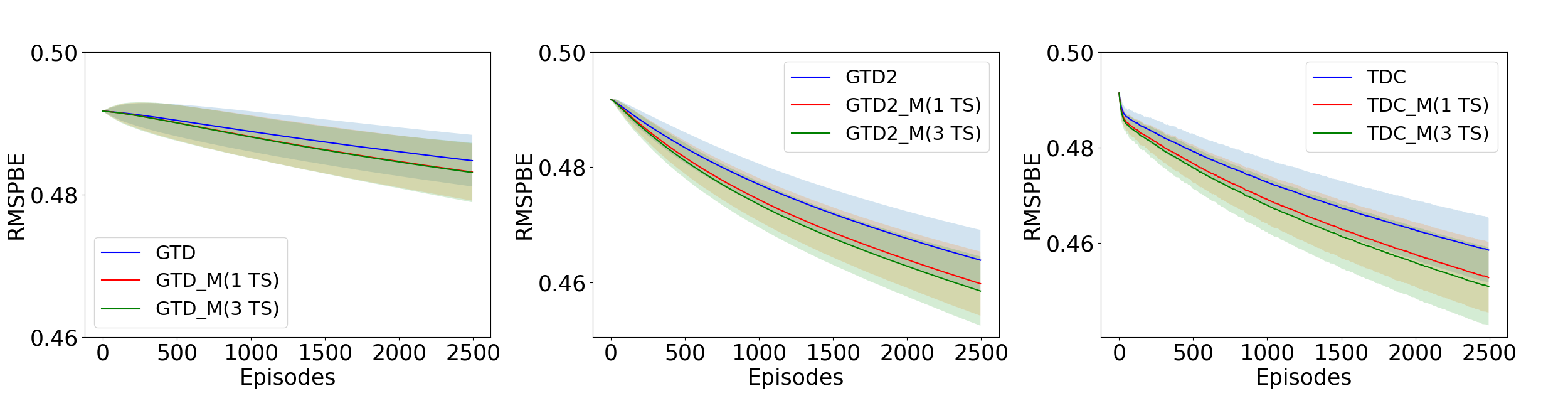}}
    \caption{RMSPBE (averaged over 100 independent runs) accross episodes for 20-state Random MDP with 5 random actions. The features used are \textit{Linear random} of size 10 (see \cite{Dann}). For each state, the value of the feature vector at $10^{th}$ position is 1 and all the values in all other 9 positions is chosen randomly from 0 to 10 and are then normalized.}
    \label{f4}
    }
\end{figure*}
We evaluate the momentum based GTD algorithms defined in section \ref{secGTD} to four standard problems of policy evaluation in reinforcement learning namely, Boyan Chain \cite{Boyan}, 5-State random walk \cite{FastGradient}, 19-State Random Walk \cite{SuttonBarto_book} and Random MDP \cite{FastGradient}. See Appendix A4 for a detailed description of the MDP settings and \cite{Dann} for details on implementation. We run the three algorithms, GTD, GTD2 and TDC along with their heavy ball momentum variants in One-TS and Three-TS settings and compare the RMSPBE (Root of MSPBE) across episodes. Figure-1 to Figure-4 plot these results. We consider decreasing step-sizes of the form:
\(\xi_t = \frac{1}{(t+1)^{\xi}},\beta_t = \frac{1}{(t+1)^{\beta}},\varrho_t = \frac{1}{(t+1)^{\varrho}}, \alpha_t = \frac{1}{(t+1)^{\alpha}}\)
in all the examples. Table \ref{stepsize-table} summarizes the different step-size sequences used in our experiment.

In one-TS setting, we require $\xi = \beta = \varrho$. Since $\xi_t = \frac{\alpha_t}{\varrho_t}$, we must have $\alpha = 2\varrho$. In the Three-TS setting, $\xi<\beta<\varrho$ thus implying, $\alpha<\varrho+\beta$ and $\beta<\varrho$. Although our analysis requires square summability: $\xi, \beta, \varrho >0.5$, such choice of step-size makes the algorithms converge very slowly.
Recently, \cite{dalal2018finite} showed convergence rate results for Gradient TD schemes with non-square summable
\\
\begin{table}
\caption{Choice of step-size parameters} 
\centering 
\begin{tabular}{c c c c c} 
\hline\hline 
Boyan Chain & $\alpha$ & $\beta$ & $\varrho$ &$w$\\ 
\hline 
Vanilla & 0.25 & 0.125 & - & - \\ 
One-TS & 0.25 & 0.125 & 0.125 & 1 \\
Three-TS & 0.25 & 0.125 & 0.2 & 0.1\\ 
\hline\hline 
5-state RW & $\alpha$ & $\beta$ & $\varrho$ &$w$\\ 
\hline 
Vanilla & 0.25 & 0.125 & - & - \\ 
One-TS & 0.25 & 0.125 & 0.125 & 1 \\
Three-TS & 0.25 & 0.125 & 0.2 & 0.1\\
\hline\hline 
19-State RW & $\alpha$ & $\beta$ & $\varrho$ &$w$\\ 
\hline 
Vanilla & 0.125 & 0.0625 & - & - \\ 
One-TS & 0.125 & 0.0625 & 0.0625 & 1 \\
Three-TS & 0.125 & 0.0625 & 0.1 & 0.1\\
\hline\hline 
Random Chain & $\alpha$ & $\beta$ & $\varrho$ &$w$\\ 
\hline 
Vanilla & 0.5 & 0.25 & - & - \\ 
One-TS & 0.5 & 0.25 & 0.25 & 1 \\
Three-TS & 0.5 & 0.25 & 0.3 & 0.1\\ 
\hline\hline 
\end{tabular}
\label{stepsize-table} 
\end{table}
\\
step-sizes also (See Remark 2 of \cite{dalal2018finite}).
Therefore, we look at non-square summable step-sizes here, and observe that in all the examples the iterates do converge. The momentum parameter is chosen as in $\mathbfcal{A}$\textbf{\textup{\ref{3A2}}}.

In all the examples considered, the momentum methods outperform their vanilla counterparts. Since, in the Three-TS setting, a lower value of $w$ can be chosen, this ensures that the momentum parameter is not small in the initial phase of the algorithm as in the One-TS setting. This in turn helps to reduce the RMSPBE faster in the initial phase of the algorithm as is evident from the experiments.
\section{Related Work and Conclusion}
\label{secConclusion}
To the best of our knowledge no previous work has specifically looked at Gradient TD methods with an added heavy ball term. The use of momentum specifically in the SA setting is very limited. Section 4.1 of \cite{MJ} does talk about momentum; however the problem looked at is that of SGD with momentum and the driving matrix is assumed to be symmetric (see Appendix H of their paper). We do not make any such assumption here. The work of \cite{Devraj}, indeed looks at momentum in SA setting. However, they introduce a matrix momentum term which is not equivalent to heavy ball momentum. Acceleration in Gradient TD methods has been looked at in \cite{acceleratedGTD}. The authors provide a new algorithm called ATD and the acceleration is in form of better data efficiency. However, they do not make use of momentum methods.

In this work we have introduced heavy ball momentum in Gradient Temporal difference algorithms for the first time. We decompose the two iterates of these algorithms into three separate iterates and provide asymptotic convergence guarantees of these new schemes under the same assumptions made by their vanilla counterparts. Specifically, we show convergence in the One-TS regime as well as Three-TS regime. In both the cases, the momentum parameter gradually goes 1. Three-TS formulation gives us more flexibility in choosing the momentum parameter. Specifically, compared to the One-TS setting, a larger momentum parameter can be chosen during the initial phase in the Three-TS case. We observe improved performance with these new schemes when compared with the original algorithms.

As a step forward from this work, the natural direction would be to look at more sophisticated momentum methods such as Nesterov's accelerated method \cite{nesterov}. Also, here we only provide the convergence guarantees of these new momentum methods. A particularly interesting step would be to quantify the benefits of using momentum in SA settings. Specifically, it would be interesting to extend weak convergence rate analysis of \cite{konda,mokkadem} to Three-TS regime. Also, extending the recent convergence rate results in expectation and high probability of GTD methods \cite{2TSgugan,harsh,Kaledin,tale2TS} to these momentum settings would be interesting works for the future.
\bibliography{References}

\begin{thebibliography}{36}
\providecommand{\natexlab}[1]{#1}

\bibitem[{Assran and Rabbat(2020)}]{Assran}
Assran, M.; and Rabbat, M. 2020.
\newblock On the Convergence of Nesterov’s Accelerated Gradient Method in
  Stochastic Settings.
\newblock \emph{Proceedings of the 37th International Conference on Machine
  Learning, PMLR}, 119: 410--420.

\bibitem[{Avrachenkov, Patil, and Thoppe(2020)}]{webpage}
Avrachenkov, K.; Patil, K.; and Thoppe, G. 2020.
\newblock Online Algorithms for Estimating Change Rates of Web Pages.
\newblock \emph{arXiv}, 2009.08142.

\bibitem[{Baird(1995)}]{Baird}
Baird, L. 1995.
\newblock Residual Algorithms: Reinforcement Learning with Function
  Approximation.
\newblock In \emph{In Proceedings of the Twelfth International Conference on
  Machine Learning}, 30--37. Morgan Kaufmann.

\bibitem[{Borkar(2008{\natexlab{a}})}]{BorkarBook}
Borkar, V. 2008{\natexlab{a}}.
\newblock \emph{Stochastic Approximation: A Dynamical Systems Viewpoint}.
\newblock Cambridge University Press.
\newblock ISBN 9780521515924.

\bibitem[{Borkar(2008{\natexlab{b}})}]{Borkar_Book}
Borkar, V.~S. 2008{\natexlab{b}}.
\newblock \emph{Stochastic Approximation: A Dynamical Systems Viewpoint}.
\newblock Cambridge University Press.
\newblock ISBN 9780521515924.

\bibitem[{Borkar and Meyn(2000)}]{Borkar_Meyn}
Borkar, V.~S.; and Meyn, S.~P. 2000.
\newblock The O.D.E. Method for Convergence of Stochastic Approximation and
  Reinforcement Learning.
\newblock \emph{SIAM Journal on Control and Optimization}, 38(2): 447--469.

\bibitem[{Boyan(1999)}]{Boyan}
Boyan, J. 1999.
\newblock Least-Squares Temporal Difference Learning.
\newblock In \emph{ICML}.

\bibitem[{Dalal, Szorenyi, and Thoppe(2020)}]{tale2TS}
Dalal, G.; Szorenyi, B.; and Thoppe, G. 2020.
\newblock A Tale of Two-Timescale Reinforcement Learning with the Tightest
  Finite-Time Bound.
\newblock \emph{Proceedings of the AAAI Conference on Artificial Intelligence},
  34(04): 3701--3708.

\bibitem[{Dalal et~al.(2018{\natexlab{a}})Dalal, Szorenyi, Thoppe, and
  Mannor}]{dalal2018finite}
Dalal, G.; Szorenyi, B.; Thoppe, G.; and Mannor, S. 2018{\natexlab{a}}.
\newblock Finite Sample Analysis of Two-Timescale Stochastic Approximation with
  Applications to Reinforcement Learning.
\newblock arXiv:1703.05376.

\bibitem[{Dalal et~al.(2018{\natexlab{b}})Dalal, Thoppe, Sz{\"o}r{\'e}nyi, and
  Mannor}]{2TSgugan}
Dalal, G.; Thoppe, G.; Sz{\"o}r{\'e}nyi, B.; and Mannor, S. 2018{\natexlab{b}}.
\newblock Finite Sample Analysis of Two-Timescale Stochastic Approximation with
  Applications to Reinforcement Learning.
\newblock In Bubeck, S.; Perchet, V.; and Rigollet, P., eds., \emph{Proceedings
  of the 31st Conference On Learning Theory}, volume~75 of \emph{Proceedings of
  Machine Learning Research}, 1199--1233. PMLR.

\bibitem[{Dann, Neumann, and Peters(2014)}]{Dann}
Dann, C.; Neumann, G.; and Peters, J. 2014.
\newblock Policy Evaluation with Temporal Differences: A Survey and Comparison.
\newblock \emph{Journal of Machine Learning Research}, 15(24): 809--883.

\bibitem[{Devraj, Bušíć, and Meyn(2019)}]{Devraj}
Devraj, A.~M.; Bušíć, A.; and Meyn, S. 2019.
\newblock On Matrix Momentum Stochastic Approximation and Applications to
  Q-learning.
\newblock \emph{57th Annual Allerton Conference on Communication, Control, and
  Computing}, 749--756.

\bibitem[{Gadat, Panloup, and Saadane(2016)}]{StochasticHB}
Gadat, S.; Panloup, F.; and Saadane, S. 2016.
\newblock Stochastic Heavy ball.
\newblock \emph{Electronic Journal of Statistics}, 12: 461--529.

\bibitem[{Ghadimi, Feyzmahdavian, and Johansson(2014)}]{ghadimi2014global}
Ghadimi, E.; Feyzmahdavian, H.~R.; and Johansson, M. 2014.
\newblock Global convergence of the Heavy-ball method for convex optimization.
\newblock arXiv:1412.7457.

\bibitem[{Gitman et~al.(2019)Gitman, Lang, Zhang, and Xiao}]{Gitman}
Gitman, I.; Lang, H.; Zhang, P.; and Xiao, L. 2019.
\newblock Understanding the role of momentum in stochastic gradient methods.
\newblock \emph{Advances in Neural Information Processing Systems}, 9630--9640.

\bibitem[{Gupta, Srikant, and Ying(2019)}]{harsh}
Gupta, H.; Srikant, R.; and Ying, L. 2019.
\newblock Finite-Time Performance Bounds and Adaptive Learning Rate Selection
  for Two Time-Scale Reinforcement Learning.
\newblock arXiv:1907.06290.

\bibitem[{Kaledin et~al.(2019)Kaledin, Moulines, Naumov, Tadic, and
  Wai}]{Kaledin}
Kaledin, M.; Moulines, E.; Naumov, A.; Tadic, V.; and Wai, H. 2019.
\newblock Finite Time Analysis of Linear Two-timescale Stochastic Approximation
  with Markovian Noise.
\newblock \emph{Conference on Learning Theory}, 125: 2144--2203.

\bibitem[{Konda and Tsitsiklis(2004)}]{konda}
Konda, V.; and Tsitsiklis, J. 2004.
\newblock Convergence rate of linear two-time-scale stochastic approximation.
\newblock \emph{Annals of Applied Probability}, 14.

\bibitem[{Kushner and Clark(1978)}]{Kushner_book2}
Kushner, H.; and Clark, D. 1978.
\newblock \emph{Stochastic Approximation Methods for constrained and
  unconstrained systems.}
\newblock Springer.

\bibitem[{Lakshminarayanan and Bhatnagar(2017)}]{chandru-SB}
Lakshminarayanan, C.; and Bhatnagar, S. 2017.
\newblock A Stability Criterion for Two-Timescale Stochastic Approximation
  Schemes.
\newblock \emph{Automatica}, 79: 108--114.

\bibitem[{Ljung(1977)}]{Ljung}
Ljung, L. 1977.
\newblock Analysis of recursive stochastic algorithms.
\newblock \emph{IEEE Transactions on Automatic Control}, 22(4): 551--575.

\bibitem[{Loizou and Richtárik(2020)}]{Loizou}
Loizou, N.; and Richtárik, P. 2020.
\newblock Momentum and stochastic momentum for stochastic gradient, Newton,
  proximal point and subspace descent methods.
\newblock \emph{Computational Optimization and Applications}, 77: 653--710.

\bibitem[{Ma and Yarats(2019)}]{Ma}
Ma, J.; and Yarats, D. 2019.
\newblock Quasi-hyperbolic momentum and adam for deep learning.
\newblock \emph{International Conference on Learning Representations}.

\bibitem[{Maei(2011)}]{Maei_PhD}
Maei, H.~R. 2011.
\newblock \emph{Gradient Temporal-Difference Learning Algorithms}.
\newblock Ph.D. thesis, University of Alberta, CAN.
\newblock AAINR89455.

\bibitem[{Mokkadem and Pelletier(2006)}]{mokkadem}
Mokkadem, A.; and Pelletier, M. 2006.
\newblock {Convergence rate and averaging of nonlinear two-time-scale
  stochastic approximation algorithms}.
\newblock \emph{The Annals of Applied Probability}, 16(3): 1671 -- 1702.

\bibitem[{Mou et~al.(2020)Mou, Li, Wainwright, Bartlett, and Jordan}]{MJ}
Mou, W.; Li, C.~J.; Wainwright, M.~J.; Bartlett, P.~L.; and Jordan, M.~I. 2020.
\newblock On Linear Stochastic Approximation: Fine-grained Polyak-Ruppert and
  Non-Asymptotic Concentration.
\newblock \emph{Proceedings of Thirty Third Conference on Learning Theory,
  PMLR}, 125: 2947--2997.

\bibitem[{Nesterov(1983)}]{nesterov}
Nesterov, Y. 1983.
\newblock A method of solving a convex programming problem with convergence
  rate $O\bigl(\frac1{k^2}\bigr)$.
\newblock \emph{Soviet Mathematics Doklady}, 269: 543--547.

\bibitem[{Pan, White, and White(2017)}]{acceleratedGTD}
Pan, Y.; White, A.; and White, M. 2017.
\newblock Accelerated Gradient Temporal Difference Learning.
\newblock arXiv:1611.09328.

\bibitem[{Polyak(1964)}]{polyak_heavy_ball}
Polyak, B. 1964.
\newblock Some methods of speeding up the convergence of iteration methods.
\newblock \emph{Ussr Computational Mathematics and Mathematical Physics}, 4:
  1--17.

\bibitem[{Polyak(1990)}]{polyak_avg}
Polyak, B. 1990.
\newblock New stochastic approximation type procedures.
\newblock \emph{Avtomatica i Telemekhanika}, 7: 98--107.

\bibitem[{Robbins and Monro(1951)}]{robbins_munro}
Robbins, H.; and Monro, S. 1951.
\newblock {A Stochastic Approximation Method}.
\newblock \emph{The Annals of Mathematical Statistics}, 22(3): 400 -- 407.

\bibitem[{Sutton and Barto(2018)}]{SuttonBarto_book}
Sutton, R.; and Barto, A. 2018.
\newblock \emph{Reinforcement Learning: An Introduction}.
\newblock Cambridge, MA, USA: A Bradford Book.
\newblock ISBN 0262039249.

\bibitem[{Sutton et~al.(2009)Sutton, Maei, Precup, Bhatnagar, Silver,
  Szepesv\'{a}ri, and Wiewiora}]{FastGradient}
Sutton, R.; Maei, H.; Precup, D.; Bhatnagar, S.; Silver, D.; Szepesv\'{a}ri,
  C.; and Wiewiora, E. 2009.
\newblock Fast Gradient-Descent Methods for Temporal-Difference Learning with
  Linear Function Approximation.
\newblock In \emph{Proceedings of the 26th Annual International Conference on
  Machine Learning}, ICML ’09, 993–1000. New York, NY, USA: Association for
  Computing Machinery.
\newblock ISBN 9781605585161.

\bibitem[{Sutton(1988)}]{Sutton1988}
Sutton, R.~S. 1988.
\newblock Learning to Predict By the Methods of Temporal Differences.
\newblock \emph{Machine Learning}, 3(1): 9--44.

\bibitem[{Sutton, Maei, and Szepesv\'{a}ri(2009)}]{gtd}
Sutton, R.~S.; Maei, H.; and Szepesv\'{a}ri, C. 2009.
\newblock A Convergent O(n) Temporal-difference Algorithm for Off-policy
  Learning with Linear Function Approximation.
\newblock In Koller, D.; Schuurmans, D.; Bengio, Y.; and Bottou, L., eds.,
  \emph{Advances in Neural Information Processing Systems}, volume~21. Curran
  Associates, Inc.

\bibitem[{Tsitsiklis and Van~Roy(1997)}]{TsitsiklisVanRoy}
Tsitsiklis, J.; and Van~Roy, B. 1997.
\newblock An analysis of temporal-difference learning with function
  approximation.
\newblock \emph{IEEE Transactions on Automatic Control}, 42(5): 674--690.

\end{thebibliography}
\onecolumn
\begin{center}
    \Huge\textbf{Appendix}
\end{center}
\par\noindent\rule{\textwidth}{0.1pt}
\section*{A1 \quad Proof of Theorem 2}
Consider the One timescale recursion for the GTD-M iterates given by \eqref{GTD-M-1TS-general} as given below:
\begin{equation*}
    \psi_{t+1} = \psi_{t} + \xi_{t}(h(\psi_{t}) + M_{t+1} + \bar{\varepsilon}_{t}).
\end{equation*}

Here \(h(\psi) = g + G\psi, g = \mathbb{E}[g_{t}], G = \mathbb{E}[G_{t}]\), where the expectations are w.r.t. the stationary distribution of the Markov chain induced by the target policy $\pi$. $M_{t+1} = (G_{t+1}-G)\psi_{t} + (g_{t+1} - g)$. In particular,
\[
G = \begin{pmatrix}
-wI & -\bar{A}^{T} & 0\\
0 & -I & \bar{A} \\
I & 0 & 0
\end{pmatrix},
g = 
\begin{pmatrix}
0\\
\bar{b}\\
0
\end{pmatrix},
\]
where recall that \(\bar{A} = \mathbb{E}[\phi(\gamma\phi' - \phi)^{T}]\) and $\bar{b} = \mathbb{E}[r\phi]$
We show that the conditions ${\bf(A1)-(A4)}$ in Chapter 2 of \cite{Borkar_Book} hold and thereafter use Theorem 2 of \cite{Borkar_Book} to show convergence to the TD solution.
\begin{itemize}
    \item[{\bf(A1)}] The map $h(\psi)$ is linear in $\psi$ and therefore Lipschitz continuous with Lipschitz constant $||G||$.
    \item[{\bf(A2)}] The step-size sequence $\xi_t$ satisfies the required conditions (cf. assumption $\mathbfcal{A}$ \textbf{\textup{\ref{3A2}}} of the current paper).
    \item[{\bf(A3)}] By construction $M_{t+1}$ is a martingale difference sequence w.r.t the filtration $\mathcal{F}_t = \sigma(\psi_{0},M_{k},k\leq t)$. Also, \(\mathbb{E}[||(G_{t+1} - G)\psi_t + (g_{t+1} - g)||^2|\mathcal{F}_t] \leq 2(||(G_{t+1} - G)||^2||\psi_t||^2 + ||g_{t+1}-g||^2)\). (A3) is satisfied with $K=2\max(||(G_{t+1} - G)||^2+||g_{t+1}-g||^2)$. $K<\infty$ follows from the fact that the rewards are uniformly bounded and the features are normalized (see assumption $\mathbfcal{A}$ \textbf{\textup{\ref{3A1}}}).
    \item[\bf{(A4)}] To ensure (A4), we show that (A5) of (Chapter 3, pp.22, \citet{Borkar_Book}) holds and then use Theorem 7 of \cite{Borkar_Book}. The functions $h_c(x) = \frac{h(cx)}{c} = \frac{g}{c} + G\psi, c\geq 1$. For any compact set $H$, $h_c \rightarrow h_{\infty}$ as $c\rightarrow\infty = G\psi$ uniformly. Consider the ODE 
    \[\dot{\psi}(t) = h_{\infty}(\psi(t)) = G\psi_t.\]
    Observe that since $||\phi_t||\leq1$ and $r_t \leq 1$ $\forall t$, we have $||A||^2<2$. Since we have assumed that $w\geq1$ therefore, $w(w+1)\geq||A||^2$, and hence from lemma \ref{lemma-1}, we have that $G$ is Hurwitz. Hence, the origin is a unique globally asymptotically stable equilibrium (g.a.s.e) for the above ODE. This in turn implies that the iterates remain bounded i.e., $\sup_t||\psi_t||<\infty$ a.s. $\forall t$. By (Theorem 2, Chapter 2 of \citet{Borkar_Book}) $\psi_t$ converges to an internally chain transitive invariant set of the ODE $\dot{\psi}(t) = h(\psi(t)) = g + G \psi(t)$. The only such point of the ODE is its equilibrium point $-G^{-1}g$. By (Corollary 4, Chapter 2 of \citet{Borkar_Book}),
    \[\psi_t \rightarrow -G^{-1}g.\]
    A straightforward calculation for the inverse of the $3\times3$ block matrix $G$ gives us that 
    \[\theta_t \rightarrow -\bar{A}^{-1}\bar{b}\]
\end{itemize}

\section*{A2 \quad Proof of Theorem 3}
\noindent We first start by assuming that the iterates remain stable (cf. assumption \textbf{(B5)}) and show that the three timescale recursions converge. Subsequently we provide conditions which ensure that the iterates remain stable. We consider general three timescale recursions as given below:
\begin{equation}
    \label{main_iter_x_app}
    x_{n+1} = x_{n} + a(n)\left(h(x_{n},y_{n},z_{n}) + M_{n+1}^{(1)}\right),
\end{equation}
\begin{equation}
    \label{main_iter_y_app}
    y_{n+1} = y_{n} + b(n)\left(g(x_{n},y_{n},z_{n}) + M_{n+1}^{(2)}\right),
\end{equation}
\begin{equation}
    \label{main_iter_z_app}
    z_{n+1} = z_{n} + c(n)\left(f(x_{n},y_{n},z_{n}) + M_{n+1}^{(3)}\right),
\end{equation}
where $x_n\in\mathbb{R}^{d_1}$, $y_n\in\mathbb{R}^{d_2}$ and $z_n\in\mathbb{R}^{d_3}$ $\forall n\geq 0$. Next we consider the following assumptions:
\begin{itemize}
    \item[\bf{(B1)}] $h:\mathbb{R}^{d_1+d_2+d_3}\rightarrow\mathbb{R}^{d_1}, g:\mathbb{R}^{d_1+d_2+d_3}\rightarrow\mathbb{R}^{d_2}, f:\mathbb{R}^{d_1+d_2+d_3}\rightarrow\mathbb{R}^{d_3}$ are Lipschitz continuous.
    \item[\bf{(B2)}] $\{M_{n}^{(1)}\}, \{M_{n}^{(2)}\}, \{M_{n}^{(3)}\}$ are Martingale difference sequences w.r.t. $\{\mathcal{F}_{n}\}$ where,
    \[\mathcal{F}_{n} = \sigma\left(x_{m},y_{m},z_{m},M_{m}^{(1)},M_{m}^{(2)},M_{m}^{(3)};m\leq n\right), n\geq0\]
    \[\mathbb{E}\left[||M_{n+1}^{(i)}||^{2}|\mathcal{F}_{n}\right] \leq K_i \left(1+||x_{n}||^{2}+||y_{n}||^{2}+||z_{n}||^{2}\right) a.s.; i = 1,2,3,\] for some constants $K_i>0$, $i=1,2,3$.
    \item[\bf{(B3)}] $\{a(n)\}$, $\{b(n)\}$, $\{c(n)\}$ are step-size sequences that satisfy $a(n)>0,b(n)>0,c(n)>0, \forall n\geq0$
    \[\sum_{n}a(n) = \sum_{n}b(n) = \sum_{n}c(n) = \infty,\]
    \[\sum_{n} (a(n)^2+b(n)^2+c(n)^2) < \infty,\]
    \[\frac{b(n)}{a(n)}\rightarrow 0 \mbox{,\quad} \frac{c(n)}{b(n)}\rightarrow 0 \mbox{ as } n \rightarrow \infty.\]
    \item[\bf(B4)]
    \begin{enumerate}
        \item[(i)] The ODE $\dot{x}(t) = h(x(t),y,z), y \in \mathbb{R}^{d_2}, z \in \mathbb{R}^{d_3}$ has a globally asymptotically stable equilibrium $\lambda(y,z)$, where $\lambda:\mathbb{R}^{d_2\times d_3}\rightarrow \mathbb{R}^{d_1}$ is Lipschitz continuous.
        \item[(ii)] The ODE $\dot{y}(t) = g(\lambda(y(t),z),y(t),z), z \in \mathbb{R}^{d_3}$ has a globally asymptotically stable equilibrium $\Gamma(z)$, where $\Gamma:\mathbb{R}^{d_3}\rightarrow \mathbb{R}^{d_2}$ is Lipschitz continuous.
        \item[(iii)] The ODE $\dot{z}(t) = f(\lambda(\Gamma(z(t)),z(t)),\Gamma(z(t)),z(t))$, has a globally asymptotically stable equilibrium $z^{*}\in\mathbb{R}^{d_3}$
    \end{enumerate}
    \item[\bf(B5)] $\sup_{n} \left(||x_{n}|| + ||y_{n}|| + ||z_{n}||\right) < \infty$ $a.s.$ 
\end{itemize}
\begin{theorem}
\label{theorem1}
Under $(\textbf{B1}) - (\textbf{B5})$ the iterates given by \eqref{main_iter_x_app}, \eqref{main_iter_y_app} and \eqref{main_iter_z_app},
\[(x_n, y_n, z_n) \rightarrow (\lambda(\Gamma(z^{*}),z^{*}),\Gamma(z^{*}),z^{*}).\]
\end{theorem}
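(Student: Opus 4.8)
The plan is to exploit the strict timescale separation $b(n)/a(n)\to 0$ and $c(n)/b(n)\to 0$ from $\textbf{(B3)}$ to decouple the three recursions hierarchically, extending the two-timescale argument of Borkar (Chapter 6 of \cite{Borkar_Book}) by one level. Introduce the interpolated time axes $t_a(n)=\sum_{m<n}a(m)$, $t_b(n)=\sum_{m<n}b(m)$, $t_c(n)=\sum_{m<n}c(m)$. On the fastest ($a$) axis the per-step increments of $y$ and $z$ are $O(b(n))=o(a(n))$ and $O(c(n))=o(a(n))$, so $(y_n,z_n)$ is \emph{quasi-static}; on the intermediate ($b$) axis only $z$ is quasi-static while $x$ has already equilibrated; and on the slow ($c$) axis both $x$ and $y$ have equilibrated. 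The a.s.\ boundedness guaranteed by $\textbf{(B5)}$ is used throughout to keep the Lipschitz constants from $\textbf{(B1)}$, the noise bounds from $\textbf{(B2)}$, and the relevant ODE solutions uniformly controlled. For the noise, combining $\textbf{(B2)}$ with square-summability in $\textbf{(B3)}$ and $\textbf{(B5)}$ shows that the martingale sums $\sum_m a(m)M_{m+1}^{(1)}$, $\sum_m b(m)M_{m+1}^{(2)}$, $\sum_m c(m)M_{m+1}^{(3)}$ converge a.s.\ (Doob / quadratic-variation bound), so each noise term perturbs the corresponding interpolated trajectory only negligibly.

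First I would analyze the fastest timescale. Rewriting \eqref{main_iter_y_app} and \eqref{main_iter_z_app} relative to the $a$-axis gives $y_{n+1}=y_n+a(n)o(1)$ and $z_{n+1}=z_n+a(n)o(1)$, so along this axis $(y,z)$ is frozen and $x_n$ tracks the ODE $\dot{x}=h(x,y,z)$ with $(y,z)$ held fixed. By $\textbf{(B4)}$(i) this ODE has globally asymptotically stable equilibrium $\lambda(y,z)$, and a standard singular-perturbation / tracking argument (Lemma 1, Chapter 6 of \cite{Borkar_Book}) yields $\|x_n-\lambda(y_n,z_n)\|\to 0$ a.s.

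Next I would collapse the problem to a genuine two-timescale system in $(y,z)$. Substituting $x_n=\lambda(y_n,z_n)+e_n$ with $e_n\to 0$ into \eqref{main_iter_y_app}, and using Lipschitz continuity of $g$ from $\textbf{(B1)}$ and of $\lambda$ from $\textbf{(B4)}$(i), the $y$-recursion becomes $y_{n+1}=y_n+b(n)\bigl(g(\lambda(y_n,z_n),y_n,z_n)+M_{n+1}^{(2)}+\delta_n\bigr)$ with $\delta_n\to 0$. Since $z$ is quasi-static on the $b$-axis, $\textbf{(B4)}$(ii) gives $\|y_n-\Gamma(z_n)\|\to 0$ a.s. Feeding both tracking results into \eqref{main_iter_z_app} produces $z_{n+1}=z_n+c(n)\bigl(f(\lambda(\Gamma(z_n),z_n),\Gamma(z_n),z_n)+M_{n+1}^{(3)}+\delta'_n\bigr)$ with $\delta'_n\to 0$, and $\textbf{(B4)}$(iii) forces $z_n\to z^{*}$. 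Combining the three tracking limits yields $(x_n,y_n,z_n)\to(\lambda(\Gamma(z^{*}),z^{*}),\Gamma(z^{*}),z^{*})$, as claimed.

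The hard part will be making the two simultaneous freezing steps rigorous without circularity: on the intermediate axis one must use both that $x$ has \emph{already} equilibrated (so $g$ may legitimately be evaluated at $\lambda$) and that $z$ has \emph{not yet} moved, and then show that the residual errors $e_n,\delta_n,\delta'_n$ enter the slower ODEs purely as vanishing perturbations, so that the asymptotically-autonomous / asymptotic-pseudotrajectory machinery (Hirsch's lemma) still identifies the limit set with the stated equilibrium. Controlling the propagation of these errors across \emph{two} timescale gaps rather than the single gap of the two-timescale case is the main technical obstacle, and it is precisely here that the strict orderings $b(n)/a(n)\to 0$, $c(n)/b(n)\to 0$ together with the uniform bounds from $\textbf{(B5)}$ are indispensable.
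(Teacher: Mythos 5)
Your proposal is correct and follows essentially the same route as the paper: analyze the fastest timescale with $(y,z)$ quasi-static to get $x_n$ tracking $\lambda(y_n,z_n)$, collapse to the middle timescale with $z$ quasi-static to get $y_n$ tracking $\Gamma(z_n)$, then pass to the slow ODE for $z$ and invoke \textbf{(B4)}(iii); the paper implements each freezing step via the third extension of Chapter 2 and the interpolated-trajectory comparison of Chapter 6 of \cite{Borkar_Book}, which is the same asymptotic-pseudotrajectory machinery you cite. The only cosmetic difference is that the paper states the intermediate conclusions as convergence to the invariant sets $\{(\lambda(y,z),y,z)\}$ and $\{(\Gamma(z),z)\}$ rather than as tracking estimates $\|x_n-\lambda(y_n,z_n)\|\to 0$, which are equivalent under \textbf{(B5)} and the Lipschitz continuity of $\lambda$ and $\Gamma$.
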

\begin{proof}
We start with the following Lemma that characterizes the set to which the iterates converge. 
\begin{lemma}
\label{lemma1}
$(x_n, y_n, z_n) \rightarrow \{(\lambda(\Gamma(z),z),\Gamma(z),z): z\in\mathbb{R}^{d_3}\}$
\end{lemma}
\begin{proof}
We first consider the fastest timescale of $\{a(n)\}$ and show that:
\[(x_n, y_n, z_n)\rightarrow\{(\lambda(y, z),y,z):y \in \mathbb{R}^{d_2},z \in R^{d_3}\}.\]
We rewrite the iterates \eqref{main_iter_x_app}, \eqref{main_iter_y_app} and \eqref{main_iter_z_app} as:
\begin{equation}
    x_{n+1} = x_{n} + a(n)\left(h(x_{n},y_{n},z_{n}) + M_{n+1}^{(1)}\right),
\end{equation}
\begin{equation}
    y_{n+1} = y_{n} + a(n)\left(\epsilon^{(2),a}_n + M_{n+1}^{(2),a}\right),
\end{equation}
\begin{equation}
    z_{n+1} = z_{n} + a(n)\left(\epsilon^{(3),a}_n + M_{n+1}^{(3),a}\right),
\end{equation}
where, 
\[\epsilon^{(2),a}_n = \frac{b(n)}{a(n)}g(x_n,y_n,z_n),  M_{n+1}^{(2),a} = \frac{b(n)}{a(n)} M_{n+1}^{(2)},\]
\[\epsilon^{(3),a}_n = \frac{c(n)}{a(n)}f(x_n,y_n,z_n),  M_{n+1}^{(3),a} = \frac{c(n)}{a(n)} M_{n+1}^{(3)}.\]
Using third extension from Chapter-2 of \cite{BorkarBook}, $(x_n, y_n, z_n)$ converges to an internally chain transitive invariant set of the ODE 
\[\dot{x}(t) = h(x(t),y(t),z(t)),\]
\[\dot{y}(t) = 0,\]
\[\dot{z}(t) = 0.\]
For initial conditions $x \in \mathbb{R}^{d_1}, y \in \mathbb{R}^{d_2}, z \in \mathbb{R}^{d_3}$, the internally chain transitive invariant set of the above ODE is $\{(\lambda(y,z),y,z)\}$. Therefore,
\begin{equation}
\label{lemma1-i}
    (x_n,y_n,z_n)\rightarrow\{(\lambda(y,z),y,z):y\in\mathbb{R}^{d_2},z\in\mathbb{R}^{d_3}\}
\end{equation}
Next we consider the middle timescale $\{b(n)\}$. \eqref{main_iter_y_app} and \eqref{main_iter_z_app} can be re-written as:
\[y_{n+1} = y_n + b(n)\left(g(x_n,y_n,z_n)+M_{n+1}^{(2)}\right),\]
\begin{equation}
\label{middle_z}
    z_{n+1} = z_n + b(n)\left(\epsilon_n^{(3),b} + M_{n+1}^{(3),b}\right),
\end{equation}
where,
\[\epsilon_n^{(3),b} = \frac{b(n)}{c(n)}f(x_n,y_n,z_n),\mbox{\quad} M_{n+1}^{(3),b} = \frac{b(n)}{c(n)}M_{n+1}^{(3)}.\]
The iteration for $\{y_n\}$ can be re-written as:
\begin{equation}
\label{middle_y}
    y_{n+1} = y_n + b(n)\left(g(\lambda(y_n,z_n),y_n,z_n) + \epsilon_n^{(2),b} + M_{n+1}^{(2)}\right)
\end{equation}
where, 
\[\epsilon_n^{(2),b} = g(x_n,y_n,z_n) - g(\lambda(y_n,z_n),y_n,z_n).\]
Since, $(x_n,y_n,z_n) \rightarrow \{(\lambda(y,z),y,z):y \in \mathbb{R}^{d_2}, z \in \mathbb{R}^{d_3}\}$, therefore $||\epsilon_n^{(2),b}||\rightarrow0$ as $n\rightarrow\infty$. Again using third extension from Chapter-2 of \cite{BorkarBook}, it can be seen that \eqref{middle_z} and \eqref{middle_y} converges to an internally chain transitive invariant set of the ODE
\[\dot{y}(t) = g\left(\lambda(y(t),z(t)),y(t),z(t)\right)\]
\[\dot{z}(t) = 0\]
For initial conditions $y \in \mathbb{R}^{d_2}, z \in \mathbb{R}^{d_3}$, the internally chain transitive invariant set of the above ODE is $\{(\Gamma(z),z)\}$. Therefore,
\begin{equation}
\label{lemma1-ii}
    (y_n,z_n)\rightarrow\{(\Gamma(z),z):z\in\mathbb{R}^{d_3}.\}
\end{equation}
Combining \eqref{lemma1-i} and \eqref{lemma1-ii} we get:
\[(x_n,y_n,z_n)\rightarrow\{(\lambda(\Gamma(z),z),\Gamma(z),z):z\in\mathbb{R}^{d_3}\}.\]
\end{proof}
\noindent Finally, we consider the slowest timescale of $\{c(n)\}$. We define the piece wise linear continuous interpolation of the iterates $z_n$ as:
\[\bar{z}(t(n)) = z_n\]
\[\bar{z}(t) = z_n + (z_{n+1}-z_n)\frac{t-t(n)}{t(n+1)-t(n)}, t\in [t(n),t(n+1)],\]
where, \(t(n) = \sum_{m=0}^{n-1}c(n), n\geq1.\)
Also, let $z^{s}(t),t\geq s$, denote the unique solution to the below ODE starting at $s\in\mathbb{R}$:
\[\dot{z}^{s}(t) = h(z^{s}(t)), t\geq s,\]
with  $z^s(s) = \bar{z}(s)$. Using the arguments as in Theorem-2, Chapter-6 of \cite{BorkarBook}, it can be shown that for any $T>0$
\[\lim_{s\rightarrow\infty}\sup_{t\in[s,s+T]}||\bar{z}(t) - z^{s}(t)|| = 0 \mbox{ a.s. }\]
Subsequently arguing as in proof of Theorem-2, Chapter-2 of \cite{BorkarBook}, we get:
\[z_n\rightarrow z^* \mbox{ a.s. }\]
Using Lemma \ref{lemma1}, we get:
\[(x_n, y_n, z_n)\rightarrow \left(\lambda(\Gamma(z^*),z^*),\Gamma(z^*),z^*\right)\mbox{ a.s.}\]
\end{proof}
Next we provide sufficient conditions for $\bf(B5)$ to hold. Consider the following additional assumptions:
\begin{itemize}
    \item[\bf(B6)] The functions $h_c(x, y, z) \triangleq \frac{h(cx,cy,cz)}{c}, c\geq1$ satisfy $h_c\rightarrow h_{\infty}$ as $c\rightarrow\infty$ uniformly on compacts. For fixed $y\in\mathbb{R}^{d_2}, z\in\mathbb{R}^{d_3}$, the ODE
    \[\dot{x}(t) = h_{\infty}(x(t),y,z)\]
    has its unique globally asymptotically stable equilibrium $\lambda_{\infty}(y,z)$, where $\lambda_{\infty}:\mathbb{R}^{d_2+d_3}\rightarrow\mathbb{R}^{d_1}$ is Lipschitz continuous. Further, $\lambda_{\infty}(0,0) = 0$, i.e., 
    \[\dot{x}(t) = h_{\infty}(x(t),0,0)\]
    has origin in $\mathbb{R}^{d_1}$ as unique globally asymptotically stable equilibrium.
    
    \item[\bf(B7)] The functions $g_c(y, z) \triangleq \frac{g(c\lambda_{\infty}(y,z),cy,cz)}{c}, c\geq1$ satisfy $g_c\rightarrow g_{\infty}$ as $c\rightarrow\infty$ uniformly on compacts. For fixed $z\in\mathbb{R}^{d_3}$, the ODE
    \[\dot{y}(t) = g_{\infty}(y(t),z)\]
    has its unique globally asymptotically stable equilibrium $\Gamma_{\infty}(z)$, where $\Gamma_{\infty}:\mathbb{R}^{d_3}\rightarrow\mathbb{R}^{d_2}$ is Lipschitz continuous. Further, $\Gamma_{\infty}(0) = 0$, i.e., 
    \[\dot{y}(t) = g_{\infty}(y(t),0)\]
    has origin in $\mathbb{R}^{d_2}$ as its unique globally asymptotically stable equilibrium.
    
    \item[\bf(B8)] The functions $f_c(z) \triangleq \frac{f(c\lambda_{\infty}(\Gamma_{\infty}(z),z),c\Gamma_{\infty}(z),cz)}{c}, c\geq1$ satisfy $f_c\rightarrow f_{\infty}$ as $c\rightarrow\infty$ uniformly on compacts. The ODE
    \[\dot{z}(t) = f_{\infty}(z(t))\]
    has the origin in $\mathbb{R}^{d_3}$ as its unique globally asymptotically stable equilibrium.
\end{itemize}
\begin{theorem}
\label{thm_app_stability}
    Under assumptions $\bf{(B1)}$-$\bf{(B4)}$ and $\bf{(B6)}$-$\bf{(B8)}$,
    \[\sup_{n}(||x_n|| + ||y_n|| + ||z_n||) < \infty.\]
\end{theorem}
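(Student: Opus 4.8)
The plan is to adapt the projective (scaling) technique of \cite{Borkar_Meyn}, as presented in Chapter~3 of \cite{Borkar_Book}, to the three-timescale setting, exploiting the fact that the scaling-limit hypotheses \textbf{(B6)}--\textbf{(B8)} reproduce, for the limiting vector fields $h_\infty,g_\infty,f_\infty$, exactly the nested globally-asymptotically-stable structure that \textbf{(B4)} imposes on $h,g,f$. Concretely, for the scaled system the fast ODE $\dot x = h_\infty(x,y,z)$ is slaved to $\lambda_\infty(y,z)$, the middle ODE $\dot y = g_\infty(y,z)$ is slaved to $\Gamma_\infty(z)$, and the slow ODE $\dot z = f_\infty(z)$ has the \emph{origin} as its unique g.a.s.e.; chaining these, the fully scaled limiting flow drives $(x,y,z)$ to the origin. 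It is this contraction toward the origin of the scaling-limit flow that will forbid the actual iterates from escaping to infinity.

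First I would fix $T>0$ and, working on the fastest clock $t(n)=\sum_{m<n}a(m)$, cut time into blocks $[T_k,T_{k+1})$ with $T_{k+1}=\min\{t(n):t(n)\ge T_k+T\}$. With $r(k)=\max(1,\|(x(T_k),y(T_k),z(T_k))\|)$ I would introduce the common rescaling $\hat x = x/r(k)$, $\hat y = y/r(k)$, $\hat z = z/r(k)$ on each block and form the piecewise-linear interpolations. Using \textbf{(B2)}--\textbf{(B3)} (square-summability together with the martingale bound, which after rescaling still controls the noise because $r(k)\ge 1$), the rescaled interpolations track the scaled ODEs governed by $h_{r(k)},g_{r(k)},f_{r(k)}$; as $r(k)\to\infty$ these converge uniformly on compacts to $h_\infty,g_\infty,f_\infty$. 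The goal is to show that for $T$ large and $r(k)$ large the block map contracts the rescaled norm below some fixed $\delta<1$, giving $r(k+1)\le \max(1,\delta\,r(k)+C)$ and hence $\sup_k r(k)<\infty$, which is equivalent to the claimed boundedness.

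The main obstacle is that the scaled iterates themselves form a genuine three-timescale system, so the per-block contraction cannot be read off a single ODE; one must re-run the timescale-separation argument of Lemma~\ref{lemma1} \emph{at the level of the rescaled iterates and uniformly in the scale factor} $r(k)$. That is, within a block the scaled fast variable must equilibrate to $\lambda_\infty(\hat y,\hat z)$, then the scaled middle variable to $\Gamma_\infty(\hat z)$, and only then does $\dot{\hat z}=f_\infty(\hat z)$ pull $\hat z$ toward $0$. The delicate point is reconciling two competing notions of long time: the block length $T$ must be large enough for the slowest scaled ODE to contract, yet the perturbation errors $\epsilon^{(\cdot)}$ arising from treating the slower components as frozen (and from the Lipschitz gaps $g(x,\cdot)-g(\lambda_\infty(\cdot),\cdot)$, etc.) must remain negligible uniformly over all $r(k)$. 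I would handle this by deriving the tracking estimates with constants independent of $r(k)$, which is possible because $\lambda_\infty,\Gamma_\infty$ are Lipschitz and vanish at the origin by \textbf{(B6)}--\textbf{(B7)}, so that the scaled slow flow dominates.

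Finally, I would assemble the pieces as in the single-timescale argument of \cite{Borkar_Meyn}: the uniform tracking together with the origin being g.a.s.e. for $\dot{\hat z}=f_\infty(\hat z)$ yields a uniform per-block contraction factor $\delta<1$ once $T$ and $r(k)$ are large; the finitely many small blocks (where $r(k)$ is not yet large) contribute only a bounded additive constant $C$; and the martingale fluctuations are a.s. controlled by \textbf{(B2)}--\textbf{(B3)} via a standard convergence-of-martingales estimate. A discrete Gronwall/iteration argument on $r(k+1)\le\max(1,\delta\,r(k)+C)$ then gives $\sup_k r(k)<\infty$ a.s., and interpolating back across each block, where growth is at most a bounded multiplicative factor, yields $\sup_n(\|x_n\|+\|y_n\|+\|z_n\|)<\infty$ a.s., as required.
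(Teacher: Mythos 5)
Your overall toolbox is the right one (Borkar--Meyn scaling, block-wise contraction driven by the fact that the scaled limiting flows are globally asymptotically stable at points that collapse to the origin), but the architecture you propose --- a \emph{single} scaling sequence $r(k)$ defined on blocks of the \emph{fastest} clock $t(n)=\sum_{m<n}a(m)$, yielding one contraction inequality $r(k+1)\le\max(1,\delta\,r(k)+C)$ for the full norm --- has a genuine gap. On a block of fixed length $T$ in the fast clock, the middle and slow components accumulate only $\sum b(m)\approx T\cdot\frac{b}{a}\to 0$ and $\sum c(m)\approx T\cdot\frac{c}{a}\to 0$ of their own time, so the limiting dynamics on such a block is $\dot x=h_{\infty}(x,y,z)$, $\dot y=0$, $\dot z=0$: the scaled $\hat y$ and $\hat z$ are asymptotically \emph{frozen}, not equilibrated. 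The chained equilibration you invoke (``$\hat x$ equilibrates to $\lambda_\infty(\hat y,\hat z)$, then $\hat y$ to $\Gamma_\infty(\hat z)$, and only then does $f_\infty$ pull $\hat z$ to $0$'') cannot take place within any block of bounded fast-clock length; the second and third stages require block lengths that diverge in fast-clock time. Consequently, when the norm of the triple is dominated by $\|y\|$ or $\|z\|$, a fast-clock block produces no contraction at all, and no uniform $\delta<1$ exists for the recursion you write down. Declaring that the tracking constants can be made ``uniform in $r(k)$'' does not repair this: the obstruction is not uniformity in the scale factor but the divergence of the equilibration time across timescales.

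The paper avoids this by running \emph{three separate} scaling arguments, one per clock, each producing only a \emph{relative} bound: on the fast clock one shows $\|x_n\|\le K_a^{*}(1+\|y_n\|+\|z_n\|)$ (Theorem \ref{fast_thm}), using that when $\hat x$ dominates, $\hat y,\hat z$ are small after rescaling and the frozen-input scaled ODE contracts $\hat x$ toward $\lambda_\infty(\hat y,\hat z)\approx\lambda_\infty(0,0)=0$; on the middle clock, $\|y_n\|\le K_b^{*}(1+\|z_n\|)$ (Theorem \ref{Middle_theorem}), now feeding in the fast-scale bound; and only on the slowest clock does one obtain an absolute bound $\sup_n\|z_n\|<\infty$ (Theorem \ref{slow_thm}), which is then back-substituted. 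To fix your proof you would need to replace the single contraction $r(k+1)\le\delta r(k)+C$ by this hierarchy of conditional, component-wise contractions, each on its own clock with its own scaling sequence and its own external-input perturbation lemmas (the analogues of Lemma \ref{Aux_a_4} with two, one, and zero frozen inputs).
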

\begin{proof}
We begin with the fastest time scale determined by the step size $a(n)$. Consider the following definitions:
\begin{enumerate}
    \item[\bf{(F1)}] Define
    \[t(n) = \sum_{i=0}^{n-1}a(i), n\geq 1, \mbox{ with } t(0) = 0.\]
    Let $\psi_k = (x_k,y_k,z_k), k\geq0$, and 
    \[\bar{\psi}(t) = \psi_{n} + \left(\psi_{n+1} - \psi_{n}\right)\frac{t-t(n)}{t(n+1) - t(n)}, \mbox{ \quad }t \in [t(n), t(n+1)].\]
    
    \item[\bf{(F2)}]Given $t(n), n\geq 0$ and a constant $T>0$ define
    \[T_0 = 0,\]
    \[T_n = \min(t(m):t(m)\geq T_{n-1}+T), n\geq 1.\]
    One can find a subsequence $\{m(n)\}$ such that $T_n = t(m(n))$ $\forall n$ and $m(n) \rightarrow \infty$ as $n\rightarrow \infty$.
    
    \item[\bf{(F3)}] The scaling sequence is defined as:
    \[r(n) = \max(r(n-1),||\bar{\psi}(T_n)||,1),n\geq1.\]
    
    \item[\bf{(F4)}] The scaled iterates for $m(n)\leq k\leq m(n+1)-1$ are:
    \[\hat{x}_{m(n)} = \frac{x_{m(n)}}{r(n)},\mbox{\quad} \hat{y}_{m(n)} = \frac{y_{m(n)}}{r(n)},\mbox{\quad} \hat{z}_{m(n)} = \frac{z_{m(n)}}{r(n)},\]
    \[\hat{x}_{k+1} = \hat{x}_{k} + a(k)\left(\frac{h(c\hat{x}_{k},c\hat{y}_{k},c\hat{z}_{k})}{c} + \hat{M}_{k+1}^{(1)}\right)\]
    \[\hat{y}_{k+1} = \hat{y}_{k} + a(k)\left(\epsilon_{k}^{(2),a} + \hat{M}_{k+1}^{(2)}\right)\]
    \[\hat{z}_{k+1} = \hat{z}_{k} + a(k)\left(\epsilon_{k}^{(3),a} + \hat{M}_{k+1}^{(3)}\right)\]
    where, $c = r(n)$,
    \[\epsilon_{k}^{(2),a} = \frac{b(k)}{a(k)}\frac{g(c\hat{x}_k,c\hat{y}_{k},c\hat{z}_k)}{c}\]
    \[\epsilon_{k}^{(3),a} = \frac{c(k)}{a(k)}\frac{f(c\hat{x}_k,c\hat{y}_{k},c\hat{z}_k)}{c}\]
    \[\hat{M}_{k+1}^{(1)} = \frac{M_{k+1}^{(1)}}{r(n)}, \hat{M}_{k+1}^{(2)} = \frac{b(k)}{a(k)}\frac{M_{k+1}^{(2)}}{r(n)},\hat{M}_{k+1}^{(3)} = \frac{c(k)}{a(k)}\frac{M_{k+1}^{(3)}}{r(n)}.\]
    
    \item[\bf{(F5)}] Next we define the linearly interpolated trajectory for the scaled iterates as follows:
    \[\hat{\psi}(t) = \hat{\psi}_n + (\hat{\psi}_{n+1} - \hat{\psi}_n)\frac{t - t(n)}{t(n+1) - t(n)}, \mbox{\quad} t \in [t(n), t(n+1)].\]
    
    \item[\bf{(F6)}] Let \(\psi_{n}(t) = (x_n(t),y_n(t),z_n(t)), \mbox{\quad} t\in [T_n,T_{n+1}]\) denote the trajectory of the ODE:
    \[\dot{x}(t) = h_{r(n)}(x(t),y(t),z(t)),\]
    \[\dot{y}(t) = 0,\]
    \[\dot{z}(t) = 0,\]
    with $x_n(T_n) = \hat{x}(T_n)$, $y_n(T_n) = \hat{y}(T_n)$ and $z_n(T_n) = \hat{z}(T_n)$.
\end{enumerate}  
    First we state four lemmas for ODEs with two external inputs. The proofs of these lemmas follow exactly as Lemmas 2, 3, 4 and 5 of \cite{chandru-SB}. Subsequently when we analyze the middle timescale (timescale of $\{b(n)\}$) and slow timescale (timescale of $\{c(n)\}$) recursions, we restate the corresponding lemmas for ODEs with one and no external inputs respectively.
    Let $x_{c}^{y(t),z(t)}(t,x)$ and $x_{\infty}^{y(t),z(t)}(t,x)$ denote the solution to the ODEs \[\dot{x}(t) = h_c(x(t),y(t),z(t)), \mbox{\quad}t\geq 0,\]
    \[\dot{x}(t) = h_{\infty}(x(t),y(t),z(t)), \mbox{\quad}t\geq 0,\] respectively,
    with initial condition $x\in \mathbb{R}^{d_1}$ and the external inputs $y(t) \in \mathbb{R}^{d_2}$ and $z(t) \in \mathbb{R}^{d_3}$. Throughout the paper, $B(x,r) \triangleq \{q\in\mathbb{R}^{d_1}\Big|||q-x||<r\}, B(y,r) \triangleq \{q\in\mathbb{R}^{d_2}\Big|||q-y||<r\}$ and $B(z,r) \triangleq \{q\in\mathbb{R}^{d_3}\Big|||q-z||<r\}$ denote the ball of radius $r$ around $x,y$ and $z$ respectively.
    \begin{lemma}
    \label{Aux_a_1}
        Let $K \subset \mathbb{R}^{d_1}$ be a compact set, $y\in\mathbb{R}^{d_2}$ and $z\in \mathbb{R}^{d_3}$ be fixed external inputs. Then under $\bf{(B6)}$, given $\delta>0$, $\exists T_{\delta}>0$ such that $\forall x \in K$
        \[x^{y,z}_{\infty}(t,x) \in B(\lambda_{\infty}(y,z),\delta), \forall \delta\geq T_{\delta}.\]
    \end{lemma}
    \begin{lemma}
    \label{Aux_a_2}
        Let $x\in \mathbb{R}^{d_1}$, $y\in \mathbb{R}^{d_2}$, $z\in \mathbb{R}^{d_3}$, $[0,T]$ be a given time interval and $r>0$. Let $y'(t) \in B(y,r), $ $z'(t) \in B(z,r)$ $\forall t \in [0,T]$, then 
        \[||x_c^{y'(t),z'(t)}(t,x) - x_{\infty}^{y,z}(t,x)|| \leq (\epsilon(c) + 2Lr)Te^{LT}, \mbox{\quad}\forall t \in [0,T],\]
        where $\epsilon(c) \rightarrow 0$ as $c \rightarrow \infty$.
    \end{lemma}
    \begin{lemma}
    \label{Aux_a_3}
        Let $y\in \mathbb{R}^{d_2}$, $z\in \mathbb{R}^{d_3}$ then given $\epsilon >0$ and $T>0$, $\exists c_{\epsilon,T}>0$, $\delta_{\epsilon,T}>0$ and $r_{\epsilon,T}>0$ such that $\forall t \in [0,T)$, $\forall x \in B(\lambda_{\infty}(y,z),\delta_{\epsilon,T})$ $\forall c > c_{\epsilon,T}$ and external inputs $y'(s) \in B(y,r_{\epsilon,T})$ and $z'(s) \in B(z,r_{\epsilon,T})$. Then,
        \[x_c^{y'(t),z'(t)}(t,x) \in B(\lambda_{\infty}(y,z),2\epsilon) \mbox{\quad} \forall t \in [0,T].\]
    \end{lemma}
    \begin{lemma}
    \label{Aux_a_4}
        Let $x \in B(0,1) \subset \mathbb{R}^{d_1}, y\in K'\subset\mathbb{R}^{d_2}, z \in K'' \subset \mathbb{R}^{d_3}$ and let $\bf{(B6)}$ hold. Then given $\epsilon>0, \exists c_{\epsilon}\geq 1, r_{\epsilon}>0$ and $T_{\epsilon}>0$ such that for any external input satisfying \(y'(s) \in B(y,r_{\epsilon})\), \(z'(s) \in B(z,r_{\epsilon})\), $\forall s \in [0,T],$
        \[||x_c^{y'(t),z'(t)}(t,x) - \lambda_{\infty}(y,z)|| \leq 2\epsilon, \mbox{\quad} \forall c > c_{\epsilon},t\geq T_{\epsilon}.\]
    \end{lemma}
\noindent The next lemma uses the convergence result of three time scale iterates under the stability assumption of $\bf{(B5)}$ (Theorem \ref{theorem1}) and shows that the scaled iterates defined in $\bf{(F4)}$ converge.   
\begin{lemma}
    \label{fast1}
    Under $\bf{(B1)}-\bf{(B3)}$,
    \begin{enumerate}
        \item[(i)] For \(0\leq k \leq m(n+1)-m(n)\),
        \(||\hat{\psi}(t(m(n)+k))|| \leq K^{(1)}\)  a.s. for some constant $K^{(1)}>0.$
        
        \item[(ii)] \(\lim_{n\rightarrow\infty}||\hat{\psi}(t) - \psi_n(t)|| = 0 \mbox{ a.s. }\forall t \in [T_n,T_{n+1}]\)
    \end{enumerate}
\end{lemma}
\begin{proof}
    \begin{enumerate}
        \item[(i)] Follows as in (Lemma 4, Chapter-3, pp. 24, \citet{BorkarBook}).
        \item[(ii)] By construction, the iterates $\hat{x}_k$, $\hat{y}_k$, $\hat{z}_k$ remain bounded, i.e., $\sup_{k} (||\hat{x}_k|| + ||\hat{y}_k|| + ||\hat{z}_k||) < \infty$ a.s. Therefore, $\bf{(B1)}$-$\bf{(B4)}$ are satisfied. Using Theorem \ref{theorem1}, the iterates $(\hat{x}_n,\hat{y}_n,\hat{z}_n)$ converges. Using the third extension from Chapter-2 of \cite{BorkarBook}, the iterates $(\hat{x}_n,\hat{y}_n,\hat{z}_n)$ track the ODE system 
        \[\dot{x}(t) = h_{r(n)}(x(t),y(t),z(t)),\]
        \[\dot{y}(t) = 0,\]
        \[\dot{z}(t) = 0.\]
        Therefore, \(\lim_{n\rightarrow\infty}||\hat{\psi}(t) - \psi_n(t)|| = 0 \mbox{ a.s. }\forall t \in [T_n,T_{n+1}]\)
    \end{enumerate}
\end{proof}
In particular, Lemma \ref{fast1}(i) shows that along the fastest timescale between instants $T_n$ and $T_{n+1}$, the norm of the scaled iterate can grow at most by a factor $K^{(1)}$ starting from $B(0,1)$. Next, Lemma \ref{fast1}(ii) shows that the scaled iterate asymptotically tracks the ODE defined in $\bf{(F6)}$.
The next theorem bounds $||x_n||$ in terms of $||y_n||$ and $||z_n||$. We define the linearly interpolated trajectory of the three iterates as: $\forall t \in [t(n), t(n+1)]$,
    \[\bar{x}(t) = x_{n} + \left(x_{n+1} - x_{n}\right)\frac{t-t(n)}{t(n+1) - t(n)},\]
    
    \[\bar{y}(t) = y_{n} + \left(y_{n+1} - y_{n}\right)\frac{t-t(n)}{t(n+1) - t(n)},\]
    
    \[\bar{z}(t) = z_{n} + \left(z_{n+1} - z_{n}\right)\frac{t-t(n)}{t(n+1) - t(n)}.\]

\begin{theorem}
\label{fast_thm}
Under assumptions $\bf{(B1)}$-$\bf{(B4)}$ and $\bf{(B6)}$,
    \begin{enumerate}
        \item[(i)] For $n$ large, and $T = T_{\frac{1}{4}}$ (here $T$ is the sampling frequency as in \textbf{(F2)} and $T_{\frac{1}{4}}$ is $T_{\epsilon}$ as in Lemma \ref{Aux_a_4} with $\epsilon=\frac{1}{4}$), if 
        \(||\bar{x}(T_n)|| > C_a(1 + ||\bar{y}(T_n)|| + ||\bar{z}(T_n)||)\), for some $C_a>0$
        then \(||\bar{x}(T_{n+1})||\leq \frac{3}{4}||\bar{x}(T_n)||\)
        
        \item[(ii)] \(||\bar{x}(T_n)|| \leq C_a^{*}(1+ ||\bar{y}(T_n)||+ ||\bar{z}(T_n)||)\) a.s. for some $C_a^*>0$.
        
        \item[(iii)] \(||x_n|| \leq K_a^*(1+||y_n||+||z_n||), \mbox{ for some } K_a^*>0\)
    \end{enumerate}
\end{theorem}
\begin{proof}
\begin{enumerate}
    \item[(i)] We have \(||\bar{x}(T_n)|| > C_a(1 + ||\bar{y}(T_n)|| + ||\bar{z}(T_n)||).\) Since, \(r(n) = \max(r(n-1),||\bar{\psi}(T_n)||,1)\), this implies \(r(n)\geq||\bar{\psi}(T_n)||\). Therefore, $r(n)\geq C_a$. Next we show that \[||\hat{y}(T_n)||<\frac{1}{C_a} \mbox{ and }||\hat{z}(T_n)||<\frac{1}{C_a}.\]
    For $p\geq1$, 
    \begin{equation*}
        \begin{split}
            ||\hat{y}(T_n)||_p &= \frac{||\bar{y}(T_n)||_p}{r(n)}\leq\frac{||\bar{y}(T_n)||_p}{||\bar{\psi}(T_n)||_p}\\
            &=\frac{||\bar{y}(T_n)||_p}{\Big(||\bar{x}(T_n)||_p^p + ||\bar{y}(T_n)||_p^p + ||\bar{z}(T_n)||_p^p\Big)^{\frac{1}{p}}}
        \end{split}
    \end{equation*}
    Since, \(||\bar{x}(T_n)||_p \geq C_a(1 + ||\bar{y}(T_n)||_p + ||\bar{z}(T_n)||_p)\),
    \begin{equation}
        \begin{split}
            ||\bar{x}(T_n)||_p^p &\geq C_a^p\Big(||\bar{y}(T_n)||_p + ||\bar{z}(T_n)||_p\Big)^p\\
            &\geq C_a^p\Big(||\bar{y}(T_n)||_p^p + ||\bar{z}(T_n)||_p^p\Big)\\
        \end{split}
    \end{equation}
    Therefore,
    \begin{equation*}
        \begin{split}
            ||\hat{y}(T_n)||&\leq\frac{||\bar{y}(T_n)||_p}{\Big(C_a^p+1\Big)^\frac{1}{p}\Big(||\bar{y}(T_n)||_p^p+ ||\bar{z}(T_n)||_p^p\Big)^{\frac{1}{p}}}\\
            &\leq \frac{1}{\Big(1+C_a^p\Big)^{\frac{1}{p}}}\\
            & < \frac{1}{C_a}.
        \end{split}
    \end{equation*}
The second inequality follows from the fact that \(||\bar{y}(T_n)||_p^p \leq ||\bar{y}(T_n)||_p^p + ||\bar{z}(T_n)||_p^p\). A similar analysis proves \(||\hat{z}(T_n)||<\frac{1}{C_a}\). Next we show that 
\[||\hat{x}(T_n)||_p>\frac{1}{1+\frac{1}{C_a}}.\]
Here we are considering the case when iterates are blowing up. Therefore let $r(n) = \bar{\psi}(T_n)$. Then,
\begin{equation*}
\begin{split}
    ||\hat{x}(T_n)|| &= \frac{||\bar{x}(T_n)||}{||\bar{\psi}(T_n)||}\\
    &=\frac{||\bar{x}(T_n)||}{\Big(||\bar{x}(T_n)||_p^p + ||\bar{y}(T_n)||_p^p + ||\bar{z}(T_n)||_p^p\Big)^{\frac{1}{p}}}\\
    &= \frac{1}{\Big(1+\frac{||\bar{y}(T_n)||_p^p+||\bar{z}(T_n)||_p^p}{||\bar{x}(T_n)||_p^p}\Big)^{\frac{1}{p}}}\\
    &> \frac{1}{\Big(1+\frac{||\bar{y}(T_n)||_p^p+||\bar{z}(T_n)||_p^p}{C_a^p(||\bar{y}(T_n)||_p^p+||\bar{z}(T_n)||_p^p)}\Big)^{\frac{1}{p}}}\\
    & > \frac{1}{1+\frac{1}{C_a}}.
\end{split}
\end{equation*}
Let $y'(t-T_n) = y_n(t)$ and $z'(t-T_n) = z_n(t)$ $\forall t \in [T_n, T_{n+1}]$. From lemma \ref{Aux_a_4}, $\exists r_{\frac{1}{4}}, c_{\frac{1}{4}}, T_{\frac{1}{4}}$ such that 
\[||x_{c}^{y'(t),z'(t)}(t,\hat{x}(T_n))||\leq \frac{1}{4}, \forall t \geq T_{\frac{1}{4}}, \forall c \geq c_{\frac{1}{4}},\]
whenever $y'(t) \in B(0,r_{\frac{1}{4}})$ and $z'(t) \in B(0,r_{\frac{1}{4}})$. Choose $C_a > \max(c_{\frac{1}{4}},\frac{2}{r_{\frac{1}{4}}})$ and $T = T_{\frac{1}{4}}$. Since $\dot{y}(t) = 0,$ and $\dot{z}(t) = 0$ for the ODE defined in $\textbf{(F6)}, y'(t-T_n) = y_n(t) = \hat{y}(T_n)$ and $z'(t-T_n) = z_n(t) = \hat{z}(T_n)$ $\forall t \in [T_{n},T_{n+1}].$ From $||\hat{y}(T_n)|| < \frac{1}{C_a}$ and $||\hat{z}(T_n)|| < \frac{1}{C_a}$, it follows that $y'(s) \in B(0,r_{\frac{1}{4}})$ and $z'(s) \in B(0,r_{\frac{1}{4}})$ $\forall s \in [0,T].$ Using Lemma \ref{fast1}(ii), $||\hat{x}(T_{n+1}^{-}) - x_{n}(T_{n+1})|| < \frac{1}{4}$ for large enough $n$. Also observe that $||x_{n}(T_{n+1})|| = ||x^{y'(t),z'(t)}_{r(n)}(T_{n+1} - T_{n}, \hat{x}(T_{n}))|| \leq \frac{1}{4}$. Using these, we have $||\hat{x}(T_{n+1}^{-})|| \leq ||\hat{x}(T_{n+1}^{-})-x_{n}(T_{n+1})|| + ||x_{n}(T_{n+1})|| \leq \frac{1}{2}$. Finally since \[\frac{||\bar{x}(T_{n+1})||}{||\bar{x}(T_{n})||} = \frac{||\hat{x}(T_{n+1}^{-})||}{||\hat{x}(T_n)}||,\] we have 
\begin{equation*}
    \begin{split}
        ||\bar{x}(T_{n+1})|| &= \frac{||\hat{x}(T_{n+1}^{-})||}{||\hat{x}(T_n)||} ||\bar{x}(T_n)||\\
        &< \frac{\frac{1}{2}}{\frac{1}{1+1/C_a}}||\bar{x}(T_n)||
    \end{split}
\end{equation*}
Choosing $C_{a} > \max\left(c_{\frac{1}{4}},\frac{2}{r_{\frac{1}{4}}}\right) > 2$, proves the claim.
\end{enumerate}
(ii) and (iii) follow along the lines of arguments in \cite{chandru-SB} Lemma 6 (ii) and (iii) respectively.
\end{proof}
Next we consider the middle timescale of $\{b(n)\}$ and re-define the following terms:
\begin{enumerate}
    \item[\bf{(M1)}] Define
    \[t(n) = \sum_{i=0}^{n-1}b(i),n\geq1 \mbox{ with } t(0) = 0.\]
    Let $\psi_n = (x_n,y_n,z_n)$ and 
    \[\bar{\psi}(t) = \psi_{n} + \left(\psi_{n+1} - \psi_{n}\right)\frac{t-t(n)}{t(n+1) - t(n)}, \mbox{ \quad }t \in [t(n), t(n+1)].\]
    
    \item[\bf{(M2)}]Given $t(n), n\geq 0$ and a constant $T>0$ define
    \[T_0 = 0,\]
    \[T_n = \min(t(m):t(m)\geq T_{n-1}+T) n\geq1\]
    One can find a subsequence $\{m(n)\}$ such that $T_n = t(m(n))$ $\forall n$, and $m(n) \rightarrow \infty$ as $n\rightarrow \infty$.
    
    \item[\bf{(M3)}] The scaling sequence is defined as:
    \[r(n) = \max(r(n-1),||\bar{\psi}(T_n)||,1), n\geq1\]
    
    \item[\bf{(M4)}] The scaled iterates for $m(n)\leq k\leq m(n+1)-1$ are:
    \[\hat{x}_{m(n)} = \frac{x_{m(n)}}{r(n)},\mbox{\quad} \hat{y}_{m(n)} = \frac{y_{m(n)}}{r(n)},\mbox{\quad} \hat{z}_{m(n)} = \frac{z_{m(n)}}{r(n)},\]
    \[\hat{x}_{k+1} = \hat{x}_{k} + a(k)\left(\frac{h(c\hat{x}_{k},c\hat{y}_{k},c\hat{z}_{k})}{c} + \hat{M}_{k+1}^{(1)}\right),\]
    \[\hat{y}_{k+1} = \hat{y}_{k} + b(k)\left(\frac{g(c\hat{x}_{k},c\hat{y}_{k},c\hat{z}_{k})}{c} + \hat{M}_{k+1}^{(2)}\right),\]
    \[\hat{z}_{k+1} = \hat{z}_{k} + b(k)\left(\epsilon_{k}^{(3),b} + \hat{M}_{k+1}^{(3)}\right),\]
    where, $c = r(n)$,
    \[\epsilon_{k}^{(3),b} = \frac{c(k)}{b(k)}\frac{f(c\hat{x}_k,c\hat{y}_k,c\hat{z}_k)}{c},\]
    \[\hat{M}_{k+1}^{(1)} =  \frac{M_{k+1}^{(1)}}{r(n)},\]
    \[\hat{M}_{k+1}^{(2)} =  \frac{M_{k+1}^{(2)}}{r(n)},\]
    \[\hat{M}_{k+1}^{(3)} = \frac{c(k)}{a(k)}\frac{M_{k+1}^{(3)}}{r(n)}.\]
    
    \item[\bf{(M5)}] Next, we define the linearly interpolated trajectory for the scaled iterates as follows:
    \[\hat{\psi}(t) = \hat{\psi}_n + (\hat{\psi}_{n+1} - \hat{\psi}_n)\frac{t - t(n)}{t(n+1) - t(n)}, \mbox{\quad} t \in [t(n), t(n+1)].\]
    
    \item[\bf{(M6)}] Let \(\psi_{n}(t) = (x_n(t),y_n(t),z_n(t)), \mbox{\quad} t\in [T_n,T_{n+1}]\) denote the trajectory of the ODE:
    \[\dot{x}(t) = h_{r(n)}(x(t),y(t),z(t)),\]
    \[\dot{y}(t) = g_{r(n)}(y(t),z(t)),\]
    \[\dot{z}(t) = 0,\]
    with $x_n(T_n) = \hat{x}(T_n)$, $y_n(T_n) = \hat{y}(T_n)$ and $z_n(T_n) = \hat{z}(T_n)$.
\end{enumerate}  
As before we state a few lemmas for ODEs with one external input. These follow along the lines of Lemmas 2-5 of \cite{chandru-SB}.
    Let $y_{c}^{z(t)}(t,y)$ and $y_{\infty}^{z(t)}(t,y)$ denote the solution to the ODEs \[\dot{y}(t) = g_c(y(t),z(t)), \mbox{\quad}t\geq 0,\]
    \[\dot{y}(t) = g_{\infty}(y(t),z(t)), \mbox{\quad}t\geq 0,\] respectively,
    with initial condition $y\in \mathbb{R}^{d_1}$ and the external input $z(t) \in \mathbb{R}^{d_3}$.
    \begin{lemma}
    \label{middle_Aux_a_1}
        Let $K \subset \mathbb{R}^{d_1}$ be a compact set and $z\in \mathbb{R}^{d_3}$. Then under $\bf{(B6)}$, given $\delta>0$, $\exists T_{\delta}>0$ such that $\forall y \in K$
        \[y^{z}_{\infty}(t,y) \in B(\Gamma_{\infty}(z),\delta), \forall \delta\geq T_{\delta}.\]
    \end{lemma}
    \begin{lemma}
    \label{middle_Aux_a_2}
        Let $y\in \mathbb{R}^{d_2}$, $z\in \mathbb{R}^{d_3}$, $[0,T]$ be a given time interval and $r>0$. Let $z'(t) \in B(z,r), \forall t \in [0,T]$, then 
        \[||y_c^{z'(t)}(t,y) - y_{\infty}^{z}(t,y)|| \leq (\epsilon(c) + Lr)Te^{LT}, \mbox{\quad}\forall t \in [0,T],\]
        where $\epsilon(c) \rightarrow 0$ as $c \rightarrow \infty$.
    \end{lemma}
    \begin{lemma}
    \label{middle_Aux_a_3}
        Let $z\in \mathbb{R}^{d_3}$ then given $\epsilon >0$ and $T>0$, $\exists c_{\epsilon,T}>0$, $\delta_{\epsilon,T}>0$ and $r_{\epsilon,T}>0$ such that $\forall t \in [0,T)$, $\forall y \in B(\Gamma_{\infty}(z),\delta_{\epsilon,T})$ $\forall c > c_{\epsilon,T}$ and external input $z'(s) \in B(z,r_{\epsilon,T})$,
        \[y_c^{z'(t)}(t,y) \in B(\Gamma_{\infty}(z),2\epsilon) \mbox{\quad} \forall t \in [0,T].\]
    \end{lemma}
    \begin{lemma}
    \label{middle_Aux_a_4}
        Let $y \in B(0,1) \subset \mathbb{R}^{d_2}, z\in K'\subset\mathbb{R}^{d_3},$ and $\bf{(B7)}$ holds. Then given $\epsilon>0, \exists c_{\epsilon}\geq 1, r_{\epsilon}>0$ and $T_{\epsilon}>0$ such that for any external input satisfying \(z'(s) \in B(z,r_{\epsilon})\), $\forall s \in [0,T]$,
        \[||y_c^{z'(t)}(t,y) - \Gamma_{\infty}(z)|| \leq 2\epsilon, \mbox{\quad} \forall c > c_{\epsilon}, t\geq T_{\epsilon}.\]
    \end{lemma}
\begin{lemma}
    \label{middle1}
    Under $\bf{(B1)}-\bf{(B3)}$,
    \begin{enumerate}
        \item[(i)] For \(0\leq k \leq m(n+1)-m(n)\),
        \(||\hat{\psi}(t(m(n)+k))|| \leq K^{(2)}\)  a.s. for some constant $K^{(2)}>0.$
        
        \item[(ii)] For sufficiently large $n$, we have  \(\sup_{[T_n,T_{n+1}})||\hat{y}(t) - y_n(t)|| = \epsilon(c) LTe^{L(L+1)T} \mbox{ a.s. where } \epsilon(c)\rightarrow0 \mbox{ as } c\rightarrow\infty\)
    \end{enumerate}
\end{lemma}
\begin{proof}
    See Lemma 9 of \cite{chandru-SB}
\end{proof}
\begin{theorem}
    \label{Middle_theorem}
    Assume $\bf{(B1)}$-${\bf{(B4)}}$ and $\bf{(B6)-(B8)}$ hold. Then, with $C^*_a$ as defined in Theorem \ref{fast_thm},
    \begin{enumerate}
        \item[(i)] For large $n$ and $T=T_{1/8(C^*_a+1)}$ (here $T$ is the sampling frequency as in \textbf{(M2)} and $T_{1/8(C^*_a+1)}$ is $T_{\epsilon}$ as in Lemma \ref{middle_Aux_a_4} with $\epsilon=1/8(C^*_a+1)$), if $||\bar{y}(T_n)|| > C_b(1 + ||\bar{z}(T_n)||)$, for some $C_b>0$, then $||\bar{y}(T_{n+1})|| < \frac{5}{8}||\bar{y}(T_n)||$.
        \item[(ii)] $||\bar{y}(T_n)|| \leq C_b^*\left(1+||\bar{z}(T_n)||\right)$, for some $C_b^*>0$
        \item[(iii)] $||y_n|| \leq K_b^*(1 + ||z_n||)$, for some $K_b^*>0$
    \end{enumerate}
\end{theorem}
\begin{proof}
\begin{enumerate}
    \item[(i)] Since \(||\bar{y}(T_n)||>C_b(1 + ||\bar{z}(T_n)||)\), \(r(n) > C_b\).
We first show that $||\hat{z}(T_n)|| < \frac{1}{C_b}$. 
\[||\hat{z}(T_n)|| = \frac{||\bar{z}(T_n)||_p}{r(n)}\leq\frac{||\bar{z}(T_n)||_p}{(||\bar{y}(T_n)||_p^p + ||\bar{z}(T_n)||_p^p)^\frac{1}{p}}\]
Since $||\bar{y}(T_n)||_p>C_b(1 + ||\bar{z}(T_n)||)$, $||\bar{y}(T_n)||_p^p ) > C_b^p||\bar{z}(T_n)||_p^p$. Therefore,
\begin{equation*}
\begin{split}
     ||\hat{z}(T_n)|| &< \frac{||\bar{z}(T_n)||_p}{\left((1+C_b^p)||\bar{z}(T_n)||_p^p\right)^\frac{1}{p}}\\
     &=\frac{1}{(1+C_b^p)^{\frac{1}{p}}}\\
     &< \frac{1}{C_b}
\end{split}
\end{equation*}
Next we show that $||\hat{y}(T_n)||>\frac{1}{(C^*_a+1)(2+\frac{1}{C_b})}$, where $C^*_a$ is as defined in Theorem \ref{fast_thm}.
Here again we are considering the case when the iterates are blowing up. Therefore let $r(n) = ||\bar{\psi}(T_n)||$. Now, from Theorem \ref{fast_thm} ,we know $||\bar{x}(T_n)||\leq K_a^*(1 + ||\bar{y}(T_n)|| + ||\bar{z}(T_n)||)$ and therefore, $r(n) \leq K_a^*(1 + ||\bar{y}(T_n)|| + ||\bar{z}(T_n)||) + ||\bar{y}(T_n)|| + ||\bar{z}(T_n)||$. With this we have,
\begin{equation*}
    \begin{split}
        ||\hat{y}(T_n)||_p &\geq \frac{||\bar{y}(T_n)||}{C^*_a + (C^*_a + 1)(||\bar{y}(T_n)||_p^p + ||\bar{z}(T_n)||_p^p)^\frac{1}{p}}\\
        & > \frac{1}{C^*_a + (C^*_a + 1)(1 + \frac{1}{C_b})}\\
        &>\frac{1}{(C^*_a + 1)(2 + \frac{1}{C_b})}.
    \end{split}
\end{equation*}
Now we proceed as in Theorem \ref{fast_thm} (i). Let $z'(t-T_n) = z_n(t)$ $\forall t \in [T_n, T_{n+1}]$. From Lemma \ref{middle_Aux_a_4}, $\exists r_{1/8(C^*_a+1)}, c_{1/8(C^*_a+1)}, T_{1/8(C^*_a+1)}>0$ such that 
\[||y_{c}^{z'(t)}(t,\hat{x}(T_n))||\leq \frac{1}{8(C^*_a+1)}, \forall t \geq T_{1/8(C^*_a+1)}, \forall c \geq c_{1/8(C^*_a+1)},\]
whenever $z'(t) \in B(0,r_{1/8(C^*_a+1)})$. Choose $T = T_{1/8(C^*_a+1)}$. Since $\dot{z}(t) = 0$ for the ODE defined in $\textbf{(M6)}$ and $z'(t-T_n) = z_n(t) = \hat{z}(T_n)$ $\forall t \in [T_{n},T_{n+1}]$ and we choose $C_{b} > \max\left(c_{1/8(C^*_a+1)},\frac{2}{r_{1/8(C^*_a+1)}}\right)$ from $||\hat{z}(T_n)|| < \frac{1}{C_b}$, it follows that $z'(s) \in B(0,r_{1/8(C^*_a+1)})$ $\forall s \in [0,T].$ Using Lemma \ref{middle1}(ii), $\exists C_1>0$ s.t. $||\hat{y}(T_{n+1}^{-}) - y_{n}(T_{n+1})|| < \frac{1}{8(C^*_a+1)}$ for large enough $n$ and $r(n)>C_1$. Choose $C_b > \max(c_{1/8(C^*_a+1)},\frac{2}{r_{1/8(C^*_a+1)}}, C_1)$. Also observe that $||y_{n}(T_{n+1})|| = ||y^{z'(t)}_{r(n)}(T_{n+1} - T_{n}, \hat{y}(T_{n}))|| \leq \frac{1}{8(C^*_a+1)}$. Using these, we have $||\hat{y}(T_{n+1}^{-})|| \leq ||\hat{y}(T_{n+1}^{-})-y_{n}(T_{n+1})|| + ||y_{n}(T_{n+1})|| \leq \frac{1}{4(C^*_a+1)}$. Finally since \[\frac{||\bar{y}(T_{n+1})||}{||\bar{y}(T_{n})||} = \frac{||\hat{y}(T_{n+1}^{-})||}{||\hat{y}(T_n)||},\] we have 
\begin{equation*}
    \begin{split}
        ||\bar{y}(T_{n+1})|| &= \frac{||\hat{y}(T_{n+1}^{-})||}{||\hat{y}(T_n)||} ||\bar{y}(T_n)||\\
        &< \frac{\frac{1}{4(C^*_a+1)}}{\frac{1}{(C^*_a+1)(2+1/C_b)}}||\bar{x}(T_n)||\\
        &<\frac{2+\frac{1}{C_b}}{4}
    \end{split}
\end{equation*}
Choosing $C_{b} > \max\left(c_{1/8(C^*_a+1)},\frac{2}{r_{1/8(C^*_a+1)}}, C_1\right) > 2$, proves the claim.

\noindent (ii) and (iii) follow along the lines of arguments in \cite{chandru-SB}, Lemma 6 (ii) and (iii), respectively.
\end{enumerate}
\end{proof}
\noindent Finally we consider the slowest timescale corresponding to $\{c(n)\}$. As before we redefine the terms as follows:
\begin{enumerate}
    \item[\bf{(S1)}] Define
    \[t(n) = \sum_{i=0}^{n-1}c(i), n\geq 0 \mbox{ with } t(0) = 0\]
    Let $\psi_n = (x_n,y_n,z_n)$ and 
    \[\bar{\psi}(t) = \psi_{n} + \left(\psi_{n+1} - \psi_{n}\right)\frac{t-t(n)}{t(n+1) - t(n)}, \mbox{ \quad }t \in [t(n), t(n+1)].\]
    
    \item[\bf{(S2)}]Given $t(n), n\geq 0$ and a constant $T>0$ define
    \[T_0 = 0,\]
    \[T_n = \min(t(m):t(m)\geq T_{n+1}+T), n\geq 1\]
    There exists some subsequence $\{m(n)\}$ such that $T_n = t(m(n))$ and $m(n) \rightarrow \infty$ as $n\rightarrow \infty$.
    
    \item[\bf{(S3)}] The scaling sequence is defined as:
    \[r(n) = \max(r(n-1),||\bar{\psi}(T_n)||,1), n\geq 1\]
    
    \item[\bf{(S4)}] The scaled iterates for $m(n)\leq k\leq m(n+1)-1$ are:
    \[\hat{x}_{m(n)} = \frac{x_{m(n)}}{r(n)},\mbox{\quad} \hat{y}_{m(n)} = \frac{y_{m(n)}}{r(n)},\mbox{\quad} \hat{z}_{m(n)} = \frac{z_{m(n)}}{r(n)},\]
    \[\hat{x}_{k+1} = \hat{x}_{k} + a(k)\left(\frac{h(c\hat{x}_{k},c\hat{y}_{k},c\hat{z}_{k})}{c} + \hat{M}_{k+1}^{(1)}\right),\]
    \[\hat{y}_{k+1} = \hat{y}_{k} + b(k)\left(\frac{g(c\hat{x}_{k},c\hat{y}_{k},c\hat{z}_{k})}{c} + \hat{M}_{k+1}^{(2)}\right),\]
    \[\hat{z}_{k+1} = \hat{z}_{k} + c(k)\left(\frac{f(c\hat{x}_{k},c\hat{y}_{k},c\hat{z}_{k})}{c} + \hat{M}_{k+1}^{(3)}\right),\]
    where, $c = r(n)$,
    \[\hat{M}_{k+1}^{(1)} =  \frac{M_{k+1}^{(1)}}{r(n)}, \hat{M}_{k+1}^{(2)} =  \frac{M_{k+1}^{(2)}}{r(n)}, \hat{M}_{k+1}^{(3)} =\frac{M_{k+1}^{(3)}}{r(n)}.\]
    
    \item[\bf{(S5)}] Next we define the linearly interpolated trajectory for the scaled iterates as follows:
    \[\hat{\psi}(t) = \hat{\psi}_n + (\hat{\psi}_{n+1} - \hat{\psi}_n)\frac{t - t(n)}{t(n+1) - t(n)}, \mbox{\quad} t \in [t(n), t(n+1)].\]
    
    \item[\bf{(S6)}] Let \(\psi_{n}(t) = (x_n(t),y_n(t),z_n(t)), \mbox{\quad} t\in [T_n,T_{n+1}]\) denote the trajectory of the ODE:
    \[\dot{x}(t) = h_{r(n)}(x(t),y(t),z(t)),\]
    \[\dot{y}(t) = g_{r(n)}(y(t),z(t)),\]
    \[\dot{z}(t) = f_{r(n)}(z(t)),\]
    with $x_n(T_n) = \hat{x}(T_n)$, $y_n(T_n) = \hat{y}(T_n)$ and $z_n(T_n) = \hat{z}(T_n)$.
\end{enumerate}
We again state some results on ODEs, this time with no external input. These again follow along the lines of Lemma 2-5 in \cite{chandru-SB}.
Let $z_{c}(t,z)$ and $z_{\infty}(t,z)$ denote the solution to the ODEs \[\dot{z}(t) = f_c(z(t)), \mbox{\quad}t\geq 0,\]
    \[\dot{z}(t) = f_{\infty}(z(t)), \mbox{\quad}t\geq 0,\] respectively
    with initial condition $z\in \mathbb{R}^{d_3}$.
    \begin{lemma}
    \label{slow_Aux_a_1}
        Let $K \subset \mathbb{R}^{d_3}$ be a compact set . Then under $\bf{(B8)}$, given $\delta>0$, $\exists T_{\delta}>0$ such that $\forall z \in K$
        \[z_{\infty}(t,z) \in B(0,\delta), \forall \delta\geq T_{\delta}.\]
    \end{lemma}
    \begin{lemma}
    \label{slow_Aux_a_2}
        Let $z\in \mathbb{R}^{d_3}$, $[0,T]$ be a given time interval and $r>0$. Then 
        \[||z_c(t,z) - z_{\infty}(t,z)|| \leq (\epsilon(c))Te^{LT}, \mbox{\quad}\forall t \in [0,T],\]
        where $\epsilon(c) \rightarrow 0$ as $c \rightarrow \infty$.
    \end{lemma}
    \begin{lemma}
    \label{slow_Aux_a_3}
        Given $\epsilon >0$ and $T>0$ $\exists c_{\epsilon,T}>0$, $\delta_{\epsilon,T}>0$ and $r_{\epsilon,T}>0$ such that $\forall t \in [0,T)$, $\forall z \in B(0,\delta_{\epsilon,T})$, $\forall c > c_{\epsilon,T}$,
        \[z_c(t,z) \in B(0,2\epsilon) \mbox{\quad}, \forall t \in [0,T].\]
    \end{lemma}
    \begin{lemma}
    \label{slow_Aux_a_4}
        Let $z \in B(0,1) \subset \mathbb{R}^{d_3}$ and let $\bf{(B8)}$ hold. Then given $\epsilon>0, \exists c_{\epsilon}\geq 1, r_{\epsilon}>0$ and $T_{\epsilon}>0$, then 
        \[||z_c(t,z)|| \leq 2\epsilon, \mbox{\quad} \forall c > c_{\epsilon}.\]
    \end{lemma}
    \begin{lemma}
    \label{slow1}
    Under $\bf{(B1)}-\bf{(B3)}$,
    \begin{enumerate}
        \item[(i)] For \(0\leq k \leq m(n+1)-m(n)\),
        \(||\hat{\psi}(t(m(n)+k))|| \leq K^{(3)}\)  a.s. for some constant $K^{(3)}>0.$
        
        \item[(ii)] For sufficiently large $n$, we have  \(\sup_{[T_n,T_{n+1})}||\hat{z}(t) - z_n(t)|| = (\epsilon_1(c) + \epsilon_2(c)) LTe^{L(L+1)T} \mbox{ a.s. where } \epsilon(c)\rightarrow0 \mbox{ as } c\rightarrow\infty.\)
    \end{enumerate}
\end{lemma}
\begin{proof}
    See Lemma 9 (ii) and (iii) of \cite{chandru-SB}.
\end{proof}
\begin{theorem}
\label{slow_thm}
    Under assumptions $\bf{(B1)}$-$\bf{(B4)}$ and $\bf{(B6)}-\bf{(B8)}$ , we have:
    \begin{itemize}
        \item[(i)] Let $C^*_a$ and $C^*_b$ be as in Theorems \ref{fast_thm} and \ref{Middle_theorem} respectively. Then, $||\hat{z}(T_n)||\geq\frac{1}{4 + C^*_aC^*_b + C^*_b}$ for sufficiently large $||\bar{z}(T_n)||$.
        \item[(ii)] For $n$ large, $T = T_{\frac{1}{4}}$(here $T$ is the sampling frequency as in \textbf{(F2)} and $T_{\frac{1}{4}}$ is $T_{\epsilon}$ as in Lemma \ref{Aux_a_4} with $\epsilon=\frac{1}{4}$), if $||\bar{z}(T_n)||>C$, for some $C>0$ then $||\bar{z}(T_{n+1})||<\frac{1}{2}||\bar{z}(T_n)||$
        \item[(iii)] $||\bar{z}(T_n)|| \leq K_c^*$ for some $K_c^*>0$.
        \item[(iv)] $\sup_{n}||z_n||<\infty$ a.s.
    \end{itemize}
\end{theorem}
\begin{proof}
\begin{enumerate}
    \item[(i)] From Theorems \ref{fast_thm} and \ref{Middle_theorem} we know that \(||r(n)||<C^*_a( 1 + ||\bar{y}(T_n)|| + ||\bar{z}(T_n)||) + C^*_b( 1 + ||\bar{z}(T_n)||) + ||\bar{z}(T_n)||\). Therefore,
    \begin{equation*}
        \begin{split}
            ||\hat{z}(T_n)|| &= \frac{||\bar{z}(T_n)||}{r(n)}
            > \frac{||\bar{z}(T_n)||}{C^*_a( 1 + ||\bar{y}(T_n)|| + ||\bar{z}(T_n)||) + C^*_b( 1 + ||\bar{z}(T_n)||) + ||\bar{z}(T_n)||}\\
            & > \frac{||\bar{z}(T_n)||}{C^*_a( 1 + C^*_b( 1 + ||\bar{z}(T_n)||) + ||\bar{z}(T_n)||) + C^*_b( 1 + ||\bar{z}(T_n)||) + ||\bar{z}(T_n)||}\\
            &>\frac{1}{\frac{C^*_a}{||\bar{z}(T_n)||} + \frac{C^*_aC^*_b}{||\bar{z}(T_n)||}+ C^*_aC^*_b + C^*_a + \frac{C^*_b}{||\bar{z}(T_n)||} + C^*_b + 1}\\
            & > \frac{1}{4 + C^*_aC^*_b + C^*_b}, \mbox{\qquad for \quad} ||\bar{z}(T_n)|| > \max{(C^*_a, C^*_b, C^*_aC^*_b)}
        \end{split}
    \end{equation*}
    \item[(ii)] Since, $0\in\mathbb{R}^{d_3}$ is the unique globally asymptotically stable equilibrium, therefore using Lemma \ref{slow_Aux_a_4}, $\exists c_{\frac{1}{4}},T_{\frac{1}{4}}>0$, such that $||z_{c}(t,z)||<\frac{1}{4(4 + C^*_aC^*_b + C^*_b)},$ $\forall c \geq c_{\frac{1}{4}}, t \geq T_{\frac{1}{4}}$. Also, for $||\bar{z}(T_{n})|| > \max{(C^*_a, C^*_b, C^*_aC^*_b)}$ we have $||\hat{z}(T_{n})|| > \frac{1}{4 + C^*_aC^*_b + C^*_b}$ and for sufficiently large $n$, from Lemma \ref{slow1}(ii), $\exists C_2>0$ such that 
    $||\hat{z}(T_{n+1}^{-})-z_{n}(T_{n+1})|| < \frac{1}{4(4 + C^*_aC^*_b + C^*_b)}$ for $r(n)>C_2$. We pick $C = \max(c_{1/4},C_1,\max{(C^*_a, C^*_b, C^*_aC^*_b)})$ and $T= T_{1/4}$. For $n$ large it then follows that 
    $||\hat{z}(T_{n+1}^{-})|| \leq ||\hat{z}(T_{n+1}^{-})-z_{n}(T_{n+1})|| + ||z_{n}(T_{n+1})|| \leq \frac{1}{2(4 + K_a^*C^*_b + C^*_b)}$. Finally, since \[\frac{||\bar{z}(T_{n+1})||}{||\bar{z}(T_{n})||} = \frac{||\hat{z}(T_{n+1}^{-})||}{||\hat{z}(T_n)||},\] it follows that
    \[||\bar{z}(T_{n+1})|| < \frac{1}{2}||\bar{z}(T_{n})||.\]
\end{enumerate}
(iii) and (iv) follow along the lines of arguments as in Lemma 10 (iii) and (iv) of \cite{chandru-SB}.
\end{proof}
Now from Theorem \ref{slow_thm} (iii), it follows that the slow timescale iterates $z_{n}$ are bounded a.s. ( $||z_n||<\infty$ a.s. ) which in turn implies that the middle timescale iterates $y_n$ are bounded using Theorem \ref{Middle_theorem} ( i.e., $||y_n|| < \infty$ a.s. ). Finally the fast timescale iterates $x_n$ are bounded because of Theorem \ref{fast_thm} and the fact that both middle timescale and slow timescale iterates are bounded showing $||x_n|| <\infty$ a.s. Combining these we have $\sup_{n} (||x_n|| + ||y_n|| + ||z_n||) < \infty$ a.s, thereby proving Theorem \ref{thm_app_stability}.
\end{proof}
The slightly more general version where each iterate could have small perturbation terms as given below:
\begin{equation}
    x_{n+1} = x_{n} + a(n)\left(h(x_{n},y_{n},z_{n}) + M_{n+1}^{(1)} + \varepsilon^{(1)}_n\right),
\end{equation}
\begin{equation}
    y_{n+1} = y_{n} + b(n)\left(g(x_{n},y_{n},z_{n}) + M_{n+1}^{(2)} + \varepsilon^{(2)}_n\right),
\end{equation}
\begin{equation}
    z_{n+1} = z_{n} + c(n)\left(f(x_{n},y_{n},z_{n}) + M_{n+1}^{(3)} + \varepsilon^{(3)}_n\right),
\end{equation}
with $\epsilon_{n}^{(k)} = o(1), k=1,2,3$ can be shown to converge to the same solution. Since the additional error terms are $o(1)$, their contribution is asymptotically negligible. See arguments in third extension of (Chapter 2, pp. 17 of \citet{Borkar_Book} ) that handles this case for one-timescale iterates. \section*{A3 \quad Convergence of GTD-2 M and TDC-M}
Here we provide the asymptotic convergence guarantees of the momentum variants of the remaining two Gradient TD methods namely \textbf{GTD2-M} and \textbf{TDC-M}. The analysis is similar to that of \textbf{GTD-M} in Theorem \ref{theorem_GTD-M_3TS} and is provided here for completeness. We show that the assumptions \textbf{(B1) - (B7)} of the main paper are satisfied and thereby invoke Theorem \ref{theorem_3TS} to show convergence.
\subsection*{A3.1 \quad Asymptotic convergence of GTD2-M}
We re-write the iterates for GTD2-M below:
\begin{equation}
\label{gtd2_M_1_app}
    \theta_{t+1} = \theta_{t} + \alpha_{t}(\phi_{t} - \gamma\phi_{t}')\phi_{t}^{T}u_{t} + \eta_{t}(\theta_{t} - \theta_{t-1}),
\end{equation}
\begin{equation}
\label{gtd2_M_2_app}
     u_{t+1} = u_{t} + \beta_{t}(\delta_{t} - \phi_{t}^{T}u_{t})\phi_{t}.
\end{equation}
As before, choosing \(\eta_t =\frac{\varrho_t-w\alpha_t}{\varrho_{t-1}}\), where $\{\varrho_t\}$ is a positive sequence and $w\in\mathbb{R}$ is a constant, we can decompose the two iterates into three recursions as below:
\begin{gather}
    \label{GTD2-M-1_app}
    v_{t+1} = v_{t} + \xi_{t}\left((\phi_{t} - \gamma\phi_{t}')\phi_{t}^{T}u_{t} - w v_{t}\right)\\
    \label{GTD2-M-2_app}
    u_{t+1} = u_{t} + \beta_{t} (\delta_{t}\phi_{t} - \phi_t\phi_t^Tu_{t})\\
    \label{GTD2-M-3_app}
    \theta_{t+1} = \theta_{t} + \varrho_{t}(v_{t} + \varepsilon_{t})
\end{gather}
\begin{theorem}
    \label{theorem_GTD2-M_3TS_app}
     Assume $\mathbfcal{A}$\textbf{\textup{\ref{3A1}}}, $\mathbfcal{A}$\textbf{\textup{\ref{3A3}}} and $\mathbfcal{A}$\textbf{\textup{\ref{3A4}}} hold and let $w>0$. Then, the GTD2-M iterates given by \eqref{gtd2_M_1_app} and \eqref{gtd2_M_2_app} satisfy \(\theta_{n} \rightarrow \theta^{*} = -\bar{A}^{-1}\bar{b}\) a.s. as \(n\rightarrow \infty\). 
\end{theorem}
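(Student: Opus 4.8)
The plan is to follow the template of Theorem~\ref{theorem_GTD-M_3TS} essentially verbatim: recast the three recursions \eqref{GTD2-M-1_app}, \eqref{GTD2-M-2_app}, \eqref{GTD2-M-3_app} into the canonical three-timescale form \eqref{main_iter_x}--\eqref{main_iter_z} and then verify assumptions $\bf{(B1)}$--$\bf{(B7)}$ so that Theorem~\ref{theorem_3TS} applies. With $\mathcal{F}_t = \sigma(u_0,v_0,\theta_0,r_{j+1},\phi_j,\phi_j':j<t)$, the fast and slow drifts are unchanged from GTD-M, namely $h(v,u,\theta)=-\bar{A}^T u - wv$ and $f(v,u,\theta)=v$. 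The only structural change is in the middle recursion: the averaging term is now $\phi_t\phi_t^T u_t$ instead of $u_t$, so taking conditional expectations gives $g(v,u,\theta)=\bar{A}\theta+\bar{b}-Cu$, where $C:=\mathbb{E}[\phi_t\phi_t^T]=\Phi^T D\Phi$. The single new ingredient is the observation that, under $\mathbfcal{A}$\textbf{\textup{\ref{3A1}}} (full-rank $\Phi$), $C$ is symmetric positive definite; this matrix plays the role that the identity $-I$ played in the GTD-M analysis.

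I would first dispatch $\bf{(B1)}$--$\bf{(B3)}$, which are essentially identical to the GTD-M case. Linearity of $h,g,f$ gives $\bf{(B1)}$; the step-sizes chosen as in $\mathbfcal{A}$\textbf{\textup{\ref{3A4}}} satisfy $\bf{(B2)}$; and the martingale differences $M^{(1)}_{t+1}=(\bar{A}^T-A_t^T)u_t$, $M^{(2)}_{t+1}=(A_t-\bar{A})\theta_t+(b_t-\bar{b})-(\phi_t\phi_t^T-C)u_t$, $M^{(3)}_{t+1}=0$ satisfy the conditional second-moment bound in $\bf{(B3)}$ because bounded rewards and normalized features make $A_t$, $b_t$ and $\phi_t\phi_t^T$ uniformly bounded; the perturbation $\varepsilon_t^{(3)}=\xi_t((\phi_t-\gamma\phi_t')\phi_t^T u_t - wv_t)\to0$ since $\xi_t\to0$.

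The heart of the argument is $\bf{(B4)}$ together with the matching stability conditions $\bf{(B5)}$--$\bf{(B7)}$, where positive-definiteness of $C$ enters. For $\bf{(B4)}(i)$ the fast ODE $\dot{v}=-\bar{A}^T u - wv$ has the g.a.s.e.\ $\lambda(u,\theta)=-\bar{A}^T u/w$ exactly as before. For $\bf{(B4)}(ii)$ the middle ODE $\dot{u}=\bar{A}\theta+\bar{b}-Cu$ now has g.a.s.e.\ $\Gamma(\theta)=C^{-1}(\bar{A}\theta+\bar{b})$, which is Lipschitz and globally attracting because $-C$ is Hurwitz. Substituting into $\bf{(B4)}(iii)$, the slowest ODE becomes $\dot{\theta}=\lambda(\Gamma(\theta),\theta)=-\tfrac{1}{w}\bar{A}^T C^{-1}(\bar{A}\theta+\bar{b})$, whose equilibrium solves $\bar{A}\theta+\bar{b}=0$, i.e.\ $\theta^*=-\bar{A}^{-1}\bar{b}$; global asymptotic stability follows since the system matrix $-\tfrac{1}{w}\bar{A}^T C^{-1}\bar{A}$ is negative definite for $w>0$ (as $C^{-1}\succ0$ and $\bar{A}$ is nonsingular, whence $(\bar{A}x)^T C^{-1}(\bar{A}x)>0$ for $x\neq0$). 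The scaling limits for $\bf{(B5)}$--$\bf{(B7)}$ are then immediate: the constant term $\bar{b}/c$ vanishes as $c\to\infty$, the limiting ODEs retain the system matrices $-wI$, $-C$ and $-\tfrac{1}{w}\bar{A}^T C^{-1}\bar{A}$, so $\lambda_\infty(u,\theta)=-\bar{A}^T u/w$, $\Gamma_\infty(\theta)=C^{-1}\bar{A}\theta$ and the origin are the respective origin-anchored g.a.s.\ equilibria, with $\lambda_\infty(0,0)=0$ and $\Gamma_\infty(0)=0$.

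With $\bf{(B1)}$--$\bf{(B7)}$ in hand, Theorem~\ref{theorem_3TS} yields $(v_t,u_t,\theta_t)\to(\lambda(\Gamma(\theta^*),\theta^*),\Gamma(\theta^*),\theta^*)$. Finally I would evaluate the limit: $\Gamma(\theta^*)=C^{-1}(\bar{A}(-\bar{A}^{-1}\bar{b})+\bar{b})=0$ and hence $\lambda(0,\theta^*)=0$, so the triple converges to $(0,0,-\bar{A}^{-1}\bar{b})$ and in particular $\theta_t\to\theta^*=-\bar{A}^{-1}\bar{b}$ a.s. The only step demanding genuine care is propagating the positive-definiteness of $C$ through the nested equilibria --- specifically verifying that $\Gamma$ and the composite slow-timescale matrix $\bar{A}^T C^{-1}\bar{A}$ remain well-defined and stable; every other step is a routine linear-algebra repetition of the GTD-M argument.
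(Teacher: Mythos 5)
Your proposal is correct and follows essentially the same route as the paper's own proof in Appendix A3.1: the identical decomposition into the canonical three-timescale form, the same drift functions $h$, $g$, $f$ and martingale terms, and the same verification of $\bf{(B1)}$--$\bf{(B7)}$ with $\bar{C}=\mathbb{E}[\phi_t\phi_t^T]$ replacing the identity in the middle recursion, so that $\Gamma(\theta)=\bar{C}^{-1}(\bar{A}\theta+\bar{b})$ and the slow-timescale matrix becomes $-\tfrac{1}{w}\bar{A}^T\bar{C}^{-1}\bar{A}$. Your explicit remark that positive definiteness of $\bar{C}$ (from the full-rank assumption on $\Phi$) is the one genuinely new ingredient is exactly the point the paper's proof hinges on.
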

\begin{proof}
We transform the iterates given by \eqref{GTD2-M-1_app}, \eqref{GTD2-M-2_app} and \eqref{GTD2-M-3_app} into the standard SA form given by \eqref{main_iter_x}, \eqref{main_iter_y} and \eqref{main_iter_z}. Let $\mathcal{F}_t = \sigma(u_0, v_0, \theta_0, r_{j+1},\phi_j, \phi_j': j < t)$. Let, $A_t = \phi_t(\gamma\phi_t'-\phi_t)^T$ and $b_t = r_{t+1}\phi_t$. Then, \eqref{GTD2-M-1_app} can be re-written as:
\begin{equation*}
    v_{t+1} = v_t + \xi_{t}\left(h(v_t,u_t,\theta_t) + M_{t+1}^{(1)}\right)
\end{equation*}
where,
\begin{equation*}
    \begin{split}
        h(v_t,u_t,\theta_t) &= \mathbb{E}[(\phi_t - \gamma\phi_t')\phi_t^Tu_t - w v_t|\mathcal{F}_t]\\ 
        &= -\bar{A}^Tu_t-wv_t .\\
        M_{t+1}^{(1)} = -A_t^Tu_t& - w v_t - h(v_t,u_t,\theta_t)  = (\bar{A}^T-A_t^T)u_t .
    \end{split}
\end{equation*}
Next, \eqref{GTD2-M-2_app} can be re-written as:
\begin{equation*}
    \begin{split}
        u_{t+1} &= u_t + \beta_t\left(g(v_t,u_t,\theta_t) + M_{t+1}^{(2)}\right)\\
\end{split}
\end{equation*}
where,
\begin{equation*}
    \begin{split}
        g(v_t,u_t,\theta_t) &= \mathbb{E}[\delta_t\phi_t - \phi_t\phi_t^T u_t|\mathcal{F}_t]
        = \bar{A}\theta_t + \bar{b} - \bar{C}u_t\\
        M_{t+1}^{(2)} &= A_t\theta_t + b_t - C_t u_t - g(v_t,u_t,\theta_t)\\ 
        &= (A_t-\bar{A})\theta_t + (b_t-\bar{b}) + (\bar{C} - C_t)u_t.
    \end{split}
\end{equation*}
Here, $C_t = \phi_t\phi_t^T$ and $\bar{C} = \mathbb{E}[\phi_t\phi_t^T]$. Finally, \eqref{GTD2-M-3_app} can be re-written as:
\begin{equation*}
    \begin{split}
        \theta_{t+1} = \theta_t + \varrho_{t}\left(f(v_t,u_t,\theta_t) + \varepsilon_t + M_{t+1}^{(3)}\right)
    \end{split}
\end{equation*}
where,
\[
        f(v_t,u_t,\theta_t) = v_t \mbox{ and }
        M_{t+1}^{(3)} = 0.
\]
The functions $h,g,f$ are linear in $v,u,\theta$ and hence Lipchitz continuous, therefore satisfying $\bf{(B1)}$. We choose the step-size sequences such that they satisfy $\bf{(B2)}$. One popular choice is \[\xi_t = \frac{1}{(t+1)^{\xi}}, \beta_t = \frac{1}{(t+1)^{\beta}}, \varrho_t= \frac{1}{(t+1)^{\varrho}},\frac{1}{2}<\xi<\beta<\varrho\leq1.\]
Next, $M_{t+1}^{(1)},M_{t+1}^{(2)}$ and $M_{t+1}^{(3)}$ $t\geq0$, are martingale difference sequences w.r.t $\mathcal{F}_t$ by construction. Next, \[\mathbb{E}[||M_{t+1}^{(1)}||^2|\mathcal{F}_t] \leq ||(\bar{A}^T - A_t^T)||^2 ||u_t||^2,\] \[\mathbb{E}[||M_{t+1}^{(2)}||^2|\mathcal{F}_t] \leq 3(||(A_t-\bar{A})||^2 ||\theta_t||^2 + ||(b_t-\bar{b})||^2 + ||(\bar{C}-C_t)||^2||u_t||^2).\] The first part of $\bf{(B3)}$ is satisfied with $K_1 = ||(\bar{A}^T-A_{t}^T)||^2$, $K_2 = 3\max(||A_t - \bar{A}||^2,||b_t-\bar{b}||^2,||(\bar{C}-C_t)||^2)$ and any $K_3>0$.
The fact that $K_1,K_2<\infty$ follows from the bounded features and bounded rewards assumption in $\mathbfcal{A}$\textbf{\textup{\ref{3A1}}}. Next, observe that $||\varepsilon_t^{(3)}||=\xi_t||\left((\phi_t - \gamma\phi_t')\phi_t^Tu_t - w v_t\right)||\rightarrow0$ since $\xi_t\rightarrow0 \mbox{ as } t\rightarrow\infty$. For a fixed $u,\theta \in \mathbb{R}^d$, consider the ODE
\[\dot{v}(t) = -\bar{A}^Tu - w v(t).\]
For $w>0$, $\lambda(u,\theta) = -\frac{\bar{A}^Tu}{w}$ is the unique g.a.s.e, is linear and therefore Lipchitz continuous. This satisfies $\bf{(B4)}$(i). Next, for a fixed $\theta \in \mathbb{R}^d$,
\[\dot{u}(t) = \bar{A}\theta + \bar{b} -\bar{C}u(t),\]
has $\Gamma(\theta) = \bar{C}^{-1}(\bar{A}\theta + \bar{b})$ as its unique g.a.s.e because $-\bar{C}^{-1}$ is negative definite. Also $\Gamma(\theta)$ is linear in $\theta$ and therefore Lipschitz. This satisfies $\bf{(B4)}(ii)$. Finally, to satisfy $\bf{(B4)}(iii)$, consider,
\begin{equation*}
    \begin{split}
        \dot{\theta}(t)
       & = \frac{-\bar{A}^T\bar{C}^{-1}\bar{A}\theta(t)-\bar{A}^T\bar{C}^{-1}\bar{b}}{w}.
    \end{split}
\end{equation*}
Since $\bar{A}$ is negative definite and $\bar{C}$ is positive definite, therefore, $-\bar{A}^T\bar{C}^{-1}\bar{A}$ is negative definite. Therefore, $\theta^* = -\bar{A}^{-1}\bar{b}$ is the unique g.a.s.e. 

Next, we show that the sufficient conditions for stability of the three iterates are satisfied. The function, $h_c(v,u,\theta) = \frac{-c\bar{A}^Tu-wcv}{c} = -\bar{A}^Tu-wv \rightarrow h_{\infty}(v,u,\theta) = -\bar{A}^Tu-wv$ uniformly on compacts as $c\rightarrow\infty$. The limiting ODE: 
\[\dot{v}(t) = -\bar{A}^Tu-wv(t)\]
has $\lambda_{\infty}(u,\theta) = -\frac{\bar{A}^Tu}{w}$ as its unique g.a.s.e. $\lambda_{\infty}$ is Lipschitz with $\lambda_{\infty}(0,0) = 0$, thus satisfying assumption $\bf{(B5)}$.

\noindent The function, $g_c(u,\theta) = \frac{c\bar{A}\theta + \bar{b} - c\bar{C}u}{c} = \bar{A}\theta-\bar{C}u+\frac{\bar{b}}{c} \rightarrow g_{\infty}(u,\theta) = \bar{A}\theta - \bar{C}u$ uniformly on compacts as $c\rightarrow\infty$. The limiting ODE \[\dot{u}(t) = \bar{A}\theta - \bar{C}u(t)\]
has $\Gamma_{\infty}(\theta) = \bar{C}^{-1}\bar{A}\theta$ as its unique g.a.s.e. since $-\bar{C}$ is negative definite. $\Gamma_{\infty}$ is Lipchitz with $\Gamma_{\infty}(0) = 0$. Thus assumption $\bf{(B6)}$ is satisfied.

\noindent Finally, $f_c(\theta) = \frac{-c\bar{A}^T\bar{C}^{-1}\bar{A}\theta}{cw} \rightarrow f_{\infty} = \frac{-\bar{A}^T\bar{A}\theta}{w}$ uniformly on compacts as $c\rightarrow \infty$ and the ODE:
\[\dot{\theta}(t) = -\frac{\bar{A}^T\bar{C}^{-1}\bar{A}\theta(t)}{w}\]
has origin in $\mathbb{R}^d$as its unique g.a.s.e. This ensures the final condition $\bf{(B7)}$. By theorem \ref{theorem_3TS}, 
\[
\begin{pmatrix}
    v_t\\
    u_t\\
    \theta_t
\end{pmatrix}
\rightarrow
\begin{pmatrix}
    \lambda(\Gamma(-\bar{A}^{-1}\bar{b}),-\bar{A}^{-1}\bar{b})\\
    \Gamma(-\bar{A}^{-1}\bar{b})\\
    -\bar{A}^{-1}\bar{b}.
\end{pmatrix}
=
\begin{pmatrix}
    0\\
    0\\
    -\bar{A}^{-1}\bar{b}.
\end{pmatrix}
\]
Specifically, $\theta_t \rightarrow -\bar{A}^{-1}\bar{b}$.
\end{proof}
\subsection*{A3.2 \quad Asymptotic Convergence of TDC-M}
We re-write the iterates for TDC-M below:
\begin{equation}
\begin{split}
\label{tdc_M_1_app}
    \theta_{t+1} = \theta_{t} + \alpha_{t}(\delta_{t}\phi_{t} - \gamma\phi_{t}'(\phi_{t}^{T}u_{t}))+ \eta_{t}(\theta_{t} - \theta_{t-1}),
\end{split}
\end{equation}
\begin{equation}
\label{tdc_M_2_app}
     u_{t+1} = u_{t} + \beta_{t}(\delta_{t} - \phi_{t}^{T}u_{t})\phi_{t}.
\end{equation}
As before, choosing \(\eta_t =\frac{\varrho_t-w\alpha_t}{\varrho_{t-1}}\), where $\{\varrho_t\}$ is a positive sequence and $w\in\mathbb{R}$ is a constant, we can decompose the two iterates into three recursions as below:
\begin{gather}
    \label{TDC-M-1_app}
    v_{t+1} = v_{t} + \xi_{t}\left(\delta_t\phi_t - \gamma\phi_{t}'\phi_{t}^{T}u_{t} - w v_{t}\right)\\
    \label{TDC-M-2_app}
    u_{t+1} = u_{t} + \beta_{t} (\delta_{t}\phi_{t} - \phi_t\phi_t^Tu_{t})\\
    \label{TDC-M-3_app}
    \theta_{t+1} = \theta_{t} + \varrho_{t}(v_{t} + \varepsilon_{t})
\end{gather}
\begin{theorem}
    \label{theorem_TDC-M_3TS_app}
     Assume $\mathbfcal{A}$\textbf{\textup{\ref{3A1}}}, $\mathbfcal{A}$\textbf{\textup{\ref{3A3}}} and $\mathbfcal{A}$\textbf{\textup{\ref{3A4}}} hold and let $w>0$. Then, the TDC-M iterates given by \eqref{tdc_M_1_app} and \eqref{tdc_M_2_app} satisfy \(\theta_{n} \rightarrow \theta^{*} = -\bar{A}^{-1}\bar{b}\) a.s. as \(n\rightarrow \infty\). 
\end{theorem}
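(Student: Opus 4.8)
The plan is to mirror the proof of Theorem \ref{theorem_GTD-M_3TS}: recast the three recursions \eqref{TDC-M-1_app}, \eqref{TDC-M-2_app}, \eqref{TDC-M-3_app} into the canonical three-timescale form \eqref{main_iter_x}, \eqref{main_iter_y}, \eqref{main_iter_z} with $x=v$, $y=u$, $z=\theta$, verify assumptions \textbf{(B1)}--\textbf{(B7)}, and then invoke Theorem \ref{theorem_3TS}. With $\mathcal{F}_t = \sigma(u_0,v_0,\theta_0,r_{j+1},\phi_j,\phi_j':j<t)$, $A_t = \phi_t(\gamma\phi_t'-\phi_t)^T$, $b_t = r_{t+1}\phi_t$ and $C_t = \phi_t\phi_t^T$, the only genuinely new ingredient relative to GTD2-M is the term $\gamma\phi_t'\phi_t^T u_t$ in the $v$-recursion. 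The first computation I would carry out is $\gamma\,\mathbb{E}[\phi_t'\phi_t^T] = \bar A^T + \bar C$, obtained by transposing $\bar A = \gamma\,\mathbb{E}[\phi_t(\phi_t')^T] - \bar C$. Writing $\delta_t\phi_t = b_t + A_t\theta_t$ and taking conditional expectations then gives the $v$-drift $h(v_t,u_t,\theta_t)=\bar A\theta_t+\bar b-(\bar A^T+\bar C)u_t-wv_t$ with martingale noise $M_{t+1}^{(1)} = (A_t-\bar A)\theta_t + (b_t-\bar b) - \gamma(\phi_t'\phi_t^T - \mathbb{E}[\phi_t'\phi_t^T])u_t$. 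The $u$- and $\theta$-recursions are identical to GTD2-M, so $g(v_t,u_t,\theta_t)=\bar A\theta_t+\bar b-\bar C u_t$ (independent of $v$), $M^{(2)}_{t+1}=(A_t-\bar A)\theta_t+(b_t-\bar b)+(\bar C-C_t)u_t$, and $f(v_t,u_t,\theta_t)=v_t$, $M^{(3)}_{t+1}=0$, with perturbation $\varepsilon_t$.

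The routine assumptions I would dispatch exactly as before. The maps $h,g,f$ are affine and hence Lipschitz, giving \textbf{(B1)}; the step sizes are chosen as in $\mathbfcal{A}$\ref{3A4}, giving \textbf{(B2)}; and by $\mathbfcal{A}$\ref{3A1} the features and rewards are bounded, so $\mathbb{E}[\|M^{(1)}_{t+1}\|^2\mid\mathcal{F}_t]$ and $\mathbb{E}[\|M^{(2)}_{t+1}\|^2\mid\mathcal{F}_t]$ are dominated by $K_i(1+\|u_t\|^2+\|\theta_t\|^2)$, giving \textbf{(B3)}; the error term $\varepsilon_t^{(3)} = \xi_t(\delta_t\phi_t - \gamma\phi_t'\phi_t^T u_t - wv_t)$ vanishes because $\xi_t\to0$.

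For \textbf{(B4)} I would treat the three nested ODEs in turn. For fixed $u,\theta$, $\dot v = \bar A\theta+\bar b-(\bar A^T+\bar C)u-wv$ has, for $w>0$, the unique g.a.s.e. $\lambda(u,\theta)=\tfrac1w(\bar A\theta+\bar b-(\bar A^T+\bar C)u)$, which is linear hence Lipschitz, settling (i). Since $g$ does not depend on $v$, the middle ODE is $\dot u = \bar A\theta+\bar b-\bar C u$ with unique g.a.s.e. $\Gamma(\theta)=\bar C^{-1}(\bar A\theta+\bar b)$ because $-\bar C$ is negative definite, settling (ii). For (iii), as $f=v$ the slow ODE is $\dot\theta=\lambda(\Gamma(\theta),\theta)$, and here lies the crux: substituting $\Gamma(\theta)$ and using $(\bar A^T+\bar C)\bar C^{-1}=\bar A^T\bar C^{-1}+I$, the $(\bar A\theta+\bar b)$ contribution cancels against its own coefficient, leaving
\[
\dot\theta(t) = \frac{-\bar A^T\bar C^{-1}\bar A\,\theta(t)-\bar A^T\bar C^{-1}\bar b}{w}.
\]
Exactly as in GTD2-M, $-\bar A^T\bar C^{-1}\bar A$ is negative definite, so the unique g.a.s.e. is $\theta^* = -(\bar A^T\bar C^{-1}\bar A)^{-1}\bar A^T\bar C^{-1}\bar b = -\bar A^{-1}\bar b$. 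This cross-term cancellation is the one step I expect to be the real content of the argument; everything else is parallel to the GTD2-M case.

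Finally, for stability I would verify \textbf{(B5)}--\textbf{(B7)} via the scaling limits: the constants $\bar b/c$ drop out as $c\to\infty$, giving $h_\infty(v,u,\theta)=\bar A\theta-(\bar A^T+\bar C)u-wv$ and $g_\infty(u,\theta)=\bar A\theta-\bar C u$, with $\lambda_\infty(u,\theta)=\tfrac1w(\bar A\theta-(\bar A^T+\bar C)u)$ and $\Gamma_\infty(\theta)=\bar C^{-1}\bar A\theta$, both Lipschitz and vanishing at the origin. The same cancellation reappears in \textbf{(B7)}, yielding $f_\infty(\theta)=\lambda_\infty(\Gamma_\infty(\theta),\theta)=-\bar A^T\bar C^{-1}\bar A\theta/w$, whose ODE has the origin as g.a.s.e. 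Theorem \ref{theorem_3TS} then gives $(v_t,u_t,\theta_t)\to(\lambda(\Gamma(\theta^*),\theta^*),\Gamma(\theta^*),\theta^*)$; since $\Gamma(\theta^*)=\bar C^{-1}(\bar A(-\bar A^{-1}\bar b)+\bar b)=0$ and then $\lambda(0,\theta^*)=\tfrac1w(-\bar b+\bar b)=0$, I would conclude $(v_t,u_t,\theta_t)\to(0,0,-\bar A^{-1}\bar b)$, and in particular $\theta_t\to-\bar A^{-1}\bar b$ a.s.
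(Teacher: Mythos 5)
Your proposal is correct and follows essentially the same route as the paper: recast \eqref{TDC-M-1_app}--\eqref{TDC-M-3_app} into the form \eqref{main_iter_x}--\eqref{main_iter_z}, verify \textbf{(B1)}--\textbf{(B7)}, and invoke Theorem \ref{theorem_3TS}, with the slow ODE reducing to $\dot\theta = -\bar A^T\bar C^{-1}(\bar A\theta+\bar b)/w$. The only cosmetic difference is that you substitute $\gamma\,\mathbb{E}[\phi_t'\phi_t^T]=\bar A^T+\bar C$ up front to make the cross-term cancellation explicit, whereas the paper keeps $\gamma\,\mathbb{E}[\phi_t'\phi_t^T]$ and performs the equivalent simplification $\bigl(I-\gamma\,\mathbb{E}[\phi_t'\phi_t^T]\bar C^{-1}\bigr)\bar A=-\bar A^T\bar C^{-1}\bar A$ at the end; both yield the same limit $(0,0,-\bar A^{-1}\bar b)$.
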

\begin{proof}
We transform the iterates given by \eqref{TDC-M-1_app}, \eqref{TDC-M-2_app} and \eqref{TDC-M-3_app} into the standard SA form given by \eqref{main_iter_x}, \eqref{main_iter_y} and \eqref{main_iter_z}. Let $\mathcal{F}_t = \sigma(u_0, v_0, \theta_0, r_{j+1},\phi_j, \phi_j': j < t)$. Let, $A_t = \phi_t(\gamma\phi_t'-\phi_t)^T$ and $b_t = r_{t+1}\phi_t$. Then, \eqref{TDC-M-1_app} can be re-written as:
\begin{equation*}
    v_{t+1} = v_t + \xi_{t}\left(h(v_t,u_t,\theta_t) + M_{t+1}^{(1)}\right)
\end{equation*}
where,
\begin{equation*}
    \begin{split}
        h(v_t,u_t,\theta_t) &= \mathbb{E}[\delta_t\phi_t - \gamma\phi_{t}'\phi_{t}^{T}u_{t} - w v_{t}|\mathcal{F}_t]\\ 
        &=\bar{A}\theta_t + \bar{b} -\gamma\mathbb{E}[\phi_t'\phi_t^T]u_t-wv_t .\\
        M_{t+1}^{(1)} &= \delta_t\phi_t - \gamma\phi_{t}'\phi_{t}^{T}u_{t} - w v_{t} - h(v_t,u_t,\theta_t)\\ & = (A_t-\bar{A})\theta_t + (b_t-\bar{b}) + \gamma(\mathbb{E}[\phi_t'\phi_t^T] - \phi_t'\phi_t^T)u_t.
    \end{split}
\end{equation*}
Next, \eqref{TDC-M-1_app} can be re-written as:
\begin{equation*}
    \begin{split}
        u_{t+1} &= u_t + \beta_t\left(g(v_t,u_t,\theta_t) + M_{t+1}^{(2)}\right)\\
\end{split}
\end{equation*}
where,
\begin{equation*}
    \begin{split}
        g(v_t,u_t,\theta_t) &= \mathbb{E}[\delta_t\phi_t - \phi_t\phi_t^T u_t|\mathcal{F}_t]
        = \bar{A}\theta_t + \bar{b} - \bar{C}u_t\\
        M_{t+1}^{(2)} &= A_t\theta_t + b_t - C_t u_t - g(v_t,u_t,\theta_t)\\ 
        &= (A_t-\bar{A})\theta_t + (b_t-\bar{b}) + (\bar{C} - C_t)u_t.
    \end{split}
\end{equation*}
Here, $C_t = \phi_t\phi_t^T$ and $\bar{C} = \mathbb{E}[\phi_t\phi_t^T]$. Finally, \eqref{TDC-M-1_app} can be re-written as:
\begin{equation*}
    \begin{split}
        \theta_{t+1} = \theta_t + \varrho_{t}\left(f(v_t,u_t,\theta_t) + \varepsilon_t + M_{t+1}^{(3)}\right)
    \end{split}
\end{equation*}
where,
\[
        f(v_t,u_t,\theta_t) = v_t \mbox{ and }
        M_{t+1}^{(3)} = 0.
\]
The functions $h,g,f$ are linear in $v,u,\theta$ and hence Lipchitz continuous, therefore satisfying $\bf{(B1)}$. We choose the step-size sequences such that they satisfy $\bf{(B2)}$. One popular choice is \[\xi_t = \frac{1}{(t+1)^{\xi}}, \beta_t = \frac{1}{(t+1)^{\beta}}, \varrho_t= \frac{1}{(t+1)^{\varrho}}, \frac{1}{2}<\xi<\beta<\varrho\leq1.\]
Observe that, $M_{t+1}^{(1)},M_{t+1}^{(2)}$ and $M_{t+1}^{(3)}$ $t\geq0$, are martingale difference sequences w.r.t $\mathcal{F}_t$ by construction. Next, \[\mathbb{E}[||M_{t+1}^{(1)}||^2|\mathcal{F}_t] \leq 3(||(A_t-\bar{A})||^2||\theta_t||^2 + ||(b_t-\bar{b})||^2 + \gamma(||\mathbb{E}[\phi_t^{'}\phi_t^T]-\phi_t'\phi_t^T||^2)||u_t||^2),\]
\[ \mathbb{E}[||M_{t+1}^{(2)}||^2|\mathcal{F}_t] \leq 3(||(A_t-\bar{A})||^2 ||\theta_t||^2 + ||(b_t-\bar{b})||^2 + ||(\bar{C}-C_t)||^2||u_t||^2)\]. 
The first part of $\bf{(B3)}$ is satisfied with $K_1 =3 \max(||(A_t-\bar{A})||^2, ||(b_t-\bar{b})||^2, \gamma(||\mathbb{E}[\phi_t^{'}\phi_t^T]-\phi_t'\phi_t^T||^2))$, $K_2 = 3\max(||A_t - \bar{A}||^2,||b_t-\bar{b}||^2,||(\bar{C}-C_t)||^2)$ and any $K_3>0$.
The fact that $K_1,K_2<\infty$ follows from the bounded features and bounded rewards assumption in $\mathbfcal{A}$\textbf{\textup{\ref{3A1}}}. Next, observe that $||\varepsilon_t^{(3)}||=\xi_t||\left((\phi_t - \gamma\phi_t')\phi_t^Tu_t - w v_t\right)||\rightarrow0$ since $\xi_t\rightarrow0 \mbox{ as } t\rightarrow\infty$. 
For a fixed $u,\theta \in \mathbb{R}^d$, consider the ODE
\[\dot{v}(t) = \bar{A}\theta + \bar{b} -\gamma\mathbb{E}[\phi_t'\phi_t^T]u-wv(t).\]
For $w>0$, $\lambda(u,\theta) = \frac{\bar{A}\theta+\bar{b}-\gamma\mathbb{E}[\phi_t'\phi_t^T]u}{w}$ is the unique g.a.s.e, is linear and therefore Lipchitz continuous. This satisfies $\bf{(B4)}$(i). Next, for a fixed $\theta \in \mathbb{R}^d$,
\[\dot{u}(t) = \bar{A}\theta + \bar{b} -\bar{C}u(t),\]
has $\Gamma(\theta) = \bar{C}^{-1}(\bar{A}\theta + \bar{b})$ as its unique g.a.s.e because $-\bar{C}^{-1}$ is negative definite. Also $\Gamma(\theta)$ is linear in $\theta$ and therefore Lipschitz. This satisfies $\bf{(B4)}(ii)$. Finally, to satisfy $\bf{(B4)}(iii)$, consider,
\begin{equation*}
    \begin{split}
        \dot{\theta}(t)
       & = \frac{(I-\gamma\mathbb{E}[\phi_t'\phi_t^T]\bar{C}^{-1})(\bar{A}\theta(t) + \bar{b})}{w}.
    \end{split}
\end{equation*}

Now, $(I-\gamma\mathbb{E}[\phi_t'\phi_t^T]\bar{C}^{-1})\bar{A}$ = $(\mathbb{E}[\phi_t\phi_t^T]-\gamma\mathbb{E}[\phi_t'\phi_t^T])\bar{C}^{-1}\bar{A} = \mathbb{E}[(\phi_t - \gamma\phi_t')\phi_t^T]\bar{C}^{-1}\bar{A} = -\bar{A}^T\bar{C}^{-1}\bar{A}$. Since, $\bar{A}$ is negative definite and $\bar{C}$ is positive definite, therefore $ -\bar{A}^T\bar{C}^{-1}\bar{A}$ is negative definite and hence the above ODE has $\theta^* = -\bar{A}^{-1}\bar{b}$ as its unique g.a.s.e. 

\noindent Next, we show that the sufficient conditions for stability of the three iterates are satisfied. The function, $h_c(v,u,\theta) = \frac{c\bar{A}\theta + \bar{b}-c\gamma\mathbb{E}[\phi_t'\phi_t^{T}]u-cwv}{c} = \bar{A}\theta_t-\gamma\mathbb{E}[\phi_t'\phi_t^{T}]u_t-wv_t \rightarrow h_{\infty}(v,u,\theta) = \bar{A}\theta_t-\gamma\mathbb{E}[\phi_t'\phi_t^{T}]u_t-wv_t$ uniformly on compacts as $c\rightarrow\infty$. The limiting ODE: 
\[\dot{v}(t) = \bar{A}\theta_t-\gamma\mathbb{E}[\phi_t'\phi_t^{T}]u_t-wv(t)\]
has $\lambda_{\infty}(u,\theta) = \frac{\bar{A}\theta-\gamma\mathbb{E}[\phi_t'\phi_t^{T}]u}{w}$ as its unique g.a.s.e. $\lambda_{\infty}$ is Lipschitz with $\lambda_{\infty}(0,0) = 0$, thus satisfying assumption $\bf{(B5)}$.

\noindent The function, $g_c(u,\theta) = \frac{c\bar{A}\theta + \bar{b} - c\bar{C}u}{c} = \bar{A}\theta-\bar{C}u+\frac{\bar{b}}{c} \rightarrow g_{\infty}(u,\theta) = -\bar{A}\theta - \bar{C}u$ uniformly on compacts as $c\rightarrow\infty$. The limiting ODE 
\[\dot{u}(t) = \bar{A}\theta - \bar{C}u(t)\]
has $\Gamma_{\infty}(\theta) = \bar{C}^{-1}\bar{A}\theta$ as its unique g.a.s.e. since $-\bar{C}$ is negative definite. $\Gamma_{\infty}$ is Lipschitz with $\Gamma_{\infty}(0) = 0$. Thus assumption $\bf{(B6)}$ is satisfied.

\noindent Finally, $f_c(\theta) = \frac{c\bar{A}\theta-c\gamma\mathbb{E}[\phi_t'\phi_t^T]\bar{C}^{-1}\bar{A}\theta}{cw} \rightarrow f_{\infty} = \frac{(I-\gamma\mathbb{E}[\phi_t'\phi_t^T]\bar{C}^{-1})\bar{A}\theta}{w}$ uniformly on compacts as $c\rightarrow \infty$. Consider the ODE:
\[\dot{\theta}(t) = \frac{(I-\gamma\mathbb{E}[\phi_t'\phi_t^T]\bar{C}^{-1})\bar{A}\theta(t)}{w}.\]

Now, $(I-\gamma\mathbb{E}[\phi_t'\phi_t^T]\bar{C}^{-1})\bar{A}$ = $(\mathbb{E}[\phi_t\phi_t^T]-\gamma\mathbb{E}[\phi_t'\phi_t^T])\bar{C}^{-1}\bar{A} = \mathbb{E}[(\phi_t - \gamma\phi_t')\phi_t^T]\bar{C}^{-1}\bar{A} = -\bar{A}^T\bar{C}^{-1}\bar{A}$. Since, $\bar{A}$ is negative definite and $\bar{C}$ is positive definite, therefore $ -\bar{A}^T\bar{C}^{-1}\bar{A}$ is negative definite and hence the above ODE has origin as its unique g.a.s.e. This ensures the final condition $\bf{(B7)}$. By Theorem \ref{theorem_3TS}, 
\[
\begin{pmatrix}
    v_t\\
    u_t\\
    \theta_t
\end{pmatrix}
\rightarrow
\begin{pmatrix}
    \lambda(\Gamma(-\bar{A}^{-1}\bar{b}),-\bar{A}^{-1}\bar{b})\\
    \Gamma(-\bar{A}^{-1}\bar{b})\\
    -\bar{A}^{-1}\bar{b}.
\end{pmatrix}
=
\begin{pmatrix}
    0\\
    0\\
    -\bar{A}^{-1}\bar{b}.
\end{pmatrix}
\]
Specifically, $\theta_t \rightarrow -\bar{A}^{-1}\bar{b}$.
\end{proof}

\section* {A4 \quad Experiment Details}
Here we briefly describe the MDP settings considered in section \ref{sec_exp}. 
\begin{enumerate}
    \item \textbf{Example-1 (Boyan Chain):\label{Ex1}} It consists of a linear arrangement of 14 states. From each of the first 13 states, one can move to the next state or the next to next state with equal probability. The last state is an absorbing state. The reward at each transition is -3 except the transition from state-6 to state-7 where it is -2. The discount factor $\gamma$ is set to $0.95$. The following figure shows the corresponding MDP for 7 state Boyan Chain.
    \begin{figure}[H]
    \centering
    {
\includegraphics[width=\linewidth]{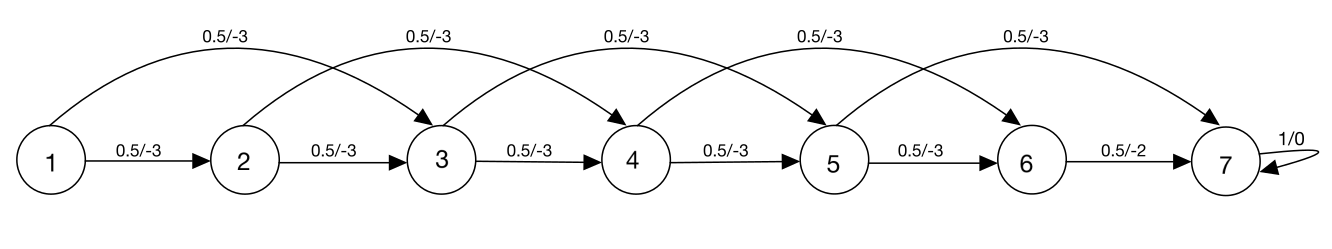}
    }
    \label{app_fig_1}
    \caption{7 state Boyan Chain from \cite{Boyan}}
\end{figure}
\item \textbf{Example-2 (5-State Random Walk):} It consists of a linear arrangement of 5 states with two terminal states. There is a single action at each state. From each state one either moves left or right with equal probability. Moving left from state 1 results in episode termination yielding a reward of 0. Similarly, moving right from state 5 also results in episode termination, however, yielding a reward of +1. The reward associated with all other transitions is 0 and the discount factor $\gamma = 1$. The following figure shows the corresponding MDP.
    \begin{figure}[H]
    \centering
    {
        {\includegraphics[width=\linewidth]{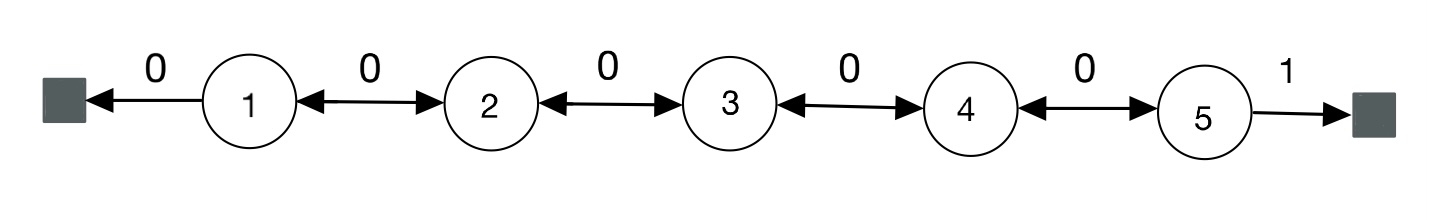}}
    }
    \label{app_fig_2}
    \caption{5-State Random Walk from \cite{FastGradient}}
\end{figure}
\item \textbf{Example-3 (19-State Random Walk):} It consists of a linear arrangement of 19 states. From each state one either moves left or right with equal probability. Moving left from state 1 results in episode termination yielding a reward of -1. Similarly, moving right from state 19 also results in episode termination, however, yielding a reward of +1. The reward associated with all other transitions is 0 and the discount factor $\gamma = 1$. The following figure shows the corresponding MDP:
\begin{figure}[H]
    \centering
    {
        {\includegraphics[width=\linewidth]{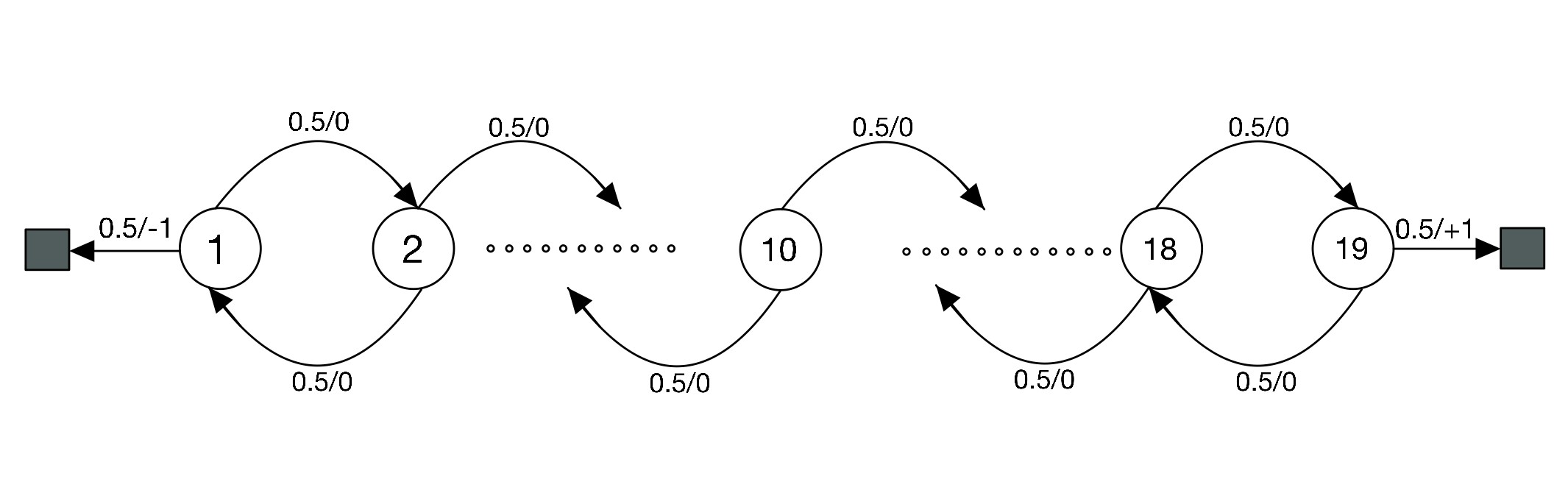}}
    }
    \caption{19 State Random Walk from \cite{SuttonBarto_book}}
    \label{app_fig_3}
\end{figure}
\item \textbf{Example-4 (Random MDP):} This is a randomly generated discrete MDP with 20 states and 5 actions in each state. The transition probabilities are uniformly generated from $[0,1]$ with a small additive constant. The rewards are also uniformly generated from $[0,1]$. The policy and the start state distribution are also generated in a similar way and the discount factor $\gamma = 0.95$. See \cite{Dann} for a more detailed description.
\end{enumerate}
\end{document}